\newcommand{\argmax}{\mathop{\rm argmax}}
\newcommand{\argmin}{\mathop{\rm argmin}}
\renewcommand{\mid}{\,:\,}
\newcommand{\cE}{\mathcal{E}}
\newcommand{ \sm   }{ \scalebox{0.5} }
\newcommand{ \chiEofS}{\bm{\chi}_{ \scalebox{0.5} {$E(S)$} }}
\newcommand{ \chiEofSstar}{\bm{\chi}_{ \scalebox{0.5} {$E(S^*)$} }}
\newcommand{ \chiEofShat}{\bm{\chi}_{ \scalebox{0.5} {$E(\widehat{S}_t)$} }}
\newcommand{ \chiEofSout}{\bm{\chi}_{ \scalebox{0.5} {$E(S_{\tt OUT})$} }}
\newcommand{\Atin}{A_{{\bf x}_t}^{-1}}
\newcommand{\At}{A_{{\bf x}_t}}
\newcommand{\Atau}{A_{{\bf x}_{\tau}}}
\newcommand{\He}{H_{\epsilon} }
 \newtheorem{theorem}     {Theorem}
 \newtheorem{lemma}       {Lemma}
 \newtheorem{proposition} {Proposition}
 \newtheorem{problem}     {Problem}
\icmltitlerunning{Online Dense Subgraph Discovery via Blurred-Graph Feedback}
\begin{document}

\twocolumn[
\icmltitle{
Online Dense Subgraph Discovery via Blurred-Graph Feedback}

\icmlsetsymbol{equal}{*}

\begin{icmlauthorlist}
\icmlauthor{Yuko Kuroki}{to,goo}
\icmlauthor{Atsushi Miyauchi}{to,goo}
\icmlauthor{Junya Honda}{to,goo}
\icmlauthor{Masashi Sugiyama}{goo,to}
\end{icmlauthorlist}

\icmlaffiliation{to}{The University of Tokyo, Japan}
\icmlaffiliation{goo}{RIKEN AIP, Japan}

\icmlcorrespondingauthor{Yuko Kuroki}{ykuroki@ms.k.u-tokyo.ac.jp}

\icmlkeywords{Densest subgraph problem, combinatorial multi-armed bandit, stochastic optimization, best arm identification}

\vskip 0.3in
]



\printAffiliationsAndNotice{}  

 \begin{abstract}
\emph{Dense subgraph discovery} aims to find a dense component in edge-weighted graphs. This is a fundamental graph-mining task with a variety of applications and thus has received much attention recently. Although most existing methods assume that each individual edge weight is easily obtained, such an assumption is not necessarily valid in practice. In this paper, we introduce a novel learning problem for dense subgraph discovery in which a learner queries edge subsets rather than only single edges and observes a noisy sum of edge weights in a queried subset. For this problem, we first propose a polynomial-time algorithm that obtains a nearly-optimal solution with high probability. Moreover, to deal with large-sized graphs, we design a more scalable algorithm with a theoretical guarantee. Computational experiments using real-world graphs demonstrate the effectiveness of our algorithms.
\end{abstract}


\section{Introduction}\label{sec:introduction}

{\em Dense subgraph discovery} aims to find a dense component in edge-weighted graphs. 
This is a fundamental graph-mining task with a variety of applications and thus has received much attention recently. 
Applications include detection of communities or span link farms in Web graphs~\cite{Dourisboure+07,Gibson+05}, 
molecular complexes extraction in protein--protein interaction networks~\cite{Bader_Hogue_03},
extracting experts in crowdsoucing systems~\cite{Kawase+19},
and real-time story identification in micro-blogging streams~\cite{Angel+12}. 

Among a lot of optimization problems arising in dense subgraph discovery, the most popular one would be the {\em densest subgraph problem}.
In this problem, given an edge-weighted undirected graph, we are asked to find a subset of vertices that maximizes the so-called \emph{degree density} (or simply \emph{density}), which is defined as half the average degree of the subgraph induced by the subset. 
Unlike most optimization problems for dense subgraph discovery, the densest subgraph problem can be solved exactly in polynomial time using some exact algorithms, e.g., Charikar's linear-programming-based (LP-based) algorithm~\cite{Charikar2000} and Goldberg's flow-based algorithm~\cite{Goldberg84}. 
Moreover, there is a simple greedy algorithm called the {\em greedy peeling}, which obtains a well-approximate solution in almost linear time~\cite{Charikar2000}.
Owing to the solvability and the usefulness of solutions, 
the densest subgraph problem has actively been studied in data mining, machine learning, and optimization communities~\cite{Ghaffari2019,Gionis_Tsourakakis_15,Miller2010,Papailiopoulos+14}. We thoroughly review the literature
in Appendix~\ref{sec:related}.

Although the densest subgraph problem requires a full input of the graph data, in many real-world applications, the edge weights need to be estimated from {\em uncertain} measurements.
For example, consider protein--protein interaction networks,
where vertices correspond to proteins in a cell and edges (resp. edge weights) represent the interactions (resp. the strength of interactions) among the proteins. 
In the generation process of such networks, the edge weights are estimated through biological experiments using measuring instruments with some noises~\cite{Nepusz+12}. 
As another example, consider social networks, where vertices correspond to users of some social networking service 
and edge weights represent the strength of communications (e.g., the number of messages exchanged) among them. 
In practice, we often need to estimate the edge weights by observing anonymized communications between users~\cite{Adar_Re_07}.

Recently, in order to handle the uncertainty of edge weights,
\citet{Miyauchi_Takeda_18} introduced a robust optimization variant of the densest subgraph problem.
In their method,
all edges are 
repeatedly
queried by a sampling oracle that returns an individual edge weight.
However, such a sampling procedure for individual edges is often quite costly or sometimes impossible.
On the other hand, it is often affordable to observe aggregated information of a subset of edges.
For example, in the case of protein--protein interaction networks, 
it may be costly to conduct experiments for all possible pairs of proteins, but it is cost-effective to observe molecular interaction among a molecular group ~\cite{Bader_Hogue_03}.
In the case of social networks, due to some privacy concerns and data usage agreements, 
it may be impossible even for data owners to obtain the estimated number of messages exchanged by two specific users, while it may be easy to access the information within some large group of users,
because this procedure reveals much less information of individual users~\cite{agrawal2000privacy,zheleva2011privacy}.

In this study, we introduce a novel learning problem for dense subgraph discovery, which we call {\em densest subgraph bandits} ({\em DS bandits}),
by incorporating the concepts of {\em stochastic combinatorial bandits}~\cite{chenwei13, Chen2014} into the densest subgraph problem.
In DS bandits, a learner is given an undirected graph, whose edge-weights are associated with unknown probability distributions.
During the exploration period,
the learner chooses a subset of edges (rather than only single edge) to sample,
and observes the sum of noisy edge weights in a queried subset; we refer to this  feedback model as {\em blurred-graph feedback}.
We investigate DS bandits with the objective of {\em best arm identification}, that is, the learner must report one subgraph that she believes to be optimal after the exploration period.

Our learning problem can be seen as a novel variant of \emph{combinatorial pure exploration} (CPE) problems~\cite{Chen2014,Chen2016matroid, Chen17a}.
In the literature, most existing work on CPE has considered the case where the learner obtains feedback from each arm in a pulled subset of arms, i.e., the {\em semi-bandit} setting, or each individual arm can be queried~(e.g.~\cite{Chen2014,Chen17a, bubeck2013, Gabillon2012, huang2018}).
Thus, the above studies cannot deal with the aggregated reward from a subset of arms.
On the other hand,
existing work on the {\em full-bandit} setting has assumed that the objective function is linear and the size of subsets to query is exactly $k$ at any round~\cite{Idan2019,Kuroki+19}, while our reward function (i.e., the degree density)
is not linear and the size of subsets to query is not fixed in advance. 
If we fix the size of subsets to query to $k$ in DS bandits,
the corresponding offline problem (called the densest $k$-subgraph problem) becomes  
NP-hard and the best known approximation ratio is just $\Omega(1/n^{{1/4}+\epsilon})$ for any $\epsilon>0$~\cite{Bhaskara+10}, where $n$ is the number of vertices.

The contribution of this work is three-fold and can be summarized as follows.

1)
We address a problem for dense subgraph discovery with no access to a sampling oracle for single edges (Problem~\ref{prob:graph}) in the  {\em fixed confidence setting}.
For this problem, we present a general learning algorithm \textsf{DS-Lin} (Algorithm~\ref{alg:main}) based on the technique of {\em linear bandits}~\cite{Auer03}.
We provide an upper bound of the number of samples that \textsf{DS-Lin} requires to identify an $\epsilon$-optimal solution with probability at least $1-\delta$ for $\epsilon>0$ and $\delta \in (0,1)$ (Theorem~\ref{thm:samplecomplexity}).
Our key idea is to utilize an approximation algorithm (Algorithm~\ref{alg:sdp}) to compute the maximal confidence bound,
thereby guaranteeing that the output by \textsf{DS-Lin} is an $\epsilon$-optimal solution and the running time is polynomial in the size of a given graph.


2)
To deal with large-sized graphs,
we further investigate another problem with access to sampling oracle for any subset of edges (Problem~\ref{prob:graph_fixbudget}) with a given fixed budget $T$.
For this problem,
we design a scalable and parameter-free algorithm \textsf{DS-SR} (Algorithm~\ref{alg:peeling_approxx2}) that runs in $O(n^2 T)$, while \textsf{DS-Lin} needs $O(m^2)$ time for updating the estimate, where $m$ is the number of edges.
Our key idea is to combine the {\em successive reject} strategy~\cite{Audibert2010} for the multi-armed bandits and the {\em greedy peeling} algorithm~\cite{Charikar2000} for the densest subgraph problem.
We prove an upper bound on the probability that
\textsf{DS-SR} outputs a solution whose degree density is less than $\frac{1}{2}{\rm OPT}-\epsilon$, where ${\rm OPT}$ is the optimal value (Theorem~\ref{thm:fixbudget}).
 

3)
In a series of experimental assessments, we thoroughly evaluate the performance of our proposed algorithms using well-known real-world graphs. 
We confirm that 
\textsf{DS-Lin} obtains a nearly-optimal solution even if the minimum size of queryable subsets is larger than the size of an optimal subset,
which is consistent with the theoretical analysis.
Moreover, we demonstrate that \textsf{DS-SR} finds nearly-optimal solutions even for large-sized instances,
while significantly reducing the number of samples for single edges required by
a state-of-the-art algorithm.

\section{Problem Statement}\label{sec:preliminaries}
In this section, we describe the densest subgraph problem and the online densest subgraph problem in the bandit setting formally.
\subsection{Densest subgraph problem}
The densest subgraph problem is defined as follows.
Let $G=(V,E,w)$ be an undirected graph, consisting of $n=|V|$ vertices and $m=|E|$ edges, with an edge weight $w:E\rightarrow \mathbb{R}_{>0}$, where $\mathbb{R}_{>0}$ is the set of positive reals. 
For a subset of vertices $S\subseteq V$, let $G[S]$ denote the subgraph induced by $S$, 
i.e., $G[S]=(S,E(S))$ where $E(S)=\{\{u,v\}\in E\mid u,v\in S\}$. 
The \emph{degree density} (or simply called the \emph{density}) of $S\subseteq V$ is defined as $f_w(S)=w(S)/|S|$, 
where $w(S)$ is the sum of edge weights of $G[S]$, 
i.e., $w(S)=\sum_{e\in E(S)}w(e)$. 
In the densest subgraph problem, given an edge-weighted undirected graph $G=(V,E,w)$, 
we are asked to find $S\subseteq V$ that maximizes the density $f_w(S)$.
There is an LP-based exact algorithm \cite{Charikar2000}, which is used in our proposed algorithm (see Appendix~\ref{sec:exact} for the entire procedure).

\subsection{Densest subgraph bandits (DS bandits)}
Here we formally define DS bandits.
Suppose that we are given an (unweighted) undirected graph $G=(V,E)$.
Assume that each edge $e \in E$ is associated with an unknown distribution $\phi_e$ over reals.
$w: E \rightarrow \mathbb{R}_{>0}$ is the expected edge weights, where $w(e)= \mathbb{E}_{X \sim \phi_e}[X]$.
Following the standard assumptions of stochastic multi-armed bandits, we assume that all edge-weight distributions have $R$-sub-Gaussian tails for some constant $R>0$.
Formally,
if $X$ is a random variable drawn from $\phi_e$ for $e \in E$,
then for all $r \in \mathbb{R} $, $X$ satisfies $\mathbb{E}[{\exp (rX -r\mathbb{E}[X]) }] \leq \exp(R^2r^2/2)$.
We define the optimal solution as $S^*=\argmax_{S \subseteq V} f_w(S)$.


We first address the setting in which the learner can stop the game at any round if she can return an $\epsilon$-optimal solution for $\epsilon >0$ with high probability. 
Let $k >2$ be the minimal size of queryable subsets of vertices; 
notice that the learner has no access to a sampling oracle for single edges.
The problem is formally defined below.

\begin{problem}[DS bandits with no access to single edges]\label{prob:graph}
We are given an undirected graph $G=(V,E)$ and a family of queryable subsets of at least $k\ (>2)$ vertices $\mathcal{S}\subseteq 2^{V}$.
Let $\epsilon >0$ be a required accuracy and $\delta \in (0,1)$ be a confidence level.
Then, the goal is to find $S_{\tt OUT} \subseteq V$ that satisfies $\Pr [f_w(S^*)-f_w(S_{\tt OUT}) \leq \epsilon] \geq 1-\delta$,
while minimizing the number of samples required by an algorithm (a.k.a.~the sample complexity).  
\end{problem}

We next consider the setting in which the number of rounds in the exploration
phase is fixed and is known to the learner,
and the objective is to maximize the quality of the output solution.
In this setting, we relax the condition of queryable subsets; 
assume that the learner is allowed to query any subset of edges.
The problem is defined as follows.

\begin{problem}[DS bandits with a fixed budget]\label{prob:graph_fixbudget}
We are given an undirected graph $G=(V,E)$ and a fixed budget $T$. 
The goal is to find $S_{\tt OUT}\subseteq V$ that maximizes $f_w(S_{\tt OUT})$ within $T$ rounds.
\end{problem}

Note that Problem~\ref{prob:graph} is often called the \emph{fixed confidence setting} and Problem~\ref{prob:graph_fixbudget} is called the \emph{fixed budget setting} in the bandit literature.

 \section{Algorithm for Problem~\ref{prob:graph}}\label{sec:fixed_confidence}

In this section, we first present an algorithm for Problem~\ref{prob:graph} based on linear bandits, which we refer to as \textsf{DS-Lin}.
We then show that \textsf{DS-Lin} is {\em $(\epsilon,\delta)$-PAC}, that is, the output of the algorithm satisfies $\Pr [f_w(S^*)-f_w(S_{\tt OUT})  \leq \epsilon] \geq 1-\delta$.
Finally, we provide an upper bound of the number of samples (i.e.,~the sample complexity).

\subsection{\textsf{DS-Lin} algorithm}
We first explain how to obtain the estimate of edge weights and confidence bounds. Then we discuss how to ensure a stopping condition and describe the entire procedure of \textsf{DS-Lin}. 

\paragraph{Least-squares estimator.}
We construct an estimate of edge weight $w$ using a sequential noisy observation. 
For $S\subseteq V$,
let $\chiEofS \in \{0,1\}^E$ be the indicator vector of $E(S) \subseteq  E$,
i.e., for each $e \in E$, $\chiEofS(e)=1$ if $e \in E(S)$ and $\chiEofS(e)=0$ otherwise.
Therefore, 
each subset of edges $E(S)$ for $S\subseteq V$ corresponds to an arm whose feature is an indicator vector of it in linear bandits. 
For any $t > m$,
we define a sequence of indicator vectors as 
${\bf x}_t=(\bm{\chi}_{\sm {$E(S_1)$}},\ldots,\bm{\chi}_{\sm {$E(S_t)$}}) \in \{0,1\}^{E \times t}$ and also define the corresponding sequence of observed rewards as $(r_1(S_1), \ldots, r_t(S_t)) \in \mathbb{R}^{ t}$.
We define $\At$ as 
\[
A_{{\bf x}_t}=\sum_{i=1}^t \bm{\chi}_{\sm {$E(S_i)$}} \bm{\chi}_{\sm {$E(S_i)$}}^{\top} + \lambda I   \in \mathbb{R}^{E \times E}
\]
for a regularized term $\lambda >0$, where $I$ is the identity matrix.
Let $b_{{\bf x}_t}=\sum_{i=1}^t \bm{\chi}_{\sm {$E(S_i)$}}r_i(S_i) \in \mathbb{R}^E.$
Then, the {\em regularized least-squares estimator} for $w \in \mathbb{R}^E$
can be obtained by
\begin{align}\label{eq:estimate}
\widehat{w}_t
=A_{{\bf x}_t }^{-1}b_{{\bf x}_t}\in \mathbb{R}^{E}.
\end{align}

\paragraph{Confidence bounds.}
The basic idea to deal with uncertainty is that we maintain {\em confidence bounds} that contain the parameter $w \in \mathbb{R}^E$ with high probability.
For a vector $x \in \mathbb{R}^m $ and a matrix $B\in \mathbb{R}^{m\times m}$, let $\|x\|_B=\sqrt{x^\top Bx}$.
Let $ N(v) =\{ u \in V \mid \{ u, v\} \in E\}$ be the set of neighbors of $v\in V$ and $ \text{deg}_{\max} =\max_{v \in V}|N(v)|$ be the maximum degree of vertices.
In the literature of linear bandits, \citet{yadkori11} proposed a high probability bound on confidence ellipsoids with a center at the estimate of unknown expected rewards.
Plugging it into our setting, 
we have the following proposition on the ellipsoid confidence bounds for the estimate $\widehat{w}_t=A_{{\bf x}_t }^{-1}b_{{\bf x}_t}$, where ${\bf x}_t$ is fixed beforehand: 
\begin{proposition}[Adapted from \citet{yadkori11}, Theorem~2]\label{propo:yadokori}
Let $\eta_t$ be an R-sub-Gaussian noise for $R>0$ and $R'=\sqrt{ \mathrm{deg}_{\max}}R$.
Let $\delta \in (0,1)$ and assume that the $\ell_2$-norm of edge weight $w$ is less than $L$.
Then, for any fixed sequence ${\bf x}_t$,
with probability at least $1-\delta$,
the inequality
\begin{align}\label{ineq:ellipsoid}
\left| w(S)-\widehat{w}_t(S) \right| \leq C_t \| \chiEofS  \|_{\Atin}
\end{align}
holds for all $t \in \{ 1,2,\ldots\}$
and all $S \subseteq V$,
where
\begin{align}\label{confideincebound}
    C_t = R'\sqrt{2 \log 
\frac{\det(\At)^{\frac{1}{2}}}
{\lambda^{\frac{m}{2}} \delta}} +\lambda^{\frac{1}{2}} L.
\end{align}
\end{proposition}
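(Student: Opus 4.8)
The plan is to reduce the statement to the self-normalized confidence bound of \citet{yadkori11} (their Theorem~2), observing that our estimator $\widehat{w}_t = \Atin b_{{\bf x}_t}$ is \emph{exactly} a regularized least-squares estimator for a linear model of dimension $m$ in which the ``arm'' played at round $i$ is the indicator vector $\chiEofS$ (with $S=S_i$) and the unknown parameter is the edge-weight vector $w\in\mathbb{R}^E$. First I would write the observed reward as $r_i(S_i)=\langle\chiEofS,\,w\rangle+\eta_i$, where $\langle\chiEofS,\,w\rangle=w(S_i)$ and $\eta_i$ is the aggregate observation noise of the $i$-th query. This identification lets us invoke the linear-bandit machinery without modification, provided the noise hypothesis matches.

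The key adaptation, and the step I expect to be the main obstacle, is to pin down the correct sub-Gaussian parameter of $\eta_i$. Because the learner observes a \emph{sum} of noisy edge weights, $\eta_i$ is itself a sum of the per-edge noises, each of which is $R$-sub-Gaussian; since the sub-Gaussian parameter of a sum of independent terms adds in squares, $\eta_i$ is conditionally (given the history) sub-Gaussian with a parameter governed by how many edge noises are aggregated into one observation. Controlling this count in terms of the degree structure of the queried subgraph is precisely what forces the effective noise level up to $R'=\sqrt{\mathrm{deg}_{\max}}\,R$ rather than $R$; making this combinatorial bound precise is the delicate part of the argument, and everything downstream only uses the resulting fact that the noise sequence satisfies the martingale-difference sub-Gaussian hypothesis with constant $R'$.

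Granting this, I would apply Theorem~2 of \citet{yadkori11} verbatim, with dimension $m$, regularizer $\lambda$, noise constant $R'$, and $\|w\|_2\le L$. This yields, with probability at least $1-\delta$ and simultaneously for all $t$, the ellipsoidal bound $\|\widehat{w}_t-w\|_{\At}\le C_t$, where the determinant term $\det(\lambda I)^{1/2}=\lambda^{m/2}$ reproduces exactly the constant $C_t$ in the statement; note that this step already delivers the uniform-in-$t$ guarantee for free.

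Finally I would pass from the ellipsoid to the stated per-subset inequality by Cauchy--Schwarz in the $\At$-geometry: on the same high-probability event, for every $S\subseteq V$,
\[
\left|w(S)-\widehat{w}_t(S)\right|=\left|\langle\chiEofS,\,w-\widehat{w}_t\rangle\right|\le\|\chiEofS\|_{\Atin}\,\|w-\widehat{w}_t\|_{\At}\le C_t\,\|\chiEofS\|_{\Atin}.
\]
Because the ellipsoidal bound controls $\|w-\widehat{w}_t\|_{\At}$ uniformly, this inequality holds for all $S$ on a single event, giving the claimed simultaneous bound over all $t$ and all $S$.
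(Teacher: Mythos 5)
Your proposal follows essentially the same route as the paper, which offers no standalone proof of this proposition: it treats each query as a linear measurement with feature vector the indicator of $E(S_i)$, invokes Theorem~2 of \citet{yadkori11} with dimension $m$, regularizer $\lambda$, noise constant $R'$, and $\|w\|_2\le L$ (so that $\det(\lambda I)^{1/2}=\lambda^{m/2}$), and then converts the confidence ellipsoid into per-subset bounds by Cauchy--Schwarz in the $\At$-geometry, exactly as you do. The one step you flag as delicate---justifying $R'=\sqrt{\mathrm{deg}_{\max}}\,R$ for the aggregated noise---is likewise asserted rather than derived in the paper, so your reconstruction matches its level of rigor; be aware, though, that the literal sum-of-per-edge-noises computation yields a $\sqrt{|E(S_i)|}\,R$ parameter, and $|E(S_i)|$ for an induced-subgraph query is not in general bounded by $\mathrm{deg}_{\max}$ (only by $|S_i|\,\mathrm{deg}_{\max}/2$), so completing that step as you describe would require more than the stated degree bound.
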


The above bound can be used to guarantee the accuracy of the estimate.

\paragraph{Computing the maximal confidence bound.}
To identify a solution with an optimality guarantee,
the learner ensures whether the estimate is valid by computing the maximal confidence bound among all subsets of vertices. 
We consider the following stopping condition: 
\begin{alignat*}{4}
& f_{\widehat{w}_t}(\widehat{S}_t)- \frac{C_t \|\chiEofShat \|_{\Atin}}{|\widehat{S}_t|}  \\
& \geq \max_{  S \subseteq  V \mid S \neq \widehat{S}_t }f_{\widehat{w}_t}(S)+ \frac{C_t \max_{S \subseteq V} \|\chiEofS \|_{\Atin} }{|S|} -\epsilon.
\end{alignat*}
The above stopping condition guarantees that the output satisfies $f_w(S^*)-f_w(S_{\tt OUT})  \leq \epsilon$ with probability at least $1-\delta$.
However, computing $\max_{S \subseteq V} \|\chiEofS \|_{\Atin}$ by brute force is intractable since it involves an exponential blow-up in the number of $S \subseteq V$.
To overcome this computational challenge,
we address a relaxed quadratic program:
\begin{alignat}{4}\label{prob:CBM} 
& &\ \text{P1:} \ &\text{max.} & \  & \| x \|_{\Atin} 
\   \       \text{s.t.}  \ -e \leq x \leq e,
\end{alignat}
where $e \in \mathbb{R}^m$ is the vector of all ones.

There is an efficient way to solve $\text{P1}$ using the SDP-based algorithm~by \citet{Ye97approximatingquadratic} for the following quadratic program with bound constraints:
\begin{alignat}{4}\label{prob:qp1} 
& &\ \text{QP:} \ &\text{max.} & \  &  \sum_{1 \leq i , j \leq m}q_{ij}x_ix_j 
\   \       \text{s.t.}  \ -e \leq x \leq e,
\end{alignat}
where $Q=(q_{ij}) \in \mathbb{R}^{m \times m}$ is a given symmetric matrix.
\citet{Ye97approximatingquadratic} modified the algorithm by \citet{GoemansWilliamson1995} and generalized the proof technique of \citet{Nestrov1998}, and then established the constant-factor approximation result for QP.
\begin{proposition}[\citet{Ye97approximatingquadratic}]
There exists a polynomial-time $\frac{4}{7}$-approximation algorithm for QP.
\end{proposition}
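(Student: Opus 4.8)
The statement is the classical semidefinite-rounding theorem of Ye, and the plan is to establish it in three moves: relax QP to a semidefinite program, round its optimum with a random hyperplane, and control the rounding loss by a trigonometric inequality. First I would form the relaxation. Introducing a matrix variable $X=(X_{ij})$ meant to represent $x_ix_j$, QP relaxes to
\begin{align*}
\mathrm{SDP}:\quad \max\ & Q \bullet X \quad \text{s.t.}\quad X_{ii} \le 1\ (1\le i\le m),\ X \succeq 0,
\end{align*}
where $Q\bullet X = \sum_{i,j}q_{ij}X_{ij}$. Any feasible $x$ for QP gives $X=xx^\top$ feasible for SDP with the same objective, so $\mathrm{SDP}\ge \mathrm{OPT}_{\mathrm{QP}}$; moreover $X\succeq 0$ together with $X_{ii}\le 1$ forces $|X_{ij}|\le 1$, so the relaxation is bounded and can be solved to arbitrary accuracy in polynomial time by interior-point methods.

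Second, I would round. Writing an optimal $X^\ast = V^\top V$ with columns $v_1,\dots,v_m$ (so $\|v_i\|\le 1$), I draw a standard Gaussian $r\in\mathbb{R}^m$ and set $x_i=\mathrm{sign}(v_i^\top r)\in\{-1,1\}$, which is feasible for the box constraint of QP. The Goemans--Williamson identity then gives $\mathbb{E}[x_ix_j]=\tfrac{2}{\pi}\arcsin(\hat v_i^\top \hat v_j)$ for the normalized vectors $\hat v_i=v_i/\|v_i\|$, whence $\mathbb{E}[x^\top Qx]=\tfrac{2}{\pi}\sum_{i,j}q_{ij}\arcsin(\tilde X_{ij})$, where $\tilde X$ is the correlation matrix $\tilde X_{ij}=X^\ast_{ij}/(\|v_i\|\|v_j\|)$. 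Since the final algorithm only needs a deterministic guarantee, I would repeat the sampling (or derandomize) to produce a solution attaining at least the expected value.

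Third, and this is the crux, I would lower-bound $\tfrac{2}{\pi}\sum_{i,j}q_{ij}\arcsin(\tilde X_{ij})$ by a constant multiple of $\mathrm{SDP}=\sum_{i,j}q_{ij}X^\ast_{ij}$. When $Q\succeq 0$ this is immediate from Nesterov's observation that, on a correlation matrix, the entrywise map $\arcsin$ dominates the identity in the positive-semidefinite order (a Schur-product argument using the nonnegative power-series coefficients of $\arcsin$), yielding the factor $\tfrac{2}{\pi}$. The difficulty is that $Q$ is only symmetric, hence possibly indefinite, so this termwise comparison fails, and two further complications must be handled: the normalization $\|v_i\|<1$ (arising because the box relaxes $x_i^2=1$ to $X_{ii}\le 1$) and the sign of the diagonal contribution. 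The plan is to split $Q$ into a part governed by the PSD $\arcsin$-inequality and a residual absorbed by a suitable shift of the diagonal, then optimize the split; the worst case of this optimization is exactly what degrades the guarantee from $\tfrac{2}{\pi}\approx0.637$ to $\tfrac{4}{7}\approx0.571$.

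I expect this last step---the global (non-termwise) trigonometric/PSD bound accommodating indefinite $Q$ together with the fractional diagonal of the box relaxation---to be the main obstacle; the relaxation and the hyperplane rounding are otherwise standard. As a sanity check relevant to our use of the proposition, note that in the instance $\mathrm{P1}$ the matrix $Q=\At^{-1}$ is positive definite, so there the analysis in fact specializes to the stronger factor $\tfrac{2}{\pi}$, while the general statement we invoke settles for $\tfrac{4}{7}$.
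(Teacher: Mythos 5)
You are comparing against a proof the paper never gives: Proposition~2 is imported from Ye (1997) as a black box, the paper's only justification being the one-line remark that Ye modified the Goemans--Williamson rounding and generalized Nesterov's analysis. Your three-step framework (SDP relaxation with $X_{ii}\le 1$, random-hyperplane rounding, $\arcsin$/Schur-product analysis) is exactly that framework, and your closing sanity check is correct and pertinent: in the paper's instance P1 the matrix $Q=A_{\mathbf{x}_t}^{-1}$ is positive definite, so Nesterov's $2/\pi$ factor already suffices for everything the paper actually needs.

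The genuine gap is in your third step, which you yourself flag as the crux and leave as a ``plan.'' The issue is not merely that it is unfinished --- the constant $4/7$ is the entire content of the proposition --- but that the plan cannot be carried out in the form you state it, namely as an absolute guarantee $\mathbb{E}[\hat{x}^\top Q\hat{x}]\ge \tfrac{4}{7}\,\mathrm{OPT}_{\mathrm{QP}}$ for arbitrary symmetric $Q$. Two concrete obstructions: (i) pure sign rounding $\hat{x}_i=\mathrm{sign}(v_i^\top r)\in\{-1,1\}$ fails already for $Q=-I$, where $\mathrm{OPT}_{\mathrm{QP}}=0$ (attained at $x=0$) while every $\pm 1$ vector has value $-m$; the rounded point must be scaled so that $|\hat{x}_i|=\|v_i\|=\sqrt{X^*_{ii}}$, which is precisely the normalization issue you defer. (ii) Even with the correct scaling, the integrality gap of this SDP on indefinite zero-diagonal matrices is $\Theta(\log m)$ (Charikar--Wirth; Alon et al.), so no rounding analysis of this relaxation can certify any absolute constant factor, and constant-factor approximation in this regime is in fact intractable under standard assumptions (Arora et al.). What Ye proves --- and what the $4/7$ refers to --- is a \emph{relative} guarantee, $\mathbb{E}[q(\hat{x})]-q_{\min}\ge \tfrac{4}{7}\,(q_{\max}-q_{\min})$, where $q_{\max},q_{\min}$ are the extreme values of $x^\top Qx$ over the box; this coincides with the absolute statement only when $q_{\min}=0$, e.g.\ for $Q\succeq 0$, which is exactly the regime in which the paper applies the proposition (Algorithm~2 takes a positive semidefinite $Q$ as input). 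To repair your attempt, either restrict to PSD $Q$, in which case the Nesterov argument you correctly sketch already closes the proof (with the stronger factor $2/\pi$), or restate the target as the relative bound before attacking the indefinite case.
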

Note that Ye's algorithm~\cite{Ye97approximatingquadratic} is a randomized algorithm, but it can be derandomized using the technique devised by~\citet{mahajan1999}.
The learner can compute an upper bound of the maximal confidence bound $\max_{S \subseteq V} \| \chiEofS \|_{\Atin}$ by using an approximate solution to QP obtained by the derandomized version of Ye's algorithm, because it is obvious that the optimal value of QP is larger than $\max_{S \subseteq V} \| \chiEofS \|^2_{\Atin}$.
Therefore,
using Algorithm~\ref{alg:sdp}, we can ensure the following stopping condition in polynomial time:
\begin{alignat}{2}\label{stop}
&f_{\widehat{w}_t}(\widehat{S}_t)- \frac{C_t \| \chiEofShat \|_{\Atin}}{|\widehat{S}_t|} \notag \\
& \geq \max_{  S \subseteq  V \mid S \neq \widehat{S}_t }f_{\widehat{w}_t}(S)+ \frac{C_t Z_t}{2 \alpha } -\epsilon,
\end{alignat}
where $Z_t$ denotes the objective value of the approximate solution to $\text{P1}$ and $\alpha$ is a constant-factor approximation ratio of Algorithm~\ref{alg:sdp}.

\begin{algorithm}[t]
\caption{ Unconstrained 0--1 quadratic programming}\label{alg:sdp}
 	\SetKwInOut{Input}{Input}
	\SetKwInOut{Output}{Output}
	\Input{ A positive semidefinite matrix $Q \in \mathbb{R}^{m \times m} $ }
	\Output{ $x \in [-1,1]^m$}

    Solve the following quadratic programming problem by Ye's algorithm~\cite{Ye97approximatingquadratic} with derandomization \cite{mahajan1999}: 
\begin{alignat*}{4}
& &\ \text{QP:} \ &\text{max.} & \  &  \sum_{1 \leq i , j \leq m}q_{ij}x_ix_j
\   \       \text{s.t.}  \ -e \leq x \leq e,
\end{alignat*}
and obtain a solution $\bar{x} \in [-1,1]^m$; 

    \Return{ $\bar{x}$
    }
\end{algorithm}

\begin{algorithm}[t]
\caption{\textsf{DS-Lin}}
\label{alg:main}
	\SetKwInOut{Input}{Input}
	\SetKwInOut{Output}{Output}
	\Input{ Graph $G=(V,E)$,
	a family of queryable subsets of at least $k\ (>2)$ vertices  $\mathcal{S} \subseteq  2^V $,
	parameter $\epsilon>0$, parameter $\delta \in (0,1)$, and allocation strategy $p$}
	\Output{ $S \subseteq V$}
	
    \For{$t =1, \ldots, m$}{
    
    Choose $S_t \leftarrow \argmin_{S \in {\rm supp}(p)} \frac{T_t(S)}{p(S)}$;
    
    Call the sampling oracle for $S_t$;

    Observe $r_t(S_t)$;
    
    
    
    $b_{{\bf x}_t} \leftarrow b_{{\bf x}_{t-1}}+\bm{\chi}_{\sm {$E(S_t)$}} r_t(S_t)$;
    }
	\While{stopping condition~\eqref{stop} is not true}{
    $t \leftarrow t+1$;
    
  
    Choose $S_t \leftarrow \argmin_{S \in {\rm supp}(p)} \frac{T_t(S)}{p(S)}$;
    
    Call the sampling oracle for $S_t$ and observe $r_t(S_t)$;
    
    
    
    $A_{{\bf x}_t} \leftarrow A_{{\bf x}_{t-1}}+ \bm{\chi}_{\sm {$E(S_t)$}} \bm{\chi}^{\top}_{\sm {$E(S_t)$}}$;
    
    $b_{{\bf x}_t} \leftarrow b_{{\bf x}_{t-1}}+\bm{\chi}_{\sm {$E(S_t)$}} r_{S_t}$;
    
    $\widehat{w}_t \leftarrow A_{\bf{x}_t}^{-1} b_t$;
    
    \textbf{If} $\widehat{w}_t(e)<0$ \textbf{then} $\widehat{w}_t(e)=0$ for each $e \in E$;
    
    $x \leftarrow$ Algorithm~\ref{alg:sdp} for $\Atin$;
    
    $Z_t \leftarrow C_t \sqrt{\sum_{1 \leq i , j \leq m}\Atin(i,j)x_ix_j}$;
    
    $\widehat{S}_t \leftarrow $ Output of the LP-based exact algorithm~\cite{Charikar2000} for $G(V,E,\widehat{w}_t)$;
    
    }

    
    \Return{$S_{\tt OUT} \leftarrow \widehat{S}_t$}
\end{algorithm}

\paragraph{Proposed algorithm.}
Let $T_t(S)$ be the number of times that $S \subseteq {\cal S}$ is queried before $t$-th round in the algorithm.
We present our algorithm \textsf{DS-Lin}, which is detailed in Algorithm~\ref{alg:main}.
Our sampling strategy is based on a given allocation strategy $p$ defined as follows.
Let ${\cal P}$ be a $|{\cal S} |$-dimensional probability simplex. We define $p$ as $p=(p(S))_{S \in {\cal S}} \in {\cal P}$, where $p(S)$ describes the predetermined proportions of queries to a subset $S$.
As a possible strategy $p$, one can use the well-designed strategy called $G$-allocation~\cite{pukelsheim2006,Soare2014}, or simply use uniform allocation (see Appendix~\ref{apx:allocation} for details).
At each round $t$,
the algorithm calls the sampling oracle for $S_t \in \mathcal{S}$ and  observes $r_t(S_t)$.
Then, the algorithm updates statistics $\At$ and $b_{{\bf x}_t}$, and also updates the estimate $\widehat{w}_t$.
To check the stopping condition, the algorithm approximately solves P1 by Algorithm~\ref{alg:sdp} and computes the empirical best solution $S_t$ using the LP-based exact algorithm for the densest subgraph problem for $G=(V,E,\widehat{w}_t)$. 
Once the stopping condition is satisfied, 
the algorithm returns the empirical best solution $S_t$ as output.

 \subsection{Sample complexity}\label{sec:analysis}
We prove that \textsf{DS-Lin} is $(\epsilon,\delta)$-PAC and analyze its sample complexity.
We define the design matrix for $p \in {\cal P}$ as  $\Lambda_p = \sum_{S \in {\cal S}} p(S) \chiEofS \chiEofS^{\top}$.
We define $\rho_{\Lambda_p}$  as $\rho_{\Lambda_p}= \max_{x \in [-1,1]^m} \|x \|^2_{{\Lambda_p}^{-1}}$.
Let $\Delta_{\min}$ be the minimal gap between the optimal value and the second optimal value, i.e., $\Delta_{\min}= \min_{S \subseteq V \mid S \neq S^*} f_w(S^*)- f_w(S)$.
The next theorem shows an upper bound of the number of queries required by Algorithm~\ref{alg:main} to output $S_{\tt OUT}\subseteq V$ that satisfies $\Pr[f_w(S^*)-f_w(S_{\tt OUT}) \leq \varepsilon] \geq 1-\delta$.

\begin{theorem}\label{thm:samplecomplexity}
Define $\He= \frac{ \rho_{\Lambda_p}+\epsilon }{ (\Delta_{\min} + \epsilon)^2}$.
Then, with probability at least $1- \delta$, 
\textsf{DS-Lin} (Algorithm~\ref{alg:main}) outputs $S\subseteq V$ whose density is at least $f_w(S^*)-\epsilon$ and the total number of samples $\tau$ is bounded as follows:

if $\lambda>4 m(\sqrt{m}+\sqrt{2})^2  \mathrm{deg}_{\max}R^2 \He$, then
\begin{align*}
\tau=O \left(\left(\mathrm{deg}^2_{\max} R^2 \log \frac{1}{\delta}+ \lambda L^2 \right)  \He \right),
\end{align*}
and
if $\lambda \leq  \frac{\mathrm{deg}_{\max}R^2}{L^2} \log \left(\frac{1}{\delta}\right)$,
then
\begin{align*}
\tau =  O \left( m \mathrm{deg}_{\max} R^2 \He \log \frac{1}{\delta}+C_{\He, \delta} \right) 
\end{align*}
where  $C_{\He, \delta}$ is 
\begin{align*}
O\left( m\mathrm{deg}_{\max}  R^2 \He  \log \left(\mathrm{deg}_{\max} R m \He \log \frac{1}{\delta} \right) \right).
\end{align*}
\end{theorem}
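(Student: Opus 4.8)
The plan is to establish correctness and the sample-complexity bound separately. For correctness, I would first invoke Proposition~\ref{propo:yadokori} to assert that, on a ``good event'' $\mathcal{E}$ of probability at least $1-\delta$, the confidence inequality~\eqref{ineq:ellipsoid} holds simultaneously for all $t$ and all $S\subseteq V$. Conditioned on $\mathcal{E}$, the key observation is that for any $S$ we have $|f_w(S)-f_{\widehat{w}_t}(S)|\le C_t\|\chiEofS\|_{\Atin}/|S|$, since dividing the edge-weight deviation by $|S|$ passes the density deviation. I would then argue that the stopping condition~\eqref{stop}, which uses the approximate quantity $Z_t/(2\alpha)$ as a valid upper bound on $C_t\max_{S}\|\chiEofS\|_{\Atin}$ (because the optimal value of QP dominates the squared confidence term), forces $f_w(S^*)-f_w(\widehat{S}_t)\le\epsilon$. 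This is a standard sandwiching: the left side lower-bounds $f_w(\widehat{S}_t)$ and the right side upper-bounds $f_w(S^*)-\epsilon$, so satisfying~\eqref{stop} yields $\epsilon$-optimality of $S_{\tt OUT}$ with probability at least $1-\delta$.

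For the sample-complexity bound, I would control when~\eqref{stop} is guaranteed to trigger. The heart is to bound the confidence radius term. Using the allocation strategy $p$ and the design matrix $\Lambda_p$, after $\tau$ rounds the matrix $\At$ satisfies $\At\succeq \tau_{\min}\Lambda_p+\lambda I$, where $\tau_{\min}=\min_S T_\tau(S)$ is roughly $\tau\cdot p(S)$; the $\argmin$ sampling rule ensures $T_t(S)/p(S)$ stays balanced, so $\tau_{\min}\gtrsim\tau$ up to constants. Consequently $\max_{x\in[-1,1]^m}\|x\|_{\Atin}^2\le \rho_{\Lambda_p}/\tau + O(m/\lambda)$ type estimates hold, letting me bound the approximate quantity $Z_t$. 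The stopping condition fires once this radius shrinks below a threshold governed by $\Delta_{\min}+\epsilon$; setting $C_\tau\cdot(\text{radius})\lesssim \Delta_{\min}+\epsilon$ and solving for $\tau$ produces the factor $\He=(\rho_{\Lambda_p}+\epsilon)/(\Delta_{\min}+\epsilon)^2$.

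The remaining work is to unpack $C_t$ from~\eqref{confideincebound}, which contains $\det(\At)^{1/2}/(\lambda^{m/2}\delta)$ under a logarithm. I would bound $\det(\At)\le(\lambda+\tau\,\mathrm{deg}_{\max}/m)^m$ via the trace-determinant (AM-GM on eigenvalues) inequality, since each $\chiEofS\chiEofS^\top$ contributes trace at most $|E(S)|\le m\,\mathrm{deg}_{\max}/2$ or a comparable degree-based bound, giving $\log\det(\At)=O(m\log(\lambda+\tau\,\mathrm{deg}_{\max}))$. Substituting $R'=\sqrt{\mathrm{deg}_{\max}}R$ yields $C_\tau^2=O(\mathrm{deg}_{\max}R^2\,m\log(\tau\,\mathrm{deg}_{\max}/\delta)+\lambda L^2)$. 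The two regimes in the theorem correspond to which term of $C_\tau^2$ dominates: when $\lambda$ is very large the $\lambda L^2$ term controls the bound and the first (clean) expression results after plugging into the $\tau\gtrsim C_\tau^2\cdot\He$ fixed-point inequality, whereas when $\lambda$ is small the logarithmic $m\,\mathrm{deg}_{\max}R^2$ term dominates, producing the second expression together with the lower-order correction $C_{\He,\delta}$ from resolving the implicit $\log\tau$ dependence.

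The main obstacle will be the self-referential nature of the stopping time: $\tau$ appears both as the number of samples and inside $C_\tau$ through $\log\det(\At)$, so I cannot directly read off a closed form. I would resolve this by first writing the sufficient condition $\tau\ge c\,C_\tau^2\,\He$ for~\eqref{stop} to hold, then using the elementary fact that $\tau\ge a\log(\tau)+b$ is implied by $\tau\ge 2a\log(2a)+2b$ to linearize the logarithm and obtain an explicit bound. Carefully tracking how the derandomized Ye approximation ratio $\alpha=4/7$ enters the threshold, and verifying the two stated parameter regimes for $\lambda$ cleanly separate which term dominates $C_\tau$, are the delicate bookkeeping steps; the rest is routine substitution into $\He$.
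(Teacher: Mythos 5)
Your proposal is correct and follows essentially the same route as the paper's proof: a good event from Proposition~\ref{propo:yadokori} plus a union bound gives correctness via the same stopping-condition sandwich (the paper's Lemmas~\ref{lemma:highprob} and~\ref{lemma:epsilonoptimal}, including the same use of the approximation ratio and $|S^*|\ge 2$ to validate the $Z_t/(2\alpha)$ term), and the sample complexity follows from the same sufficient condition $\tau \gtrsim C_\tau^2\,\He$ resolved in the same two $\lambda$-regimes. The only deviations are cosmetic: you bound $\det(\At)$ by a trace/AM--GM argument rather than citing \citet{yadkori11}, you invert the $\log \tau$ self-reference with the standard fact that $\tau\ge a\log\tau+b$ is implied by $\tau\ge 2a\log(2a)+2b$ instead of the paper's $\log(1+x)\le\sqrt{x}$ quadratic trick, and you leave implicit the step (handled in the paper via Lemma~\ref{lemma:miyauchi}) that lower-bounds the \emph{empirical} gap $f_{\widehat{w}_t}(S^*)-\max_{S\neq S^*}f_{\widehat{w}_t}(S)$ by $\Delta_{\min}$ minus confidence terms, a step your uniform density-deviation bound from the good event can supply directly.
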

The proof of Theorem~\ref{thm:fixbudget} is given in Appenfix~\ref{apx:proof_theorem1}.
Note that $\rho_{\Lambda_p}=d$ holds if we are allowed to query any subset of vertices and employ G-allocation strategy, i.e., $p=\argmin_{p \in \mathcal{P}} \max_{S \subseteq V}  \|\chiEofS \|^2_{{\Lambda_p}^{-1}}$, which was shown in ~\citet{kiefer_wolfowitz_1960}.
However, in practice, we should restrict the size of the support to reduce the computational cost; finding a family of subsets of vertices that minimizes $\rho_{\Lambda_p}$ may be also related to the optimal experimental design problem~\cite{pukelsheim2006}.


In the work of~\citet{Chen2014},
they proved that the lower bound on the sample complexity of general combinatorial pure exploration problems with linear rewards is $\Omega(\sum_{e \in [m]} \frac{1}{\Delta_{e}^2} \log \frac{1}{\delta})$, where $m$ is the number of base arms and $\Delta_e$ is defined as follows.
Let ${\cal M}$ be any decision class (such as size-$k$, paths, matchings, and matroids).
Let $M^*$ be an optimal subset, i.e., $M^*=\argmax_{M \in {\cal M}}\sum_{e \in M}w_e$.
For each base arm $e \in [m]$,
the gap $\Delta_e$ is defined as
 $ \Delta_e = \sum_{e \in M^*}w_e -\max_{M \in {\cal M} \mid e \in M} \sum_{e \in M}w_e \  ({\rm if} \ e \notin M^* )$, and 
 $ \Delta_e = \sum_{e \in M^*}w_e-\max_{M \in {\cal M} \mid e \notin M} \sum_{e \in M}w_e \   ({\rm if} \ e \in M^* )$.

In the work of ~\citet{huang2018},
they studied the combinatorial pure exploration problem with continuous and separable reward functions, and showed that the problem has a lower bound $\Omega({\bf H_{\Lambda}} +{ \bf H_{\Lambda} } m^{-1} \log (  \delta^{-1}))$, where ${ \bf H_{\Lambda} }= \sum_{i=1^m} \frac{1}{\Lambda^2_i}$.
In their definition of ${ \bf H_{\Lambda} }$,
the term $\Lambda_i$ is called {\em consistent optimality radius} and it measures how far the estimate can be away from true parameter while the optimal solution in terms of the estimate is still consistent with the true optimal one in the $i$-th dimension (see Definition~2 in~\cite{huang2018}).

Note that the problem settings in~\citet{Chen2014} and ~\citet{huang2018} are different from ours; in fact, in our setting the learner can query a subset of edges rather than a base arm and reward function is not linear.
Therefore, their lower bound results are not directly applicable to our problem.
However, we can see that our sample complexity in Theorem~\ref{thm:samplecomplexity} is comparable with their lower bounds because ours is $O(\He \log \delta^{-1}+ \He \log(\He \log \delta^{-1}))$ if we ignore the terms irrespective of $\He$ and $\delta$.

\section{Algorithm for Problem~\ref{prob:graph_fixbudget}}
In this section, we propose a scalable and parameter-free algorithm for Problem~\ref{prob:graph_fixbudget}
that runs in $O(n^2 T)$ time for a given budget $T$, and provide theoretical guarantees for the output of the algorithm.
\subsection{\textsf{DS-SR} algorithm}
The design of our algorithm is based on the Successive
Reject (SR) algorithm, which was designed for a regular multi-armed bandits in the fixed budget setting~\cite{Audibert2010} and is known to be the optimal strategy~\cite{capentier16}.
In classical SR algorithm, we divide the budget $T$ into $K-1$ ($K$ is the number of arms) phases.
During each phase, the algorithm uniformly samples an {\em active} arm that has not been dismissed yet.
At the end of each phase, the algorithm dismisses the arm with the lowest empirical mean.
After $K$ phases, the algorithm outputs the last surviving arm.

For DS bandits,
we employ a different strategy from the classical one because our aim is to find the best subset of vertices in a given graph.
Specifically, our algorithm \textsf{DS-SR} is inspired by
the graph algorithm called {\em greedy peeling}~\cite{Charikar2000}, which was designed for approximately solving the densest subgraph problem.
\textsf{DS-SR} removes one vertex in each phase, and after all phases are over, it selects the best subset of vertices according to the empirical observation. 

\paragraph{Notation.}
For $S \subseteq V$ and $ v \in S$, let $N_S(v) =\{u \in S \mid \{u,v\} \in E \}$ be the set of neighboring vertices of $v$ in $G[S]$ and let $E_S(v) = \{\{u, v\}\in E \mid u\in N_S(v) \}$ be the set of incident edges to $v$ in $G[S]$.
For $F \subseteq 2^E$  and for all phases $t \geq 1$,
we denote by $T_{F}(t)$ the number of times that $F$ was sampled over all rounds from 1 to $t$,
and denote by $X_{F}(1) , \ldots, X_{F}(T_{F}(t))$ the sequence of associated observed weights.
Introduce $\hat{X}_{F}(k)= \frac{1}{k} \sum_{s=1}^k X_{F}(s)$ as the empirical mean of weights of $F$ after $k$ samples.
 For simplicity, we denote $\widehat{ \mathrm{deg}}_{S,v}(t)= \hat{X}_{E_{S}(v)} \left( T_{E(S)}(t) \right)$.

\begin{algorithm}[t]
\caption{ \textsf{DS-SR}}
\label{alg:peeling_approxx2}
 	\SetKwInOut{Input}{Input}
 	\SetKwInOut{Output}{Output}
	\Input{ Budget $T>0$, graph $G(V,E)$, sampling oracle }
	\Output{ $S \subseteq V$}
	
	$\tilde{\log}(n-1) \leftarrow \sum_{i=1}^{n-1}\frac{1}{i}$;
	
	$\tilde{T}_0 \leftarrow 0$;
	
	For $T_0(v) \leftarrow 0$ for each $v \in V$;
	
	
	$S_n \leftarrow V $ and	$v_0 \leftarrow \emptyset $;
	
    \For{$t \leftarrow 1, \ldots, n-1$}{

         $\tilde{T_t} \leftarrow  \left \lceil \frac{T-\sum_{i=1}^{n+1}i}{\tilde{\log}(n-1)(n-t) }\right \rceil$;
         
        $T'_{t} \leftarrow \left\lceil  \frac{\tilde{T_t}}{2|S_{n-t+1}|} \right \rceil $ and $\tau_{t} \leftarrow T'_t- T'_{t-1}$;
               

        
  

    
         
         \For{ $v \in S_{n-t+1}$}{
         Run Algorithm~\ref{alg:sampling} (sampling procedure);
         }
         
         $\widehat{f}(S_{n-t+1}) \leftarrow \frac{\frac{1}{2}\sum_{v \in S_{n-t+1} \widehat{ \mathrm{deg}}_{S_{n-t+1}}(v,t)} }{|S_{n-t+1}|}$;         

         $v_t \leftarrow  \argmin_{v \in S_{n-t+1}} \widehat{ \mathrm{deg}}_{S_{n-t+1}}(v,t)$;

    $S_{n-t} \leftarrow S_{n-t+1 }\setminus \{v_t\}$;

    }

    \Return{ $S_{{\tt OUT}} \in \{ S_2, \ldots, S_n \} $ that maximizes $\widehat{f}(S_i)$
    }
\end{algorithm}

\begin{algorithm}[t]
\caption{Sampling procedure (subroutine of Algorithm~\ref{alg:peeling_approxx2})}
\label{alg:sampling}
	\SetKwInOut{Input}{Input}
	\SetKwInOut{Output}{Output}

        \If{ $N_{S_{n-t+1}}(v)=\emptyset$}{Set $\widehat{\mathrm{deg}}_{S_{n-t+1}}(v,t)=0$;}
        
        \Else{
         

                 
                 
             \If{ $v \notin N_{S_{n-t+2}}(v_{t-1})$}
             { 
             
            Sample $E_{S_{n-t+1}}(v)$ for $\tau_{t}$ times;
             
             $Y_t \leftarrow T_{E_{S_{n-t+1}}(v)}(t-1)  \widehat{\mathrm{deg}}_ {S_{n-t+2}}(v,t) $;
                 
            $\widehat{\mathrm{deg}}_{S_{n-t+1}}(v,t)\leftarrow  \frac{Y_t+\tau_t \hat{X}_{ E_{S_{n-t+1}}(v)}(\tau_t)}{T_{E_{S_{n-t+1}}(v)}(t-1) +\tau_t}$;
            
            $T_{E_{S_{n-t+1}}(v)}(t) \leftarrow T_{E_{S_{n-t+1}}(v)}(t-1) +\tau_t$;
            }

             \Else{
             
            Sample $E_{S_{n-t+1}}(v)$ for $\sum_{i=1}^t \tau_i$ times;
          
             $  \widehat{ \mathrm{deg}}_{S_{n-t+1}}(v,t) \leftarrow 
           \hat{X}_{ E_{S_{n-t+1}}}(v) (\sum_{i=1}^t \tau_i)$;

             $T_{E_{S_{n-t+1}}(v)}(t) \leftarrow \sum_{i=1}^t \tau_{i}$;
             


            }

        }


\end{algorithm}


\paragraph{Proposed algorithm.}
All procedures of \textsf{DS-SR} are detailed in Algorithm~\ref{alg:peeling_approxx2}. Intuitively, \textsf{DS-SR} proceeds as follows.
Given a budget $T$, we divide $T$ into $n-1$ phases.
\textsf{DS-SR} maintains a subset of vertices.
Initially $S_n \leftarrow V$.
In each phase $t$, for $v \in S_{n-t+1}$, the algorithm uses the sampling oracle for obtaining the estimate of the degree $\widehat{ \mathrm{deg}}_{S_{n-t+1}}(v)$, which we refer to as the {\em empirical degree}.
After the sampling procedure, we compute {\em empirical quality function} $ \widehat{f}(S_{n-t+1})$ and specify one vertex $v_t$ that should be removed.
In Algorithm~\ref{alg:sampling}, we detail the sampling procedure for obtaining the empirical degree of $v \in S_{n-t+1}$.
If $v$ was not a neighbor of $v_{t-1}$ in phase $t-1$,
the algorithm samples $E_{S_{n-t+1}}(v)$ for $\tau_t$ times, where $\tau_t$ is set carefully.
On the other hand,
if $v$ was a neighbor of that,
the algorithm samples $E_{S_{n-t+1}}(v)$ for $\sum_{i=1}^t \tau_i$ times.
Our eliminate scheme removes a vertex $v_t$ that minimizes the empirical degree, i.e.,
$v_t \in \argmin_{v \in S_{n-t+1}} \widehat{ \mathrm{deg}}_{S_{n-t+1}}(v,t)$.
Finally, after $n-1$ phases have been done, 
\textsf{DS-SR} outputs $S_{\tt OUT}\subseteq V$ that maximizes the empirical quality function, i.e.,  $S_{\tt OUT}=\argmax_{S_i \in \{ S_2, \ldots, S_n \}} \widehat{f}(S_i)$.

\subsection{Upper bound on the probability of error}
We provide an upper bound on the probability that the quality of solution obtained by the proposed algorithm is less than $\frac{1}{2}f_w(S^*)-\epsilon$, as shown in the following theorem.

\begin{theorem}\label{thm:fixbudget}
Given any $T>m$, and assume that the edge weight distribution $\phi_e$ for each arm $e \in [m]$ has mean $w(e)$ with an $R$-sub-Gaussian tail.
Then, \textsf{DS-SR} (Algorithm~\ref{alg:peeling_approxx2}) uses at most $T$ samples and outputs $S_{\tt OUT} \subseteq V$ such that
\begin{alignat}{4}\label{proberror}
  &  \Pr\left[f_w(S_{\tt OUT}) < \frac{f_w(S^*)}{2} - \epsilon \right]
  \notag \\
  & \leq C_{G,\epsilon} \exp \left( - \frac{(T-\sum_{i=1}^{n+1}i) \epsilon^2 }{4 n^2 \mathrm{deg}_{\max} R^2  \tilde{ \log }(n-1) } \right),
\end{alignat}
where $C_{G,\epsilon}=\frac{2 \mathrm{deg}_{\max}(n+1)^3 2^n R^2}{\epsilon^2}$ and $\tilde{\log}(n-1) = \sum_{i=1}^{n-1}  i^{-1}$.
\end{theorem}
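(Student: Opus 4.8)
The plan is to reduce the analysis to the classical fact that greedy peeling is a $\tfrac12$-approximation for the densest subgraph problem, and then to control the extra error coming from replacing exact weighted degrees with sampled estimates. First I would fix notation: for $S\subseteq V$ and $v\in S$ let $\mathrm{deg}_S(v)=\sum_{e\in E_S(v)}w(e)$ denote the \emph{true} weighted degree, which is exactly the quantity estimated by $\widehat{\mathrm{deg}}_{S_{n-t+1}}(v,t)$. I would then introduce the \emph{good event} on which every estimate formed during the run is accurate,
\[
\xi=\Bigl\{\ \bigl|\widehat{\mathrm{deg}}_{S_{n-t+1}}(v,t)-\mathrm{deg}_{S_{n-t+1}}(v)\bigr|\le\epsilon\ \text{ for all phases }t\text{ and all }v\in S_{n-t+1}\ \Bigr\},
\]
and split the argument into a deterministic part (on $\xi$ the output is $\tfrac12 f_w(S^*)-O(\epsilon)$), a probabilistic part (bounding $\Pr[\xi^c]$), and a budget-accounting part (at most $T$ samples).

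For the deterministic part I would adapt Charikar's peeling analysis. The key offline lemma is that every vertex of the optimal set satisfies $\mathrm{deg}_{S^*}(v)\ge f_w(S^*)$, since otherwise deleting $v$ would strictly increase the density and contradict optimality of $S^*$. Because \textsf{DS-SR} removes exactly one vertex per phase starting from $S_n=V\supseteq S^*$, there is a well-defined first phase whose current set $S$ still contains all of $S^*$ but from which a vertex $v^*\in S^*$ is removed. Monotonicity of the weighted degree under $S\supseteq S^*$ gives $\mathrm{deg}_S(v^*)\ge\mathrm{deg}_{S^*}(v^*)\ge f_w(S^*)$. Since $v^*$ minimizes the \emph{empirical} degree in $S$, on $\xi$ every $u\in S$ obeys $\mathrm{deg}_S(u)\ge\widehat{\mathrm{deg}}_S(u)-\epsilon\ge\widehat{\mathrm{deg}}_S(v^*)-\epsilon\ge\mathrm{deg}_S(v^*)-2\epsilon\ge f_w(S^*)-2\epsilon$; summing and using the handshake identity $w(S)=\tfrac12\sum_{u\in S}\mathrm{deg}_S(u)$ yields $f_w(S)\ge\tfrac12 f_w(S^*)-\epsilon$. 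Finally, because $S_{\tt OUT}$ maximizes the empirical density $\widehat{f}$ and $|\widehat{f}(\cdot)-f_w(\cdot)|\le\tfrac12\epsilon$ on $\xi$, I transfer this bound through $\widehat{f}$ to conclude $f_w(S_{\tt OUT})\ge\tfrac12 f_w(S^*)-\epsilon$ (with the constant on $\epsilon$ absorbed into the stated threshold).

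For the probabilistic part I would use that each queried block $E_S(v)$ is a sum of at most $\mathrm{deg}_{\max}$ independent $R$-sub-Gaussian edge weights, hence $\sqrt{\mathrm{deg}_{\max}}\,R$-sub-Gaussian, so an average of $k$ samples deviates by more than $\epsilon$ with probability at most $2\exp(-k\epsilon^2/(2\,\mathrm{deg}_{\max}R^2))$. By the reuse scheme in Algorithm~\ref{alg:sampling}, after phase $t$ every surviving block carries exactly $T'_t$ samples, and the budget split gives $T'_t\ge(T-\sum_{i=1}^{n+1}i)/(2n^2\,\tilde{\log}(n-1))$. I would then take a union bound, but since the peeling order is itself random the relevant sets cannot be fixed in advance, so I must union over \emph{all} subsets $S\subseteq V$ and all $v\in S$ (and over the accumulated sample counts); this is precisely what produces the $2^n$ factor and the polynomial-in-$n$ factors in $C_{G,\epsilon}$, while summing the geometric tail over the running counts contributes the $\mathrm{deg}_{\max}R^2/\epsilon^2$ prefactor. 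Combining the worst-case exponent with these prefactors yields the stated bound. The budget claim then follows by telescoping, using $|S_{n-t+1}|\,T'_t\approx\tilde{T}_t/2$ together with $\sum_{t=1}^{n-1}\tilde{T}_t\approx\frac{T-\sum_{i=1}^{n+1}i}{\tilde{\log}(n-1)}\sum_{j=1}^{n-1}\frac{1}{j}=T-\sum_{i=1}^{n+1}i$, so that $\sum_{t=1}^{n-1}|S_{n-t+1}|\,T'_t\approx\tfrac12(T-\sum_{i=1}^{n+1}i)$; the subtracted $\sum_{i=1}^{n+1}i$ absorbs the $O(n)$ ceiling overheads, keeping the total at most $T$.

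The hard part will be the probabilistic bookkeeping rather than the approximation argument: since the random peeling order couples the sampling decisions with the sets on which concentration is needed, I must make the good event uniform over the realized peeling, which forces the union over all $2^n$ candidate subsets and careful treatment of the weighted running averages in the reuse scheme of Algorithm~\ref{alg:sampling}. Pinning down the exact constants in $C_{G,\epsilon}$ and in the exponent — in particular reconciling the $\tfrac12$ between weighted degree and density with the factor arising from summing the concentration tail over accumulated sample counts — is where most of the routine-but-delicate calculation will lie.
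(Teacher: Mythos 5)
Your proposal follows essentially the same route as the paper's proof: the same good event made uniform over all $2^n$ subsets (with the sub-Gaussian block concentration, the geometric tail sum over sample counts, and the lower bound $T'_t\geq (T-\sum_{i=1}^{n+1}i)/(2n^2\tilde{\log}(n-1))$), the same Charikar-style deterministic argument via the last peeled set containing $S^*$ and inequality $\mathrm{deg}^*_{S^*}(v)\geq f_w(S^*)$, and the same telescoping budget accounting. The only divergence is constant tracking: your chain pays $2\epsilon$ per true degree and another $\epsilon$ in the $\widehat{f}$-transfer (forcing a rescaling that weakens the stated constants), whereas the paper lower-bounds the empirical density of the last set $S_\tau\supseteq S^*$ by half the minimum empirical degree $\frac{1}{2}\widehat{\mathrm{deg}}_{S_\tau}(v_\tau,\tau)$ exactly (no error), and converts empirical to true degree only once, which yields precisely the advertised $\epsilon$, prefactor, and exponent.
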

The proof of Theorem~\ref{thm:fixbudget} is given in Appenfix~\ref{apx:proof_theorem2}.
From the theorem,
we see that \textsf{DS-SR} requires a budget of $T=O\left (\frac{n^3 \mathrm{deg}_{\max}}{\epsilon^2} \log \left( \frac{\mathrm{deg}_{\max}}{\epsilon}\right) \right)$
by setting the RHS of~\eqref{proberror} to a constant.
Besides,
the upper bound on the probability of error is exponentially decreasing with $T$.

\section{Experiments}\label{sec:experiments}

In this section, we examine the performance of our proposed algorithms \textsf{DS-Lin} and \textsf{DS-SR}.
First, we conduct experiments for \textsf{DS-Lin} and show that \textsf{DS-Lin} can find a nearly-optimal solution without sampling any single edges.
Second, we perform experiments for \textsf{DS-SR} and demonstrate that \textsf{DS-SR} is applicable to large-sized graphs and significantly reduces the number of samples for single edges, compared to that of the state-of-the-art algorithm. 
Throughout our experiments,
to solve the LPs in Charikar's algorithm~\cite{Charikar2000}, we used a state-of-the-art mathematical programming solver, 
Gurobi Optimizer 7.5.1, with default parameter settings. 
All experiments were conducted on a Linux machine with 2.6~GHz CPU and 130~GB RAM. 
The code was written in Python.

\paragraph{Dataset.} Table~\ref{tab:instance} lists real-world graphs on which our experiments were conducted. 
Most of those can be found on Mark Newman's website\footnote{http://www-personal.umich.edu/~mejn/netdata/} 
or in SNAP datasets\footnote{http://snap.stanford.edu/}. 
For each graph, we construct the edge weight $w$ using the following simple rule, 
which is inspired by the {\em knockout densest subgraph model} introduced by \citet{Miyauchi_Takeda_18}. 
Let $G=(V,E)$ be an unweighted graph and let $S^*\subseteq V$ be an optimal solution to the densest subgraph problem. 
For each $e\in E$, we set $w(e)=\texttt{rand}(1,20)$ if $e\in E(S^*)$, 
and $w(e)=\texttt{rand}(1,100)$ if $e\in E\setminus E(S^*)$, 
where $\texttt{rand}(\cdot, \cdot)$ is the function that returns a real value selected uniformly at random from the interval between the two values. 
That is, we set a relatively small value for each $e\in E(S^*)$ and a relatively large value for each $e\in E\setminus E(S^*)$, 
which often makes the densest subgraph on $G=(V,E)$ no longer densest on the edge-weighted graph $G=(V,E,w)$. 
Throughout our experiments,
we generate a random noise $\eta(e) \sim \mathcal{N}(0,1)$ for all $e \in E$.

\begin{table}[t]
    \centering
    \caption{Real-world graphs used in our experiments.}
    \label{tab:instance}
    \scalebox{0.75}{
    \begin{tabular}{lrrl}
    \toprule
    Name &$n$  &$m$ & Description  \\
    \midrule
 \texttt{Karate}             &34       &78     &Social network\\
 \texttt{Lesmis} & 77& 254  &Social network\\
 \texttt{Polbooks}           &105      &441    &Co-purchased network\\
 \texttt{Adjnoun}            &112      &425    &Word-adjacency network\\
 \texttt{Jazz}               &198      &2,742    &Social network\\
 \texttt{Email}              &1,133    &5,451   &Communication network\\
 \texttt{email-Eu-core }     & 986    & 16,064  & Communication network\\
  \texttt{Polblogs}           &1,222    &16,714   & Blog hyperlinks network \\
  \texttt{ego-Facebook}      &4,039    &88,234 &  Social network  \\
     \texttt{Wiki-Vote}           &7,066  &100,736  & Wikipedia ``who-votes-whom'' \\
    \bottomrule
    \end{tabular}
    }
\end{table}


\begin{algorithm}[t]
\caption{Baseline algorithm (\textsf{Naive})}
\label{alg:naive}
	\SetKwInOut{Input}{Input}
	\SetKwInOut{Output}{Output}
	\Input{ Number of iterations $T$ and a family of queryable subsets of at least $k$ vertices  $\mathcal{S} \subseteq  2^V $ }
	
	\Output{ $S \subseteq V$}
	
	$w_{\rm avg} \leftarrow \bm{0}$;
	
	$t_{\rm e} \leftarrow 0$ for $e \in E$;
	
	\For{$t=1,2,,\ldots,T$}{
	
	
	Choose $S_t \subseteq  \mathcal{S}$ uniformly at random;
	
	Call the sampling oracle for $S_t$ and observe $r_t(S_t)$;
	
	
	$t_e \leftarrow t_e+1$ for $e \in E(S_t)$;
	
	Update $w_{\rm avg}(e) \leftarrow \frac{w_{\rm avg}(e)(t_e-1)+ r_{S_t}/\ell}{t_e}$ for $e \in E(S_t)$;
	}
	
	$S \leftarrow $ Output of Charikar's LP-based exact algorithm~\cite{Charikar2000} for $G(V,E,w_{\rm avg})$;

    \Return{ $S$
    }
\end{algorithm}

\begin{table}[t]
    \centering
    \caption{Comparison between \textsf{DS-Lin} and the baseline algorithm (Algorithm~\ref{alg:naive}).
    }
    \label{tab:result}
    \scalebox{0.83}{
    \begin{tabular}{llrrrrrr}
    \toprule
    Graph & $k$  &　\textsf{DS-Lin} &\textsf{Naive} & \textsf{OPT} &$|S^*|$   \\
    \midrule  
            &$10$       &       111.08 & 19.94&   \\
 \texttt{Karate}     &$20$      &       111.08 &19.94 & 111.08 & 6  \\
            &$30$      &       111.08 & 19.94 &\\
\midrule  
            &$10$      &    179.72 &  177.19 && \\
 \texttt{Lesmis}     &$20$     &      179.72 & 177.19 &179.72 & 15  \\
            &$30$     &      179.72 & 177.19 &  \\
\midrule

            &$10$      &       227.43 & 172.69 && \\
 \texttt{Polbooks}   &$20$     &      227.62 & 172.69 & 228.67 & 19   \\
            &$30$     &       227.67 & 172.69  & \\

\midrule
            &$10$       &     133.23 & 53.27  &&   \\
 \texttt{Adjnoun}    &$40$      &     133.62 &  53.27 & 134.83 & 55   \\
            &$70$      &     133.53 & 53.27 & \\
             
\midrule

                    &$10$    & 598.39 &   170.03  & &   \\
 \texttt{Jazz}                &$40$   & 598.81  & 170.46  & 599.43  & 42 \\
                    &$70$ & 598.81  & 164.76 &  \\

\midrule
                    &$10$     & 223.36 & 67.24   &  &  \\
 \texttt{Email}             &$40$     & 223.37  & 67.24 &223.90 & 58 \\
                    &$70$     & 222.29& 67.24   \\
 \bottomrule
    \end{tabular}
    }
\end{table}

\begin{table*}[t]
\begin{center}
\caption{Performance of \textsf{DS-SR}. For \textsf{DS-SR} and \textsf{R-Oracle}, the quality of solutions, number of samples, and computation time are averaged over 100 executions.}\label{table:dssr}
\scalebox{0.78}{
\begin{tabular}{lrrrrrrrrrrrrrrr}
\toprule
Graph 
&\multicolumn{4}{c}{\textsf{DS-SR}}   
&\multicolumn{3}{c}{\textsf{R-Oracle}}
 &\textsf{G-Oracle} & \textsf{OPT}  \\
\cmidrule[0.4pt](r{0.1em}){2-5}
\cmidrule[0.4pt](l{0.25em}){6-8}
& $T$ & Quality   &\#Samples for single edges &Time(s)
&Quality   &\#Samples for single edges &Time(s)\\
\midrule
\texttt{Karate} &$10^3$ & 111.08  &  58 &0.00 &   111.08   &   10,296  &0.02 & 111.08 & 111.08 \\
\texttt{Lesmis} &$10^4$  & 177.66 & 752  &0.02  & 179.72 &         51,816  &0.07 & 176.29 &179.72   \\
\texttt{Polbooks} &$10^4$ &227.43&  419  &0.02 & 228.67& 214,767 & 0.22 & 227.47 & 228.67  \\
\texttt{Adjnoun} &$10^4$ &133.93&403&0.02&134.83& 241,400&0.26 &  133.97 & 134.83   \\
\texttt{Jazz} & $10^5$ &599.42&6,837&0.4 &599.43&1,115,994&1.49 & 599.43 & 599.43    \\
\texttt{Email} &$10^6$ &220.7&23,785&1.51&223.91&22,790,631&20.54 & 220.93 &223.90   \\
\texttt{email-Eu-core} &$10^6$  & 792.03  & 34,393 & 4.0  &792.19  &  17,509,760   &   29.69   & 792.07 &792.19 \\
\texttt{Polblogs} &$10^6$  &1211.37  &  16,508  &4.38  &  1211.44     & 18,452,256   &20.76 & 1211.44 &1211.44 \\  
\texttt{ego-Facebook} &$10^7$  &  2654.40&  103,546 &42.61 & 2783.85    &   78,175,324  &108.82 &2654.44  & 2783.85 \\ 
\texttt{Wiki-Vote} &$10^8$ &1235.71&3,975,994&425.42&1235.95&288,205,696&638.92 & 1235.76 &1235.95 \\ 
\bottomrule
\end{tabular}
}
\end{center}
\end{table*}


    

    
    
    
    


        

\subsection{Experiments for \textsf{DS-Lin}} 
 
\paragraph{Baseline.} We compare our algorithm with the following naive approach, which we refer to as \textsf{Naive}. 
As well as our proposed algorithm, \textsf{Naive} is a kind of algorithm that sequentially accesses a sampling oracle to estimate $w$ and uses uniform sampling strategy.
The entire procedure is detailed in Algorithm~\ref{alg:naive}.

\paragraph{Parameter settings.}
Here we use the graphs with up to ten thousand edges. 
We set the minimum size of queryable subsets $k=10$, 20, 30 for \texttt{Karate}, \texttt{Lesmis}, and \texttt{Polbooks}, and $k=10$, 40, 70 for \texttt{Adjnoun}, \texttt{Jazz}, and \texttt{Email}.
We construct $\mathcal{S}$ so that the matrix consisting of rows corresponding to the indicator vector of $S \in \mathcal{S}$ has rank $m$. Each $S \in \mathcal{S}$ is given as follows. We select an integer $\ell \in [k,n]$  and choose $S\subseteq V$ of size $\ell$ uniformly at random.
A uniform allocation strategy is employed by \textsf{DS-Lin} as $p$, i.e., $p=(1/|\mathcal{S}|)_{S \in \mathcal{S}}$.
We set $\lambda=100$ and $R=1$.
In our theoretical analysis, we provided an upper bound of the number of queries required by \textsf{DS-Lin} for $\epsilon>0$ and $\delta \in (0,1)$. 
However, such an upper bound is usually too large in practice.
Therefore, we terminate the while-loop of our algorithm once the number of iterations exceeds 10,000 except for the initialization steps. 
To be consistent, we also set $T=m+10000$ in \textsf{Naive}. 

\paragraph{Results.}
Here we compare our proposed algorithm \textsf{DS-Lin} with \textsf{Naive} in terms of the quality of solutions.  The results are summarized in Table~\ref{tab:result}. 
The quality of output $S$ is measured by its density in terms of $w$ which is unknown to the learner. 
For all instances, we run each algorithm for 10 times, and report the average value. 
The last two columns of Table~\ref{tab:result} represent the optimal value and the size of an optimal solution, respectively, 
As can be seen, our algorithm outperforms the baseline algorithm; 
in fact, our algorithm always obtains a nearly-optimal solution. 
It should be noted that this trend is valid even if $k$ is quite large; 
in particular, even if $k$ is larger than the size of the densest subgraph on the edge-weighted graph $G=(V,E,w)$, 
our algorithm succeeds in detecting a vertex subset that is almost densest in terms of $w$. 
We also report how the density of solutions approaches to such a quality and behavior of \textsf{DS-Lin}  with respect to the number of iterations in Appendix~\ref{apx:exp}. 

Finally, we briefly report the running time of our proposed algorithm with 10,000 iterations. 
For small-sized instances, \texttt{Karate}, \texttt{Lesmis}, \texttt{Polbooks}, and \texttt{Adjnoun}, the algorithm runs in a few minutes. 
For medium-sized instances, \texttt{Jazz} and \texttt{Email}, the algorithm runs in a few hours.

\subsection{Experiments for \textsf{DS-SR} }

\paragraph{Compared algorithms.}
To demonstrate the performance of \textsf{DS-SR} for Problem~\ref{prob:graph_fixbudget},
we also implement two algorithms \textsf{G-Oracle} and \textsf{R-Oracle}.
\textsf{G-Oracle} is the greedy peeling algorithm with the knowledge of the expected weight $w$~\cite{Charikar2000}, which is detailed in Algorithm~\ref{alg:greedy}. 
Note that we are interested in how the quality of solutions by \textsf{DS-SR} is close to that of \textsf{G-Oracle}. 
\textsf{R-Oracle} is the state-of-the-art robust optimization algorithm proposed by \citet{Miyauchi_Takeda_18} with the use of {\em edge-weight space} $W=\times_{e \in E}[\min \{ w(e)-1 ,0 \},w(e)+1]$, 
which is detailed in Algorithm~\ref{alg:roracle} in Appendix~\ref{apx:dssr}.
For \textsf{R-Oracle}, we set $\gamma=0.9$ and $\varepsilon=0.9$ as in \citet{Miyauchi_Takeda_18}.

\begin{algorithm}[t]
\caption{Greedy peeling (\textsf{G-Oracle})}
\label{alg:greedy}
	\SetKwInOut{Input}{Input}
	\SetKwInOut{Output}{Output}
	\Input{ Graph $G=(V,E,w)$}
	\Output{ $S \subseteq V$}
	$S_{|V|} \leftarrow V$;
	
    \For{$i \leftarrow |V|, \ldots, 2$}{
    
    Find $v_i \in \argmin_{v \in S_i} \text{deg}_{S_i}(v)$;
    
    $S_{i-1} \leftarrow S_i \setminus \{v_i\}$;
    
    }

    \Return{ $S_i \in \{ S_1, \ldots, S_{|V|} \} $ that maximizes $f_w(S)$
    }
\end{algorithm}

\paragraph{Results.} 
For \textsf{DS-SR}, in order to make $\tilde{T}_t$ positive, we run the experiments with a budget $T=10^{ \left \lceil \log_{10}\sum_{i=1}^{n+1}i  \right \rceil } $ for all instances.
The results are summarized in Table~\ref{table:dssr}.
The quality of output is again evaluated by its density in terms of $w$.
For \textsf{DS-SR} and \textsf{R-Oracle},
we list the total number of samples for individual edges used in the algorithms.
To observe the scalability, we also report the computation time of the algorithms.
We perform them 100 times on each graph. 
As can be seen, \textsf{DS-SR} required much less samples for single edges than that of \textsf{R-Oracle} but still can find high-quality solutions.
The quality of solutions by \textsf{DS-SR} is comparable with that of \textsf{G-Oracle}, which has a prior knowledge of expected weights $w$.
Moreover, in terms of computation time,
\textsf{DS-SR} efficiently works on large-sized graphs with about ten thousands of edges.
Finally, Figure~\ref{fig:sample} depicts the fraction of the size of edge subsets queried in \textsf{DS-SR} (see Appendix~\ref{apx:dssr} for results on all graphs).
We see that in the execution of \textsf{DS-SR}, the fraction of the number of queries for single edges is less than 30\%.

\begin{figure}[t!] 
\begin{center}
  \subfigure{
  \includegraphics[width=0.22\textwidth]{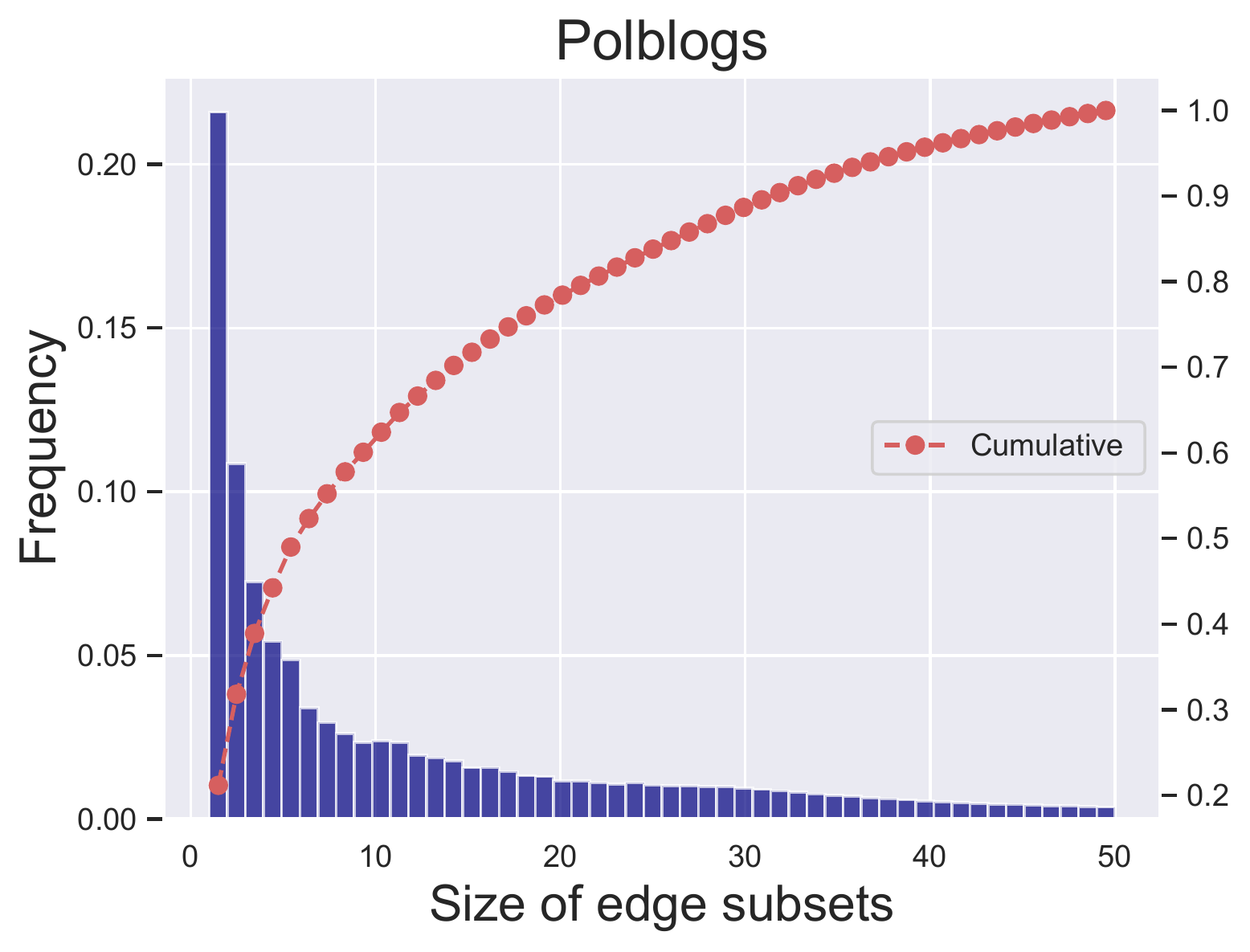}
  }
 \subfigure{
  \includegraphics[width=0.22\textwidth]{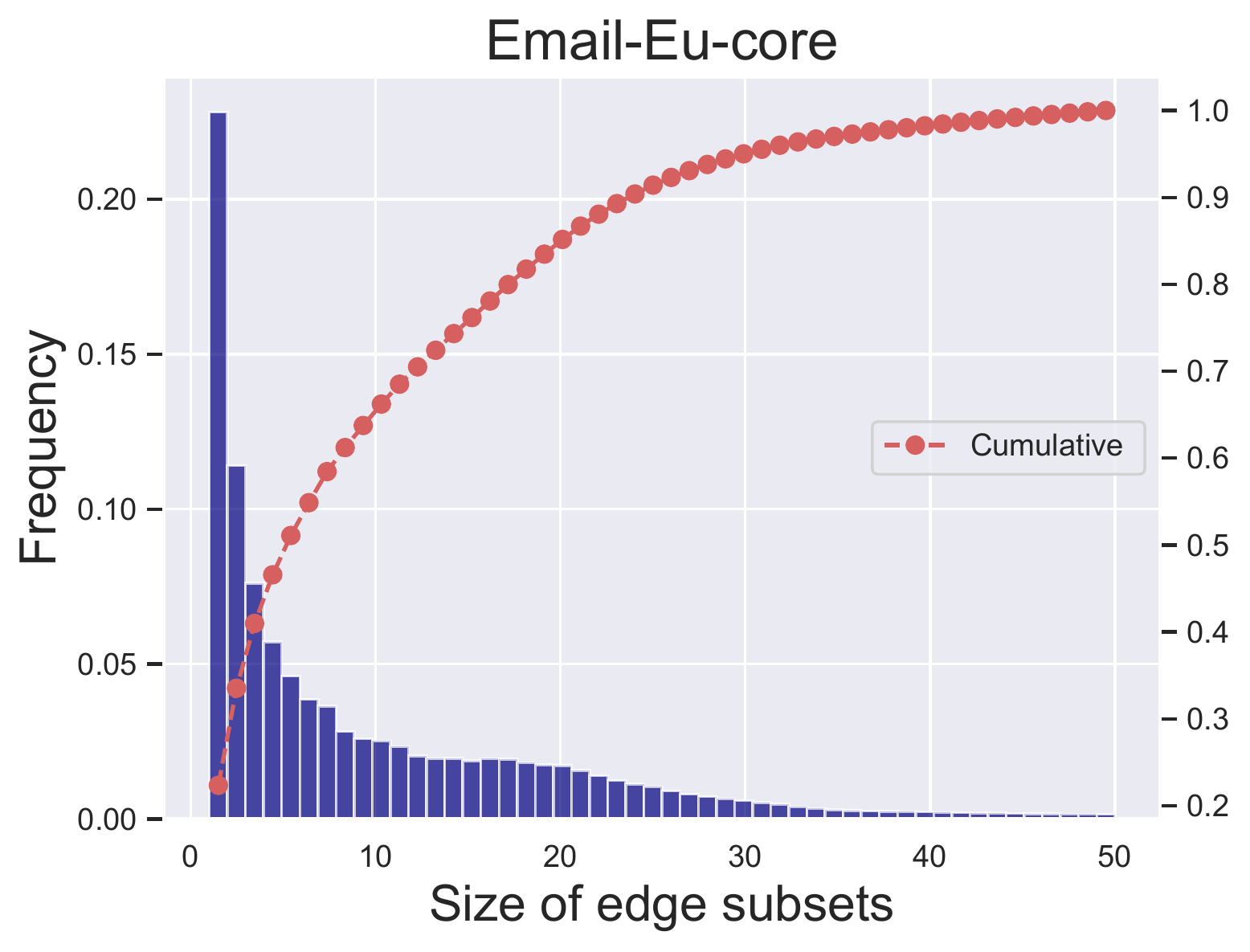}
  }\\
 \subfigure{
  \includegraphics[width=0.22\textwidth]{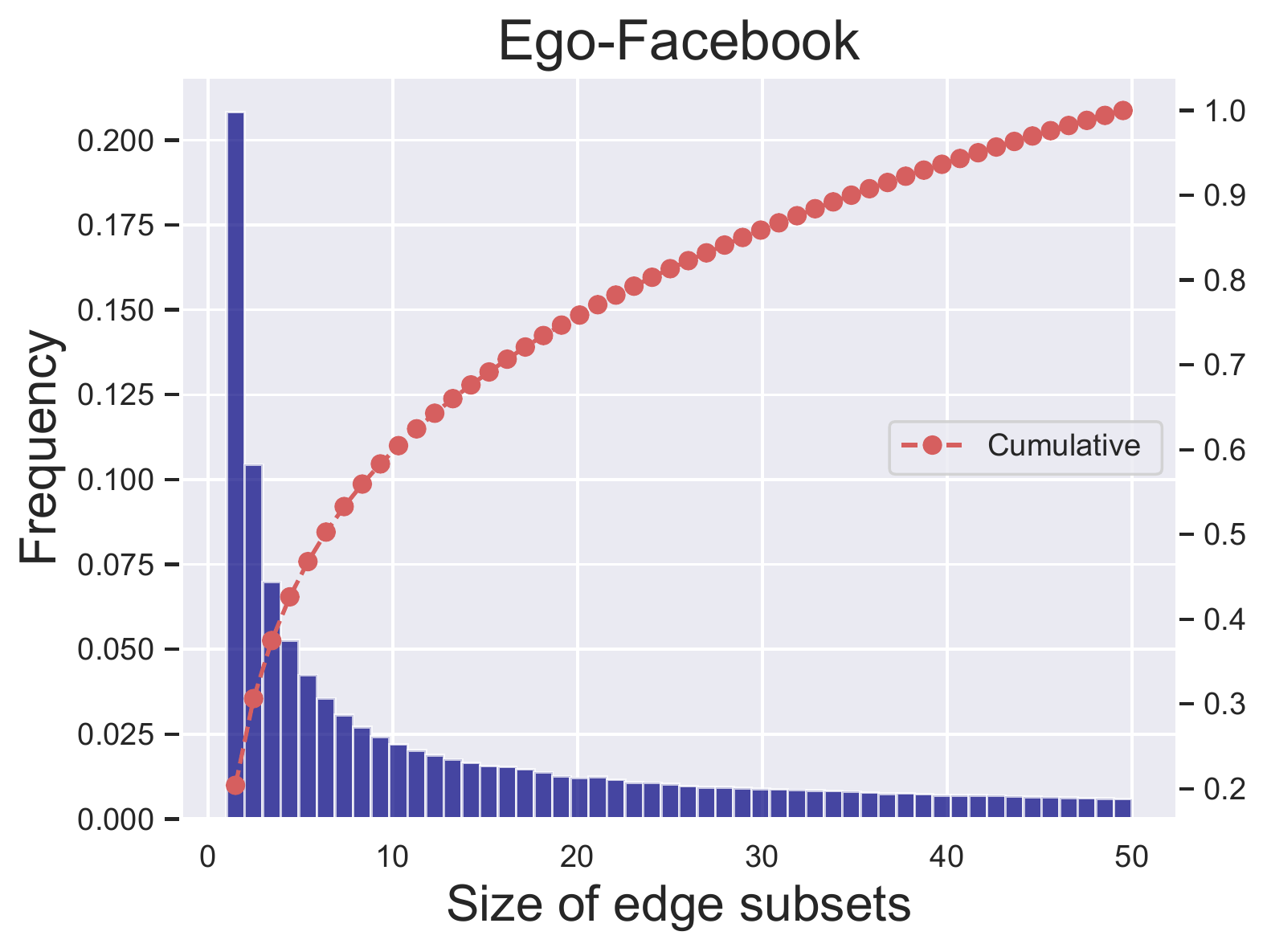}
  }
 \subfigure{
  \includegraphics[width=0.22\textwidth]{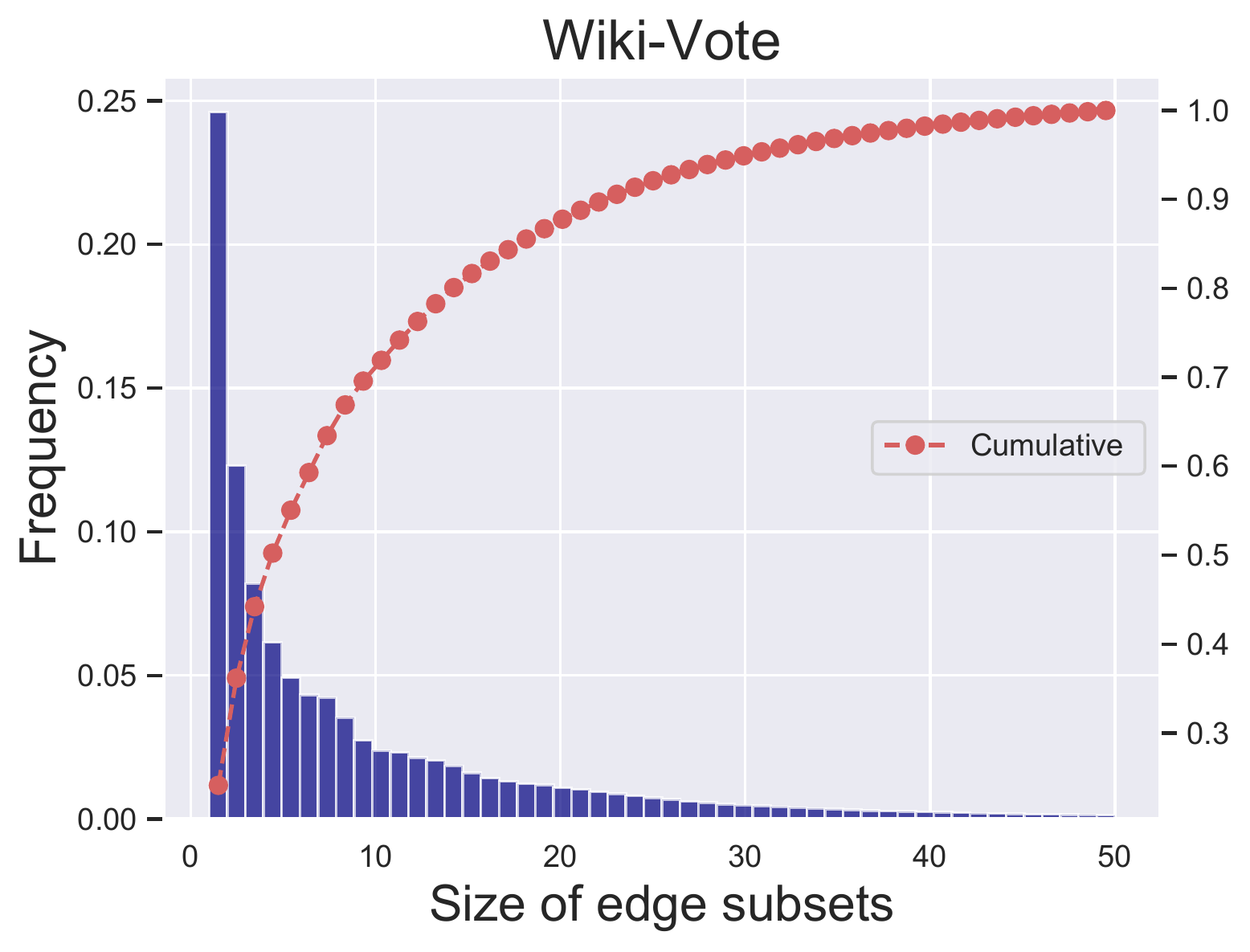}
  }
   \end{center}\caption{Fraction of the size of edge subsets queried in \textsf{DS-SR}. All values are averaged over 100 executions.}\label{fig:sample}
\end{figure}

\section{Conclusion}\label{sec:conclusion}
In this study,
we introduced a novel online variant of the densest subgraph problem by bringing the concepts of combinatorial pure exploration, which we refer to as the DS bandits.
We first proposed an $(\epsilon,\delta)$-PAC algorithm called  \textsf{DS-Lin}, and provided a polynomial sample complexity guarantee.
Our key technique is to utilize an approximation algorithm using SDP for confidence bound maximization.
Then, to deal with large-sized graphs, we proposed an algorithm called \textsf{DS-SR} by combining the successive reject strategy and the greedy peeling algorithm. We provided an upper bound of probability that the quality of the solution obtained by the algorithm is less than $\frac{1}{2}{\rm OPT}-\epsilon$.
Computational experiments using well-known real-world graphs demonstrate the effectiveness of our proposed algorithm.



\section*{Acknowledgments}
The authors thank the anonymous reviewers for their useful comments and
suggestions to improve the paper.
YK would like to thank Wei Chen and Tomomi Matsui for helpful discussion,
and also thank Yasuo Tabei, Takeshi Teshima, and Taira Tsuchiya for their feedback on the manuscript.
YK was supported by Microsoft Research Asia D-CORE program and KAKENHI 18J23034.
AM was supported by KAKENHI 19K20218. 
JH was supported by KAKENHI 18K17998.
MS was supported by KAKENHI 17H00757.

\bibliography{dense}

\begin{thebibliography}{57}
\providecommand{\natexlab}[1]{#1}
\providecommand{\url}[1]{\texttt{#1}}
\expandafter\ifx\csname urlstyle\endcsname\relax
  \providecommand{\doi}[1]{doi: #1}\else
  \providecommand{\doi}{doi: \begingroup \urlstyle{rm}\Url}\fi

\bibitem[Abbasi-{Y}adkori et~al.(2011)Abbasi-{Y}adkori, P\'{a}l, and
  Szepesv\'{a}ri]{yadkori11}
Abbasi-{Y}adkori, Y., P\'{a}l, D., and Szepesv\'{a}ri, C.
\newblock Improved algorithms for linear stochastic bandits.
\newblock In \emph{Proc. NIPS~'14}, pp.\  2312--2320, 2011.

\bibitem[Adar \& R\'e(2007)Adar and R\'e]{Adar_Re_07}
Adar, E. and R\'e, C.
\newblock Managing uncertainty in social networks.
\newblock \emph{IEEE Data Engineering Bulletin}, 30:\penalty0 15--22, 2007.

\bibitem[Agrawal \& Srikant(2000)Agrawal and Srikant]{agrawal2000privacy}
Agrawal, R. and Srikant, R.
\newblock Privacy-preserving data mining.
\newblock In \emph{Proc. SIGMOD~'00}, pp.\  439--450, 2000.

\bibitem[Andersen \& Chellapilla(2009)Andersen and
  Chellapilla]{Andersen_Chellapilla_09}
Andersen, R. and Chellapilla, K.
\newblock Finding dense subgraphs with size bounds.
\newblock In \emph{Proc. WAW~'09}, pp.\  25--37, 2009.

\bibitem[Angel et~al.(2012)Angel, Sarkas, Koudas, and Srivastava]{Angel+12}
Angel, A., Sarkas, N., Koudas, N., and Srivastava, D.
\newblock Dense subgraph maintenance under streaming edge weight updates for
  real-time story identification.
\newblock In \emph{Proc. VLDB~'12}, pp.\  574--585, 2012.

\bibitem[Audibert et~al.(2010)Audibert, Bubeck, and Munos]{Audibert2010}
Audibert, J.-Y., Bubeck, S., and Munos, R.
\newblock Best arm identification in multi-armed bandits.
\newblock In \emph{Proc. COLT~'10}, pp.\  41--53, 2010.

\bibitem[Auer(2003)]{Auer03}
Auer, P.
\newblock Using confidence bounds for exploitation-exploration trade-offs.
\newblock \emph{Journal of Machine Learning Research}, 3:\penalty0 397–--422,
  2003.

\bibitem[Bader \& Hogue(2003)Bader and Hogue]{Bader_Hogue_03}
Bader, G.~D. and Hogue, C. W.~V.
\newblock An automated method for finding molecular complexes in large protein
  interaction networks.
\newblock \emph{BMC Bioinformatics}, 4\penalty0 (1):\penalty0 1--27, 2003.

\bibitem[Bahmani et~al.(2012)Bahmani, Kumar, and Vassilvitskii]{Bahmani+12}
Bahmani, B., Kumar, R., and Vassilvitskii, S.
\newblock Densest subgraph in streaming and mapreduce.
\newblock In \emph{Proc. VLDB~'12}, pp.\  454--465, 2012.

\bibitem[Bhaskara et~al.(2010)Bhaskara, Charikar, Chlamtac, Feige, and
  Vijayaraghavan]{Bhaskara+10}
Bhaskara, A., Charikar, M., Chlamtac, E., Feige, U., and Vijayaraghavan, A.
\newblock Detecting high log-densities: An ${O}(n^{1/4})$ approximation for
  densest $k$-subgraph.
\newblock In \emph{Proc. STOC~'10}, pp.\  201--210, 2010.

\bibitem[Bhattacharya et~al.(2015)Bhattacharya, Henzinger, Nanongkai, and
  Tsourakakis]{Bhattacharya+15}
Bhattacharya, S., Henzinger, M., Nanongkai, D., and Tsourakakis, C.~E.
\newblock Space- and time-efficient algorithm for maintaining dense subgraphs
  on one-pass dynamic streams.
\newblock In \emph{Proc. STOC~'15}, pp.\  173--182, 2015.

\bibitem[Bouhtou et~al.(2010)Bouhtou, Gaubert, and Sagnol]{mustapha2010}
Bouhtou, M., Gaubert, S., and Sagnol, G.
\newblock Submodularity and randomized rounding techniques for optimal
  experimental design.
\newblock \emph{Electronic Notes in Discrete Mathematics}, 36:\penalty0
  679--686, 2010.

\bibitem[Bubeck et~al.(2013)Bubeck, Wang, and Viswanathan]{bubeck2013}
Bubeck, S., Wang, T., and Viswanathan, N.
\newblock Multiple identifications in multi-armed bandits.
\newblock In \emph{Proc. ICML~'13}, pp.\  258--265, 2013.

\bibitem[Carpentier \& Locatelli(2016)Carpentier and Locatelli]{capentier16}
Carpentier, A. and Locatelli, A.
\newblock Tight (lower) bounds for the fixed budget best arm identification
  bandit problem.
\newblock In \emph{Proc. COLT'~16}, pp.\  590--604, 2016.

\bibitem[Charikar(2000)]{Charikar2000}
Charikar, M.
\newblock Greedy approximation algorithms for finding dense components in a
  graph.
\newblock In \emph{Proc. APPROX~'00}, pp.\  84--95, 2000.

\bibitem[Chen et~al.(2016)Chen, Gupta, and Li]{Chen2016matroid}
Chen, L., Gupta, A., and Li, J.
\newblock Pure exploration of multi-armed bandit under matroid constraints.
\newblock In \emph{Proc. COLT~'16}, pp.\  647--669, 2016.

\bibitem[Chen et~al.(2017)Chen, Gupta, Li, Qiao, and Wang]{Chen17a}
Chen, L., Gupta, A., Li, J., Qiao, M., and Wang, R.
\newblock Nearly optimal sampling algorithms for combinatorial pure
  exploration.
\newblock In \emph{Proc. COLT~'17}, pp.\  482--534, 2017.

\bibitem[Chen et~al.(2014)Chen, Lin, King, Lyu, and Chen]{Chen2014}
Chen, S., Lin, T., King, I., Lyu, M.~R., and Chen, W.
\newblock Combinatorial pure exploration of multi-armed bandits.
\newblock In \emph{Proc. NIPS~'14}, pp.\  379--387, 2014.

\bibitem[Chen et~al.(2013)Chen, Wang, and Yuan]{chenwei13}
Chen, W., Wang, Y., and Yuan, Y.
\newblock Combinatorial multi-armed bandit: General framework and applications.
\newblock In \emph{Proc. ICML~'13}, pp.\  151--159, 2013.

\bibitem[Dourisboure et~al.(2007)Dourisboure, Geraci, and
  Pellegrini]{Dourisboure+07}
Dourisboure, Y., Geraci, F., and Pellegrini, M.
\newblock Extraction and classification of dense communities in the web.
\newblock In \emph{Proc. WWW~'07}, pp.\  461--470, 2007.

\bibitem[Epasto et~al.(2015)Epasto, Lattanzi, and Sozio]{Epasto+15}
Epasto, A., Lattanzi, S., and Sozio, M.
\newblock Efficient densest subgraph computation in evolving graphs.
\newblock In \emph{Proc. WWW~'15}, pp.\  300--310, 2015.

\bibitem[Feige et~al.(2001)Feige, Peleg, and Kortsarz]{Feige+01}
Feige, U., Peleg, D., and Kortsarz, G.
\newblock The dense $k$-subgraph problem.
\newblock \emph{Algorithmica}, 29\penalty0 (3):\penalty0 410--421, 2001.

\bibitem[Gabillon et~al.(2012)Gabillon, Ghavamzadeh, and Lazaric]{Gabillon2012}
Gabillon, V., Ghavamzadeh, M., and Lazaric, A.
\newblock Best arm identification: A unified approach to fixed budget and fixed
  confidence.
\newblock In \emph{Proc. NIPS~'12}, pp.\  3212--3220, 2012.

\bibitem[Galimberti et~al.(2017)Galimberti, Bonchi, and Gullo]{Galimberti+17}
Galimberti, E., Bonchi, F., and Gullo, F.
\newblock Core decomposition and densest subgraph in multilayer networks.
\newblock In \emph{Proc. CIKM~'17}, pp.\  1807--1816, 2017.

\bibitem[Ghaffari et~al.(2019)Ghaffari, Lattanzi, and
  Mitrovi{\'c}]{Ghaffari2019}
Ghaffari, M., Lattanzi, S., and Mitrovi{\'c}, S.
\newblock Improved parallel algorithms for density-based network clustering.
\newblock In \emph{Proc. ICML~'19}, pp.\  2201--2210, 2019.

\bibitem[Gibson et~al.(2005)Gibson, Kumar, and Tomkins]{Gibson+05}
Gibson, D., Kumar, R., and Tomkins, A.
\newblock Discovering large dense subgraphs in massive graphs.
\newblock In \emph{Proc. VLDB~'05}, pp.\  721--732, 2005.

\bibitem[Gionis \& Tsourakakis(2015)Gionis and
  Tsourakakis]{Gionis_Tsourakakis_15}
Gionis, A. and Tsourakakis, C.~E.
\newblock Dense subgraph discovery: {KDD} 2015 {T}utorial.
\newblock In \emph{Proc. KDD~'15}, pp.\  2313--2314, 2015.

\bibitem[Goemans \& Williamson(1995)Goemans and
  Williamson]{GoemansWilliamson1995}
Goemans, M.~X. and Williamson, D.~P.
\newblock Improved approximation algorithms for maximum cut and satisfiability
  problems using semidefinite programming.
\newblock \emph{Journal of the ACM}, 42:\penalty0 1115--1145, 1995.

\bibitem[Goldberg(1984)]{Goldberg84}
Goldberg, A.~V.
\newblock Finding a maximum density subgraph.
\newblock Technical report, University of California Berkeley, 1984.

\bibitem[Hu et~al.(2017)Hu, Wu, and Chan]{Hu+17}
Hu, S., Wu, X., and Chan, T.-H.~H.
\newblock Maintaining densest subsets efficiently in evolving hypergraphs.
\newblock In \emph{Proc. CIKM~'17}, pp.\  929--938, 2017.

\bibitem[Huang et~al.(2018)Huang, Ok, Li, and Chen]{huang2018}
Huang, W., Ok, J., Li, L., and Chen, W.
\newblock Combinatorial pure exploration with continuous and separable reward
  functions and its applications.
\newblock In \emph{Proc. IJCAI~'18}, pp.\  2291--2297, 2018.

\bibitem[Kawase \& Miyauchi(2018)Kawase and Miyauchi]{Kawase_Miyauchi_18}
Kawase, Y. and Miyauchi, A.
\newblock The densest subgraph problem with a convex/concave size function.
\newblock \emph{Algorithmica}, 80\penalty0 (12):\penalty0 3461--3480, 2018.

\bibitem[Kawase et~al.(2019)Kawase, Kuroki, and Miyauchi]{Kawase+19}
Kawase, Y., Kuroki, Y., and Miyauchi, A.
\newblock Graph mining meets crowdsourcing: Extracting experts for answer
  aggregation.
\newblock In \emph{Proc. IJCAI'19}, pp.\  1272--1279, 2019.

\bibitem[Khuller \& Saha(2009)Khuller and Saha]{Khuller_Saha_09}
Khuller, S. and Saha, B.
\newblock On finding dense subgraphs.
\newblock In \emph{Proc. ICALP~'09}, pp.\  597--608, 2009.
\newblock ISBN 978-3-642-02926-4.

\bibitem[Kiefer \& Wolfowitz(1960)Kiefer and Wolfowitz]{kiefer_wolfowitz_1960}
Kiefer, J. and Wolfowitz, J.
\newblock The equivalence of two extremum problems.
\newblock \emph{Canadian Journal of Mathematics}, 12:\penalty0 363–366, 1960.

\bibitem[Kuroki et~al.(2020)Kuroki, Xu, Miyauchi, Honda, and
  Sugiyama]{Kuroki+19}
Kuroki, Y., Xu, L., Miyauchi, A., Honda, J., and Sugiyama, M.
\newblock Polynomial-time algorithms for multiple-arm identification with
  full-bandit feedback.
\newblock \emph{Neural Computation, in press}, 2020.

\bibitem[Mahajan \& Ramesh(1999)Mahajan and Ramesh]{mahajan1999}
Mahajan, S. and Ramesh, H.
\newblock Derandomizing approximation algorithms based on semidefinite
  programming.
\newblock \emph{SIAM Journal on Computing}, 28:\penalty0 1641--1663, 1999.

\bibitem[McGregor et~al.(2015)McGregor, Tench, Vorotnikova, and
  Vu]{McGregor+15}
McGregor, A., Tench, D., Vorotnikova, S., and Vu, H.~T.
\newblock Densest subgraph in dynamic graph streams.
\newblock In \emph{Proc. MFCS~'15}, pp.\  472--482, 2015.

\bibitem[Miller et~al.(2010)Miller, Bliss, and Wolfe]{Miller2010}
Miller, B., Bliss, N., and Wolfe, P.
\newblock Subgraph detection using eigenvector {L}1 norms.
\newblock In \emph{Proc. NIPS~'10}, 2010.

\bibitem[Mitzenmacher et~al.(2015)Mitzenmacher, Pachocki, Peng, Tsourakakis,
  and Xu]{Mitzenmacher+15}
Mitzenmacher, M., Pachocki, J., Peng, R., Tsourakakis, C.~E., and Xu, S.~C.
\newblock Scalable large near-clique detection in large-scale networks via
  sampling.
\newblock In \emph{Proc. KDD~'15}, pp.\  815--824, 2015.

\bibitem[Miyauchi \& Kakimura(2018)Miyauchi and Kakimura]{Miyauchi_Kakimura_18}
Miyauchi, A. and Kakimura, N.
\newblock Finding a dense subgraph with sparse cut.
\newblock In \emph{Proc. CIKM~'18}, pp.\  547--556, 2018.

\bibitem[{Miyauchi} \& {Takeda}(2018){Miyauchi} and
  {Takeda}]{Miyauchi_Takeda_18}
{Miyauchi}, A. and {Takeda}, A.
\newblock Robust densest subgraph discovery.
\newblock In \emph{Proc. ICDM~'18}, pp.\  1188--1193, 2018.

\bibitem[Miyauchi et~al.(2015)Miyauchi, Iwamasa, Fukunaga, and
  Kakimura]{Miyauchi+15}
Miyauchi, A., Iwamasa, Y., Fukunaga, T., and Kakimura, N.
\newblock Threshold influence model for allocating advertising budgets.
\newblock In \emph{Proc. ICML~'15}, pp.\  1395--1404, 2015.

\bibitem[Nasir et~al.(2017)Nasir, Gionis, Morales, and Girdzijauskas]{Nasir+17}
Nasir, M. A.~U., Gionis, A., Morales, G. D.~F., and Girdzijauskas, S.
\newblock Fully dynamic algorithm for top-k densest subgraphs.
\newblock In \emph{Proc. CIKM~'17}, pp.\  1817--1826, 2017.

\bibitem[Nepusz et~al.(2012)Nepusz, Yu, and Paccanaro]{Nepusz+12}
Nepusz, T., Yu, H., and Paccanaro, A.
\newblock Detecting overlapping protein complexes in protein-protein
  interaction networks.
\newblock \emph{Nature Methods}, 9\penalty0 (5):\penalty0 471--472, 2012.

\bibitem[Nesterov(1998)]{Nestrov1998}
Nesterov, Y.
\newblock Semidefinite relaxation and nonconvex quadratic optimization.
\newblock \emph{Optimization Methods and Software}, 9\penalty0 (1-3):\penalty0
  141--160, 1998.

\bibitem[Papailiopoulos et~al.(2014)Papailiopoulos, Mitliagkas, Dimakis, and
  Caramanis]{Papailiopoulos+14}
Papailiopoulos, D.~S., Mitliagkas, I., Dimakis, A.~G., and Caramanis, C.
\newblock Finding dense subgraphs via low-rank bilinear optimization.
\newblock In \emph{Proc. ICML~'14}, pp.\  1890--1898, 2014.

\bibitem[Pukelsheim(2006)]{pukelsheim2006}
Pukelsheim, F.
\newblock \emph{Optimal Design of Experiments}.
\newblock SIAM, 2006.

\bibitem[Rejwan \& Mansour(2019)Rejwan and Mansour]{Idan2019}
Rejwan, I. and Mansour, Y.
\newblock Top-k combinatorial bandits with full-bandit feedback.
\newblock \emph{arXiv preprint arXiv:1905.12624}, 2019.

\bibitem[Sagnol(2013)]{SAGNOL2013}
Sagnol, G.
\newblock Approximation of a maximum-submodular-coverage problem involving
  spectral functions, with application to experimental designs.
\newblock \emph{Discrete Applied Mathematics}, 161:\penalty0 258--276, 2013.

\bibitem[Soare et~al.(2014)Soare, Lazaric, and Munos]{Soare2014}
Soare, M., Lazaric, A., and Munos, R.
\newblock Best-arm identification in linear bandits.
\newblock In \emph{Proc. NIPS'14}, pp.\  828--836, 2014.

\bibitem[Tsourakakis(2015)]{Tsourakakis_15}
Tsourakakis, C.~E.
\newblock The k-clique densest subgraph problem.
\newblock In \emph{Proc. WWW~'15}, pp.\  1122--1132, 2015.

\bibitem[Tsourakakis et~al.(2019)Tsourakakis, Chen, Kakimura, and
  Pachocki]{Tsourakakis+19}
Tsourakakis, C.~E., Chen, T., Kakimura, N., and Pachocki, J.
\newblock Novel dense subgraph discovery primitives: Risk aversion and
  exclusion queries.
\newblock In \emph{Proc. ECML-PKDD~'19}, 2019.
\newblock 17 pages.

\bibitem[Xu et~al.(2018)Xu, Honda, and Sugiyama]{Xu2018}
Xu, L., Honda, J., and Sugiyama, M.
\newblock A fully adaptive algorithm for pure exploration in linear bandits.
\newblock In \emph{Proc. AISTATS~'18}, pp.\  843--851, 2018.

\bibitem[Ye(1999)]{Ye97approximatingquadratic}
Ye, Y.
\newblock Approximating quadratic programming with bound constraints.
\newblock \emph{Mathematical Programming}, 84:\penalty0 219--226, 1999.

\bibitem[Zheleva \& Getoor(2011)Zheleva and Getoor]{zheleva2011privacy}
Zheleva, E. and Getoor, L.
\newblock Privacy in social networks: A survey.
\newblock In \emph{Social Network Data Analytics}, pp.\  277--306. Springer US,
  2011.

\bibitem[Zou(2013)]{Zou_13}
Zou, Z.
\newblock Polynomial-time algorithm for finding densest subgraphs in uncertain
  graphs.
\newblock In \emph{Proc. MLG~'13}, 2013.
\newblock 7 pages.

\end{thebibliography}
\bibliographystyle{icml2020}

\begin{onecolumn}
\icmltitle{Supplementary Material:\\
\vspace{2mm}
Online Dense Subgraph Discovery via Blurred-Graph Feedback}

\appendix

\section{Related work on the densest subgraph problem}\label{sec:related}


The densest subgraph problem has a large number of noteworthy problem variations. 
The most related one is the above-mentioned variant dealing with the uncertainty of edge weights, 
recently introduced by \citet{Miyauchi_Takeda_18}. 
Here we review their models in detail. 
They introduced two optimization models: 
the \emph{robust densest subgraph problem} and the \emph{robust densest subgraph problem with sampling oracle}. 
In both models, it is assumed that we have an edge-weight space $W=\times_{e\in E}[l_e,r_e]\subseteq \times_{e\in E}[0,\infty)$ that contains the unknown true edge weight $w$. 
That is, we have only the lower and upper bounds on the true edge weight for each edge. 
The key question they addressed is as follows: how can we evaluate the quality of solutions in this uncertain scenario?
To answer this question, they employed the measure called the \emph{robust ratio}, which is a well-known notion in the field of robust optimization. 
In the first model, given an unweighted graph $G$ and an edge-weight space $W$, we are asked to find $S\subseteq V$ that maximizes the robust ratio. 
In the second model, as mentioned above, we also have access to the edge-weight sampling oracle. 

There are other problem variations under uncertain scenarios. 
\citet{Zou_13} studied the densest subgraph problem on \emph{uncertain graphs}. 
Uncertain graphs are a generalization of graphs, which can model the uncertainty of the existence of edges (rather than the uncertainty of edge weights). 
More formally, an uncertain graph consists of an unweighted graph $G=(V,E)$ and a function $p:E\rightarrow [0,1]$, 
where $e\in E$ is present with probability $p(e)$ whereas $e\in E$ is absent with probability $1-p(e)$. 
In the problem introduced by \citet{Zou_13}, given an uncertain graph $G$, we are asked to find $S\subseteq V$ that maximizes the expected value of the density. 
\citet{Zou_13} demonstrated that this problem can be reduced to the original densest subgraph problem, 
and designed polynomial-time exact algorithm using the reduction. 
Very recently, \citet{Tsourakakis+19} introduced a novel optimization model, 
which they refer to as the \emph{risk-averse DSD}. 
In this model, given an uncertain graph, 
we are asked to find $S\subseteq V$ that has a large expected value of the density, 
at the same time, has a small risk. 
The risk of $S\subseteq V$ is measured by the probability that $S$ is not dense on a given uncertain graph. 
They showed that the risk-averse DSD can be reduced to \textsc{Neg}-DSD, 
and designed an efficient approximation algorithm based on the reduction. 

In addition to the above uncertain variants, the densest subgraph problem has many other interesting variations. 
In particular, the size-restricted variants have been actively studied~\cite{Andersen_Chellapilla_09,Bhaskara+10,Feige+01,Khuller_Saha_09}. 
For example, in the densest $k$-subgraph problem~\cite{Feige+01}, given an edge-weighted graph $G$ and a positive integer $k$, 
we are asked to find $S\subseteq V$ that maximizes the density $f_w(S)$ (or equivalently $w(S)$) subject to the size constraint $|S|=k$. 
It is known that such a size restriction makes the problem much harder; 
in fact, the densest $k$-subgraph problem is NP-hard and the best known approximation ratio is $\Omega(1/n^{{1/4}+\epsilon})$ for any $\epsilon>0$~\cite{Bhaskara+10}. 
The densest subgraph problem has also been extended to more general graph structures such as 
hypergraphs~\cite{Hu+17,Miyauchi+15} and multilayer networks~\cite{Galimberti+17}. 
Moreover, to cope with the dynamics of real-world graphs and to model the limited computation resources in reality, 
some literature has considered dynamic settings~\cite{Epasto+15,Hu+17,Nasir+17} and streaming settings~\cite{Angel+12,Bahmani+12,Bhattacharya+15,McGregor+15}, respectively. 
The average-degree density itself has also been generalized by 
modifying the numerator~\cite{Mitzenmacher+15,Miyauchi_Kakimura_18,Tsourakakis_15} or the denominator~\cite{Kawase_Miyauchi_18} of $d(S)=\frac{w(S)}{|S|}$, for some specific purposes.

\section{Notation}\label{apx:notation}
We give the summary of notation in Table~\ref{tab:notaiton}.

\begin{table}[t]
    \centering
    \caption{Notation.}
    \label{tab:notaiton}
    \scalebox{0.70}{
    \begin{tabular}{ll}
    \toprule
     Notation & Description  \\
    \midrule
 $G=(V,E,w)$              &Undirected graph\\
 $E(S)=\{\{u,v\} \in E \mid u,v \in S  \}$  &Subset of edges induced by $S \subseteq V$\\
  $G[S]=(S,E(S))$ &Subgraph induced by $S$\\
  $w: E \rightarrow \mathbb{R}_{>0}$ & Expected edge weights\\
 $w(S)=\sum_{e \in E(S)}w_e$ & Sum of edge weights in $E(S)$ \\
 $f_w(S)=w(S)/|S|$          & Degree density for weight $w$ and $S \subseteq V$\\
$\chiEofS \in \{0,1\}^E$     &Indicator vector of $E(S)$\\
${\bf x}_t=(\bm{\chi}_{\sm {$E(S_1)$}},\ldots,\bm{\chi}_{\sm {$E(S_t)$}}) \in \{0,1\}^{E \times t}$     &Sequence of indicator vectors\\
 $(r_1(S_1), \ldots, r_t(S_t)) \in \mathbb{R}^{ t}$  &Sequence of observed rewards\\
$A_{{\bf x}_t}=\sum_{i=1}^t \bm{\chi}_{\sm {$E(S_i)$}} \bm{\chi}_{\sm {$E(S_i)$}}^{\top} + \lambda I $    &  Design matrix\\
$b_{{\bf x}_t}$       & $b_{{\bf x}_t}=\sum_{i=1}^t \bm{\chi}_{\sm {$E(S_i)$}}r_i(S_i) \in \mathbb{R}^E$ \\
$\widehat{w}_t
=A_{{\bf x}_t }^{-1}b_{{\bf x}_t}\in \mathbb{R}^{E}$     & Regularized least-square estimator  \\
 $\|x\|_B=\sqrt{x^\top Bx}$ & Quadratic (ellipsoidal) norm\\
$ N(v) =\{ u \in V \mid \{ u, v\} \in E\}$&Set of neighbors of $v\in V$  \\ 
$ \text{deg}_{\max} =\max_{v \in V}|N(v)|$   & Maximum degree of vertices\\
$p=(p(S))_{S \in {\cal S}} \in {\cal P}$ & Predetermined proportions of queries\\
$\Lambda_p = \sum_{S \in {\cal S}} p(S) \chiEofS \chiEofS^{\top}$ &  Design matrix for $p \in {\cal P}$\\
 $\rho_{\Lambda_p}= \max_{x \in [-1,1]^m} \|x \|^2_{{\Lambda_p}^{-1}}$ & Upper bound of maximal confidence width\\
$N_S(v) =\{u \in S \mid \{u,v\} \in E \}$ & Set of neighboring vertices of $v$ in $G[S]$\\
$E_S(v) = \{\{u, v\}\in E \mid u\in N_S(v) \}$& Set of incident edges to $v$ in $G[S]$\\
 $\hat{X}_{F}(k)= \frac{1}{k} \sum_{s=1}^k X_{F}(s)$ & Empirical mean of weights for $k$ samples\\
  $\widehat{ \mathrm{deg}}_{S,v}(t)= \hat{X}_{E_{S}(v)} \left( T_{E(S)}(t) \right)$ & Empirical degree in $S \subset V$ for $v \in S$\\
 $ \widehat{f}(S_{n-t+1}) =\frac{\frac{1}{2}\sum_{v \in S_{n-t+1} \widehat{ \mathrm{deg}}_{S_{n-t+1}}(v,t)} }{|S_{n-t+1}|}$ & Empirical quality function at round $t$\\
    \bottomrule
    \end{tabular}
    }
\end{table}

\section{LP-based exact algorithm for the densest subgraph problem}\label{sec:exact}

We describe an exact algorithm for the densest subgraph problem based on the following LP and simple rounding procedure proposed by ~\citet{Charikar2000} which we use in our proposed algorithm.

\begin{alignat}{4}\label{prob:lp} 
& &\ \ &\text{maximize} & \  & \sum_{e \in E} w_e x_e   \notag \\
&              &    &\text{subject to} &    &  x_e \geq y_u,\  x_e \geq y_v \ \   & \forall e= \{ u,v\} \in E, \notag \\
&              &    &  &    & \sum_{v \in V}y_v =1, & \notag \\
&              &    & &    & x_e, y_v \geq 0 & \forall e \in E, \forall v \in V. 
\end{alignat}

Let $(x^*, y^*)$ be an optimal solution to the above LP.
For a real number $r \geq 0$,
the algorithm considers a sequence of subsets vertices $S(r)=\{ v \in V \mid y^*_v \geq r\}$
and finds $r^* \in \argmax_{r \in [0,1]}f_w(S(r))$.
Such a $r^*$ can be obtained by simply examining $r=y^*_v$ for each $v \in V$.
Finally, the algorithm outputs $S(r^*)$.
\citet{Charikar2000} proved that the output $S(r^*)$ is an optimal solution to the densest subgraph problem.

\section{Arm allocation strategy}\label{apx:allocation}

In this section, we introduce a possible allocation strategy that can be used in \textsf{DS-Lin} algorithm.
To reduce the number of samples,
good arm allocation strategy makes confidence bound shrinking fast.
We define the G-allocation for a family $\mathcal{S}$ as:
\[
p =\argmin_{p \in \mathcal{P}} \max_{S \subseteq \mathcal{S}}  \|\chiEofS \|^2_{{\Lambda_p}^{-1}}.
\]
 There are existing studies that proposed approximate solutions
 to solve it in the experimental design literature ~\citep{mustapha2010, SAGNOL2013};
we can solve a continuous relaxation of the problem by a projected gradient algorithm when the support size $|\mathcal{S}|$ is not so large.
For details on G-allocation or standard G-optimal design, see~\citet{Soare2014} or see~\citet{pukelsheim2006}.

In general, an algorithm that adaptively changes an arm selection strategy based on the past observations at each round, which is called an {\em adaptive algorithm}, 
is desired because samples should be allocated for comparison of near-optimal arms in order to reduce the number of samples.
On the other hand,
the algorithm that fixes all arm selections before observing any reward is called the {\em static algorithm}.
Although the static algorithm is not able to focus on estimating near-optimal arms,
it can be used to analyze the worst case optimality.
In our text, each arm corresponds to an edge set; it is rare that any set is able to query since the possible choices are exponential.
Therefore, we design a static algorithm \textsf{DS-Lin} for solving Problem~\ref{prob:graph}.
On the other hand,  if we are allowed to query any action as in Problem~\ref{prob:graph_fixbudget}, we can design an adaptive algorithm \textsf{DS-SR}.

\section{Technical lemmas for Theorem~\ref{thm:samplecomplexity}}\label{apx:thm1}




We introduce random event $\cE_t$ which characterizes
the event that the confidence bounds of
any feasible solution $ S \in V$
are valid at round $t$.
 We define a random event $\cE_t$ as follows:
 \begin{alignat}{4}\label{event}
&{\cal E}_t=\{ \forall S \in V \ {\rm and }  \  v \in S, 
  &\quad  \left| w(S)-\widehat{w}_t(S) \right| \leq C_t \| \chiEofS  \|_{\Atin} \}.
\end{alignat}

The following lemma states that event $\cE=\bigcap_{t=1}^{\infty} \cE_t$ occurs with high probability.
\begin{lemma}\label{lemma:highprob}
 The event ${\cal E} $ occurs with probability at least $1-\delta$.
\end{lemma}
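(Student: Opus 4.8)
The plan is to deduce Lemma~\ref{lemma:highprob} directly from Proposition~\ref{propo:yadokori}, which is the adapted Abbasi-Yadkori et al.\ self-normalized bound. First I would observe that the event $\cE_t$ in~\eqref{event} is exactly the statement that inequality~\eqref{ineq:ellipsoid} holds simultaneously for all feasible $S$, with the confidence width $C_t$ defined in~\eqref{confideincebound}. Since $\cE = \bigcap_{t=1}^{\infty} \cE_t$ requires this to hold for all rounds $t$ \emph{and} all subsets $S \subseteq V$ at once, the key point is that Proposition~\ref{propo:yadokori} already delivers a bound that is uniform over both $t \in \{1,2,\ldots\}$ and $S \subseteq V$ with a single failure probability $\delta$. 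Thus the main work is really just to check that the hypotheses of the proposition are met in our setting and to account correctly for the sub-Gaussian parameter.

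The substantive step is verifying the noise scaling. In Proposition~\ref{propo:yadokori} the effective sub-Gaussian constant is $R' = \sqrt{\mathrm{deg}_{\max}}\,R$ rather than $R$. I would justify this as follows: each observed reward $r_t(S_t)$ is a \emph{sum} of the noisy weights of the edges in $E(S_t)$, so the aggregated noise $\eta_t$ is a sum of at most $\mathrm{deg}_{\max}\cdot|S_t|$ per-edge noises — but more carefully, in the regression the relevant per-coordinate noise inflation is governed by the maximum degree, and the sum of up to $\mathrm{deg}_{\max}$ independent $R$-sub-Gaussian variables is $\sqrt{\mathrm{deg}_{\max}}\,R$-sub-Gaussian. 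I would state this as the reason $R'$ appears, so that the per-round sub-Gaussian noise entering the self-normalized martingale bound is $R'$-sub-Gaussian, matching the hypothesis of the cited theorem. Combined with the assumptions that $\|w\|_2 \le L$ and $\lambda > 0$, all hypotheses of Proposition~\ref{propo:yadokori} are satisfied.

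With the hypotheses verified, I would simply invoke Proposition~\ref{propo:yadokori}: with probability at least $1-\delta$, the bound $|w(S) - \widehat{w}_t(S)| \le C_t \|\chiEofS\|_{\Atin}$ holds simultaneously for all $t$ and all $S \subseteq V$. This uniform-over-all-$(t,S)$ statement is precisely the event $\bigcap_{t=1}^{\infty}\cE_t = \cE$, so $\Pr[\cE] \ge 1-\delta$, which is the claim. The only subtlety worth a sentence is that $\cE_t$ is phrased with an inner quantifier ``$\forall S \in V$ and $v \in S$''; I would note that the width bound does not actually depend on the choice of $v$, so that quantifier is vacuous for the inequality itself and $\cE_t$ reduces to the ``$\forall S$'' event covered by the proposition.

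I expect the main obstacle to be the careful justification of the $R' = \sqrt{\mathrm{deg}_{\max}}\,R$ inflation factor, since this is where the structure of blurred-graph feedback (summing edge weights over a queried subset) interacts with the generic linear-bandit bound; everything downstream is a direct citation of Proposition~\ref{propo:yadokori}. The rest is essentially bookkeeping: confirming that the proposition's uniformity over $t$ and $S$ is exactly what $\cE$ asserts.
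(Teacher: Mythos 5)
Your proposal is correct and takes essentially the same route as the paper: the paper omits the proof, noting it is immediate from Proposition~\ref{propo:yadokori} together with union bounds, and you likewise deduce the lemma by invoking that proposition (correctly observing that, as stated, its guarantee is already uniform over all $t$ and all $S \subseteq V$, and that the inner quantifier over $v \in S$ in the definition of $\cE_t$ is vacuous). Your extra discussion of the $R' = \sqrt{\mathrm{deg}_{\max}}\,R$ scaling is verification of a hypothesis the paper simply builds into Proposition~\ref{propo:yadokori}, so it is harmless but not needed for the lemma itself.
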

The proof is omitted since it is straightforward from Proposition~1 and union bounds.

\begin{lemma}\label{lemma:epsilonoptimal}
For a fixed round $t > m$,
assume that $\cE_t$ occurs.
Then, if the algorithm stops at round $t$, the output of the algorithm $S_{\tt OUT}$ satisfies $f_w(S^*)-f_w(S_{\tt OUT})  \leq \epsilon$.
\end{lemma}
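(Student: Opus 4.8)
The plan is to convert the additive weight-confidence guarantee provided by $\cE_t$ into a two-sided bound on the \emph{density}, and then feed the stopping condition~\eqref{stop} into a short chain of inequalities. Under $\cE_t$ we have $|w(S)-\widehat{w}_t(S)|\le C_t\|\chiEofS\|_{\Atin}$ for every $S\subseteq V$; dividing by $|S|$ gives
\begin{equation*}
\bigl|f_w(S)-f_{\widehat{w}_t}(S)\bigr|\le \frac{C_t\|\chiEofS\|_{\Atin}}{|S|}\qquad\text{for all }S\subseteq V.
\end{equation*}
I would apply the upper direction of this bound at $S=S^*$ and the lower direction at $S=\widehat{S}_t=S_{\tt OUT}$.

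The case $S^*=\widehat{S}_t$ is immediate, since then $f_w(S^*)-f_w(S_{\tt OUT})=0\le\epsilon$. So assume $S^*\ne\widehat{S}_t$, in which case $f_{\widehat{w}_t}(S^*)\le\max_{S\ne\widehat{S}_t}f_{\widehat{w}_t}(S)$ because $S^*$ is feasible in that maximisation. Combining the density bound with~\eqref{stop} then yields
\begin{align*}
f_w(S^*)
&\le f_{\widehat{w}_t}(S^*)+\frac{C_t\|\chiEofSstar\|_{\Atin}}{|S^*|}\\
&\le \max_{S\ne\widehat{S}_t}f_{\widehat{w}_t}(S)+\frac{C_t\|\chiEofSstar\|_{\Atin}}{|S^*|}\\
&\le f_{\widehat{w}_t}(\widehat{S}_t)-\frac{C_t\|\chiEofShat\|_{\Atin}}{|\widehat{S}_t|}-\frac{C_tZ_t}{2\alpha}+\epsilon+\frac{C_t\|\chiEofSstar\|_{\Atin}}{|S^*|}.
\end{align*}
Applying the lower density bound at $\widehat{S}_t$ gives $f_{\widehat{w}_t}(\widehat{S}_t)-C_t\|\chiEofShat\|_{\Atin}/|\widehat{S}_t|\le f_w(\widehat{S}_t)=f_w(S_{\tt OUT})$, so that
\begin{equation*}
f_w(S^*)-f_w(S_{\tt OUT})\le\epsilon+\frac{C_t\|\chiEofSstar\|_{\Atin}}{|S^*|}-\frac{C_tZ_t}{2\alpha}.
\end{equation*}
It thus remains to establish the single inequality $\dfrac{\|\chiEofSstar\|_{\Atin}}{|S^*|}\le\dfrac{Z_t}{2\alpha}$, which leaves exactly $f_w(S^*)-f_w(S_{\tt OUT})\le\epsilon$.

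I expect this last inequality — the only place the SDP relaxation enters — to be the main obstacle, since it must be certified from the \emph{approximate} maximiser $\bar{x}$ returned by Algorithm~\ref{alg:sdp} rather than from the intractable $\max_{S\subseteq V}\|\chiEofS\|_{\Atin}$. I would argue it in two moves. First, every indicator $\chiEofS\in\{0,1\}^m\subseteq[-1,1]^m$ is feasible for QP with $Q=\Atin$, so the optimum of QP is at least $\max_{S}\|\chiEofS\|^2_{\Atin}$; together with the $\alpha$-approximation guarantee of Algorithm~\ref{alg:sdp} (so $\bar{x}^\top\Atin\bar{x}\ge\alpha\cdot\mathrm{OPT}_{\mathrm{QP}}$, writing $Z_t=\sqrt{\bar{x}^\top\Atin\bar{x}}$) this yields $\max_{S}\|\chiEofS\|_{\Atin}\le Z_t/\sqrt{\alpha}\le Z_t/\alpha$, using $\alpha\le1$. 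Second, any maximiser of $f_w$ must contain at least one edge — a single vertex has density $0$, whereas an edge has positive density since $w>0$ — hence $|S^*|\ge2$, which supplies the factor $\tfrac12$. Chaining,
\begin{equation*}
\frac{\|\chiEofSstar\|_{\Atin}}{|S^*|}\le\frac12\max_{S\subseteq V}\|\chiEofS\|_{\Atin}\le\frac{Z_t}{2\alpha},
\end{equation*}
which closes the argument. The delicate points to verify are that relaxing the $0/1$ feasible set to $[-1,1]^m$ and replacing $1/\sqrt{\alpha}$ by $1/\alpha$ are genuinely conservative (they are, as the true maximiser is a $0/1$ vector and $\alpha\le1$), and that the density-positivity argument for $|S^*|\ge2$ holds under the standing assumption $w>0$.
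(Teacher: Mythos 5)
Your proof is correct and takes essentially the same approach as the paper's: the paper chains the identical ingredients (event $\cE_t$ applied at $S_{\tt OUT}$ and $S^*$, the stopping condition, feasibility of $S^*$ in the maximum over $S\neq\widehat{S}_t$, and the bound $\frac{C_t Z_t}{2\alpha}\geq\frac{C_t}{2}\max_{S\subseteq V}\|\chiEofS\|_{\Atin}\geq\frac{C_t\|\chiEofSstar\|_{\Atin}}{|S^*|}$), merely starting from $f_w(S_{\tt OUT})$ and bounding downward rather than starting from $f_w(S^*)$ and bounding upward. Your explicit verification of the two points the paper leaves implicit — that the $\alpha$-approximation guarantee of Algorithm~\ref{alg:sdp} really gives $\max_{S\subseteq V}\|\chiEofS\|_{\Atin}\leq Z_t/\alpha$, and that $|S^*|\geq 2$ — is a faithful filling-in of the same argument, not a different one.
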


\begin{proof}


If $S_{\tt OUT}= S^*$, we obviously have the desired result.
Then, we shall assume $S_{\tt OUT} \neq S^*$.
\begin{alignat*}{5}
f_w(S_{\tt OUT}) & \geq f_{w_t}(S_{\tt OUT})-\frac{C_t \| \chiEofSout \|_{\Atin} }{| S_{\tt OUT}| } \\ \notag
&\geq 
\max_{ S \neq S_{\tt OUT} \mid S \subseteq  V}f_{\widehat{w}_t}(S)+ \frac{C_t Z_t}{2 \alpha } -\epsilon \\ \notag
&\geq 
f_{\widehat{w}_t}(S^*)+ \frac{ \max_{x \in [-1,1]^m}\| x \|_{\Atin}}{2 } -\epsilon \\ \notag
&\geq 
f_{\widehat{w}_t}(S^*)+ \frac{ \max_{S \subseteq V} \| \chiEofS \|_{\Atin}}{2 } -\epsilon \\ \notag
&\geq  f_{\widehat{w}_t}(S^*) +\frac{C_t \| \chiEofSstar\|_{\Atin}}{|S^*|}-\varepsilon  \hspace{0.8cm}
\\\notag
&\geq  f_w(S^*)-\varepsilon,
\end{alignat*}
where the first and last inequalities follow from the event $\cE_t$,
and the second inequality follows from the stopping condition,
and the third inequality follows from the definition of $Z_t$ and approximation ratio $\alpha$.
\end{proof}

In \citet{Miyauchi_Takeda_18},
they provided the following lemma, which we also use to prove Theorem~\ref{thm:samplecomplexity}.
\begin{lemma}[\citet{Miyauchi_Takeda_18}, Lemma~2]\label{lemma:miyauchi}
Let $G=(V,E)$ be an undirected graph.
Let $w_1$ and $w_2$ be edge-weight vectors such that $\| w_1-w_2\|_{\infty} \leq \beta$ holds for $\beta>0$.
Then, for any $S \subseteq V$, it holds that
\[
|f_{w_1}(S)-f_{w_2}(S) | \leq \sqrt{\frac{m}{2}} \cdot \beta.
\]
\end{lemma}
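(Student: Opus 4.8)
The plan is to reduce everything to a bound on the single quantity $|E(S)|/|S|$. First I would expand the difference of densities directly from the definition $f_w(S) = w(S)/|S| = \tfrac{1}{|S|}\sum_{e\in E(S)} w(e)$. Since both densities are evaluated on the same vertex set $S$ and hence involve the same edge set $E(S)$, the subtraction leaves only the difference of weights:
\[
f_{w_1}(S) - f_{w_2}(S) = \frac{1}{|S|}\sum_{e \in E(S)} \bigl( w_1(e) - w_2(e) \bigr).
\]
Taking absolute values, applying the triangle inequality, and using the hypothesis $\| w_1 - w_2 \|_\infty \le \beta$ then gives
\[
\bigl| f_{w_1}(S) - f_{w_2}(S) \bigr| \le \frac{1}{|S|}\sum_{e \in E(S)} |w_1(e) - w_2(e)| \le \frac{|E(S)|}{|S|}\,\beta.
\]

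The remaining work is the purely combinatorial claim that $|E(S)|/|S| \le \sqrt{m/2}$ for every $S \subseteq V$. I would derive this from the elementary fact that a simple graph on $|S|$ vertices has at most $\binom{|S|}{2} \le |S|^2/2$ edges. Writing $s = |S|$ and $e_S = |E(S)|$, the inequality $e_S \le s^2/2$ rearranges to $s \ge \sqrt{2 e_S}$, so that
\[
\frac{e_S}{s} \le \frac{e_S}{\sqrt{2 e_S}} = \sqrt{\frac{e_S}{2}} \le \sqrt{\frac{m}{2}},
\]
where the final step uses $e_S = |E(S)| \le m$. Substituting this into the previous display yields the claimed inequality $|f_{w_1}(S) - f_{w_2}(S)| \le \sqrt{m/2}\,\beta$.

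I do not anticipate any serious obstacle, as both ingredients are elementary. The one point requiring a moment of care is that one must bound the ratio $|E(S)|/|S|$ rather than $|E(S)|$ in isolation: the crude estimate $|E(S)| \le m$ applied before dividing by $|S|$ would be far too weak to produce a $\sqrt{m}$ dependence. The key observation is to exploit the quadratic relation $|E(S)| \le |S|^2/2$ to cancel one power of $|S|$, thereby turning the factor $|E(S)|$ into $\sqrt{|E(S)|}$; once this is noticed, the argument closes immediately.
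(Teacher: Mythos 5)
Your proof is correct. Note that the paper itself does not prove this lemma at all---it is imported verbatim from \citet{Miyauchi_Takeda_18} (their Lemma~2)---so there is no internal proof to compare against; your argument is the natural one and matches the standard derivation of this bound. The two ingredients (pulling out $\|w_1-w_2\|_\infty$ to get the factor $|E(S)|/|S|$, then using $|E(S)|\le |S|^2/2$ together with $|E(S)|\le m$ to conclude $|E(S)|/|S|\le\sqrt{|E(S)|/2}\le\sqrt{m/2}$) are exactly what is needed, and your observation that one must cancel a power of $|S|$ rather than crudely bounding $|E(S)|\le m$ first is the right key point. The only cosmetic caveat is the degenerate case $E(S)=\emptyset$, where your intermediate expression $|E(S)|/\sqrt{2|E(S)|}$ is ill-defined but the claimed inequality holds trivially since the left-hand side is zero.
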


\section{Proof of Theorem~\ref{thm:samplecomplexity}}\label{apx:proof_theorem1}

\begin{proof}
We know that the event $\cE$ holds with probability at least $1-\delta$ from Lemma~\ref{lemma:highprob}.
Therefore, we only need to prove that, under event $\cE$,
the algorithm returns a set whose density is at least $f_w(S^*)-\epsilon$ and provide an upper bound of number of queries.
From Lemma~\ref{lemma:epsilonoptimal},
on the event $\cE$, the algorithm outputs $S_{\tt OUT} \subseteq V$ that satisfies $f_w(S^*)-f_w(S_{\tt OUT}) \leq \epsilon$.

Next, we focus on bounding the number of queries.
Recall that Algorithm~\ref{alg:main} employs a  stopping condition:
\begin{align}
&f_{\widehat{w}_t}(\widehat{S}_t)- \frac{C_t \|\chiEofShat \|}{|\widehat{S}_t|} \geq \max_{ S \neq \widehat{S}_t \mid S \subseteq  V}f_{\widehat{w}_t}(S)+ \frac{C_t Z_t}{2 \alpha } -\epsilon,
\end{align}
where $Z_t$ denotes the objective of the  approximate solution to P1.  
A sufficient condition of the stopping condition is that for $S^*$ and for $t >m$,
\begin{align}
    f_{\widehat{w}_t}(S^*)- \frac{C_t \| \chiEofSstar \|}{ | S^* |} \geq  \max_{S \neq S^* \mid S \subseteq V }f_{\widehat{w}_t}(S) + \frac{C_t Z_t}{2\alpha}-\epsilon,
\end{align}
Since $Z_t \leq \max_{x \in [-1,1]^m}\|x \|_{\Atin}  $ and $\| \chiEofSstar \| \leq \max_{x \in [-1,1]^m}\|x \|_{\Atin}$,
the following inequality is also a sufficient condition to stop:
\begin{align}\label{ineq:suf1}
        f_{\widehat{w}_t}(S^*)- \frac{C_t \max_{x \in [-1,1]^m}\|x \|_{\Atin} }{ | S^* |} \geq  \max_{S \neq S^* \mid S \subseteq V }f_{\widehat{w}_t}(S) + \frac{ C_t \max_{x \in [-1,1]^m}\|x \|_{\Atin} }{2\alpha}-\epsilon.
\end{align}
Recall that  $T_t(S)$ be the number of times that $S \subseteq {\cal S}$ is queried before $t$-th round in the algorithm.
We denote $p_{\bf{x}_t} $ by $ p_{\bf{x}_t} =(T_t(S)/t)_{S \in \mathcal{S}}$.
From the above definitions, the design matrix is $\At= t \Lambda_{p_{\bf{x}_t}}$. 
Recall that $\rho_{\Lambda_p}=\max_{x \in [-1,1]^m}\|x \|^2_{ {\Lambda^{-1}_p}}$,
and let $\rho_{\Lambda_{p_{\bf{x}_t}}}=\max_{x \in [-1,1]^m}\|x \|^2_{{\Lambda^{-1}_{p_{\bf{x}_t}}}}$.
From the fact that $ \lim_{t \to \infty}\Lambda_{p_{\bf{x}_t}} =\Lambda_p $,
for sufficiently large $t \gg m$  we have that $ |\rho_{\Lambda_p} -\rho_{\Lambda_{p_{\bf{x}_t}}}| \leq \epsilon$ with probability at least $1-\frac{\delta}{2}$ where $\delta \in (0,1)$ and $\epsilon>0$.
Let $ \bar{S}=\argmax_{S \neq S^* \mid S \subseteq V }f_{\widehat{w}_t}(S)$.
Then, \eqref{ineq:suf1} is rewritten as follows:
\begin{align}\label{ineq:suf2}
     f_{\widehat{w}_t}(S^*)-f_{\widehat{w}_t}(\bar{S}) \geq \left( \frac{1}{|S^*|}+ \frac{1}{2\alpha}  \right) C_t \sqrt{ \frac{\rho_{\Lambda_p} + \epsilon}{t} } -\epsilon.
\end{align}

Next, we show the following inequality.
\begin{align}\label{ineq:rhs_lower}
   f_{\widehat{w}_t}(S^*)-f_{\widehat{w}_t}(\bar{S}) \geq  \Delta_{\min}-\sqrt{2m} C_t \sqrt{ \frac{\rho_{\Lambda_p}+\epsilon}{t} }.
\end{align}
From Proposition~\ref{propo:yadokori}, with probability $1-\frac{\delta}{2}$,
we have  $\| w- \widehat{w}_t  \|_{\infty} \leq C_t \max_{S} \| \chiEofS \|_{\Atin}$.
From Lemma~\ref{lemma:miyauchi},
we see that $f_{\widehat{w}_t}(S^*)-f_w(S^*) \geq - \sqrt{\frac{m}{2}} C_t \max_{S \subseteq V} \| \chiEofS\|_{\Atin}$ and $f_w(\bar{S})-f_{\widehat{w}_t}(\bar{S}) \geq - \sqrt{\frac{m}{2}} C_t \max_{S \subseteq V} \| \chiEofS\|_{\Atin}$.
Therefore,
for sufficiently large $t \gg m$ such that $ |\rho_{\Lambda_p} -\rho_{\Lambda_{p_{\bf{x}_t}}}| \leq \epsilon$ with probability at least $1-\frac{\delta}{2}$, we have that
\begin{alignat*}{4} 
f_{\widehat{w}_t}(S^*)-f_{\widehat{w}_t}(\bar{S}) & \geq f_w(S^*)-f_w(\bar{S})-\sqrt{2m} C_t  \max_{S \subseteq V} \| \chiEofS \|_{\Atin} \\
& \geq f_w(S^*)-f_w(\bar{S})-\sqrt{2m} C_t  \max_{x \in \{0,1\}^m } \| x \|_{\Atin} \\
& = f_w(S^*)-f_w(\bar{S})-\sqrt{2m} C_t \sqrt{ \frac{\rho_{\Lambda_{p_{\bf{x}_t}}}}{t} }\\ 
& \geq f_w(S^*)-f_w(\bar{S})-\sqrt{2m} C_t \sqrt{ \frac{\rho_{\Lambda_p}+\epsilon}{t} }\\
& \geq  \Delta_{\min}-\sqrt{2m} C_t \sqrt{ \frac{\rho_{\Lambda_p}+\epsilon}{t} }.
\end{alignat*}
Then, we obtain \eqref{ineq:rhs_lower}.
Combining \eqref{ineq:suf2} and \eqref{ineq:rhs_lower},
we see that the following inequality is a sufficient condition to stop:
\begin{align}\label{ineq:suf3}
    \Delta_{\min}-\sqrt{2m} C_t \sqrt{ \frac{\rho_{\Lambda_p}+\epsilon}{t} } \geq \left( \frac{1}{|S^*|}+ \frac{1}{2\alpha}  \right) C_t \sqrt{ \frac{\rho_{\Lambda_p}+\epsilon}{t} } -\epsilon.
\end{align}
Solving \eqref{ineq:suf3} with respect to $t$,
we obtain 
\begin{align}
    t \geq (\sqrt{2m} + |S^*|^{-1} + 2\alpha^{-1})^2 C_t^ 2 \He,
\end{align}
where recall that $\He$ is defined as 
\begin{align*}
    \He= \frac{ \rho_{\Lambda_p}+\epsilon }{ (\Delta_{\min} + \epsilon)^2}.
\end{align*}

Therefore, from the above, we obtain $ t \geq  \left(\sqrt{2m} + |S^*|^{-1} + {2\alpha}^{-1} \right)^2 C_t^ 2 \He$ as a sufficient condition to stop.
Let $\tau >m$ be the stopping time of the algorithm.
From the above discussion and $|S^*| \geq 2$,
we see that 
\begin{align}\label{ineq:uppre_tau}
    \tau \leq  \left(\sqrt{2m} + \frac{\alpha+1}{2}\right)^2 C_\tau^ 2 \He.
\end{align}
Now we bound $C_{\tau}$ in~\eqref{ineq:uppre_tau}.
We have $\det(\Atau) \leq (\lambda + \tau )^m$,
 which is obtained by Lemma~10 in \citet{yadkori11}.
Then,
we obtain the following inequality in the similar manner in Theorem 2 in \citet{Xu2018}:
\begin{alignat}{4}\label{ineq:ctbound}
C_{\tau}& \leq  R' \sqrt{2 \log \frac{\det(\Atau)^{\frac{1}{2}} \det(\lambda  I)^{-\frac{1}{2}}}{\delta} } +\lambda^{\frac{1}{2}}L \notag \\
& \leq  R' \sqrt{2 \log \frac{1}{\delta}       
+ m \log \left( 1+ \frac{\tau}{\lambda}  \right)
}+\lambda^{\frac{1}{2}}L.
\end{alignat}

Using~\eqref{ineq:ctbound}, we give an upper bound of $\tau$.
We also use a similar proof strategy as in \citet{Xu2018}.
First, let us consider the case $\lambda>4 m(\sqrt{m}+\sqrt{2})^2  R'^2 \He$, where recall that $R'=\sqrt{ \mathrm{deg}_{\max}}R$.
Using the facts that $(a+b)^2 \leq 2(a^2+b^2)$ for $a,b>0$ and $\log (1+x) \leq x$,
we have 
\begin{alignat*}{4}
\tau & \leq \left(\sqrt{2m} + \frac{\alpha+1}{2}\right)^2 C_{\tau}^2 \He 
 \leq 2\left(\sqrt{2m} + \frac{\alpha+1}{2}\right)^2\He \left(  4 R'^2  \log \frac{1}{\delta} +\frac{R'^2 m }{\lambda}\tau  +\lambda L^2 \right).
\end{alignat*}
Thus, we obtain
\begin{alignat*}{4}
\tau & \leq \left( 1-2\left(\sqrt{2m} + \frac{\alpha+1}{2}\right)^2 \frac{R'^2 m \He}{\lambda} \right)
^{-1} ( 8 \He R'^2 \log 1/ \delta +2\He \lambda L^2 ) \\
& \leq 2 (8 \He R'^2 \log 1/ \delta +2\He \lambda L^2 ),
\end{alignat*}
where the last inequality holds from $\lambda>4 m(\sqrt{m}+\sqrt{2})^2  R'^2 \He$ and $\alpha=\sqrt{\frac{4}{7}}$.
Therefore, in this case,
we obtain 
\[
\tau=O \left(\left(\mathrm{deg}^2_{\max} R^2 \log \frac{1}{\delta}+ \lambda L^2 \right)  \He \right).
\]

Second, we consider $\lambda \leq  \frac{R'^2}{L^2} \log \left(\frac{1}{\delta}\right)$.
From this bound and~\eqref{ineq:ctbound},
we have
\begin{alignat*}{4}
C_{\tau}& 
\leq  2R' \sqrt{ 2 \log \frac{1}{\delta} + m \log \left( 1+ \frac{ \tau}{\lambda} \right) }.
\end{alignat*}
Let $V(m, \alpha)=\left(\sqrt{2m} + \frac{\alpha+1}{2}\right)^2$.
Therefore,
we obtain
\begin{alignat*}{4}
\tau & \leq V(m, \alpha) C_{\tau}^2\He 
\leq 4 V(m, \alpha) R'^2 \left( 2 \log \frac{1} {\delta}   + m \log \left( 1+\frac{ \tau}{\lambda}  \right)\right) \He.
\end{alignat*}
Let $N=8  V(m, \alpha)R'^2 \log  \frac{1} {\delta}  \He$ and let $\tau'$ be a parameter satisfying:
\begin{align}\label{parameta_tauprime}
    \tau =  N
+ 4V(m, \alpha)R'^2 m \log \left( 1+\frac{ \tau'}{\lambda}  \right)\He.
\end{align}
It is easy to see $\tau'  \leq \tau$.
Then, we have 
\begin{alignat*}{3}
\tau'  &\leq \tau =N+ 4V(m, \alpha)  R'^2 m \log \left( 1+\frac{ \tau'}{\lambda}  \right) \He\\
& \leq N+ 4 V(m, \alpha)R'^2 m  \sqrt{\frac{\tau'}{\lambda}} \He,
\end{alignat*}
where the second inequality follows from the fact $\log (1+x) \leq \sqrt{x}$.
Solving this quadratic inequality for $\sqrt{\tau'}$,
 we have
 \begin{alignat}{4}\label{tauprimebound}
 \sqrt{\tau'}& \leq \frac{2 V(m, \alpha)R'^2 m \He}{\sqrt{\lambda}}+
 \sqrt{ \frac{4V(m, \alpha)^2  R'^4 m^2 \He^2 }{\lambda} + N} \notag\\
 &\leq 2 \sqrt{ \frac{4 V(m, \alpha)^2 R'^4m^2 \He^2}{\lambda} +N}.
\end{alignat}
Let $M=2 \sqrt{ \frac{4 V(m, \alpha)^2R'^4m^2 \He^2 }{\lambda} +N}$.
We can give an upper bound $\tau$ by~\eqref{parameta_tauprime} and~\eqref{tauprimebound} as follows:
\begin{align*}
    \tau \leq 8 V(m, \alpha)R'^2 \He \log \frac{1}{\delta} + 4 \He V(m, \alpha)R'^2 m \log \left( 1+\frac{M}{\lambda} \right).
\end{align*}

Note that $\log M= O\left( \log \left( R'^4m^4 \He^2 + m R'^2 \log \frac{1}{\delta}\He   \right) \right)$ since $ V(m, \alpha)=O(m)$.
From the above and $R'=\sqrt{\mathrm{deg}_{\max}}R$,
we obtain
\begin{alignat*}{4}
\tau =  & O \left( m \mathrm{deg}_{\max}R^2 \He \log \frac{1}{\delta} +m\mathrm{deg}_{\max} R^2 \He  \log \left(\mathrm{deg}_{\max}^2 R^4 m^4 \He^2 + m \mathrm{deg}_{\max}R^2 \He  \log \frac{1}{\delta} \right) \right)\\
& = O \left( m \mathrm{deg}_{\max}R^2 \He \log \frac{1}{\delta}+C(\He, \delta) \right) 
\end{alignat*}
where 
\begin{alignat*}{4}
    & C(\He, \delta)=m\mathrm{deg}_{\max} R^2 \He  \log \left(\mathrm{deg}_{\max}^2 R^4 m^4 \He^2 + m \mathrm{deg}_{\max}R^2 \He  \log \frac{1}{\delta} \right) \\
    & = O\left( m\mathrm{deg}_{\max} R^2 \He  \log \left(\mathrm{deg}_{\max}R m \He \log \frac{1}{\delta} \right) \right).
\end{alignat*}
\end{proof}

\section{Technical lemmas for Theorem~\ref{thm:fixbudget}}

First we introduce a standard concentration inequality of sub-Gaussian random variables~\cite{Chen2014}.
\begin{lemma}[\citet{Chen2014}, Lemma~6]\label{lemma:hoef}
Let $X_1, \ldots, X_k$ be $k$ independent random variables such that, for each $i \in [k]$, random variable $X_i- \mathbb{E}[X_i]$ is R-sub-Gaussian distributed, i.e., $\forall a \in \mathbb{R}$, $\mathbb{E}[\exp(aX_i-a \mathbb{E}[X_i])] \leq \exp(R^2a^2)/2$. 
Let $\bar{X}= \frac{1}{k} \sum_{i=1}^k X_i$ denote the average of these random variables.
Then, for any $\lambda >0$, we have
\[
\Pr \left[ |\bar{X}- \mathbb{E}[\bar{X}] | \geq \lambda \right] \leq 2 \exp \left( - \frac{k\lambda^2}{2R^2} \right).
\]

\end{lemma}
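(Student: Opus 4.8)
The plan is to prove this via the standard Chernoff (exponential-moment) method for sub-Gaussian random variables, establishing the upper tail first and then symmetrizing. Writing $Z_i = X_i - \mathbb{E}[X_i]$, the centered average is $\bar{X} - \mathbb{E}[\bar{X}] = \frac{1}{k}\sum_{i=1}^k Z_i$, and the key structural fact I will use is that, because the $X_i$ are independent, the moment generating function of this sum factorizes. The overall strategy is: (i) apply Markov's inequality to the exponentiated deviation; (ii) factorize the MGF and control each factor by the sub-Gaussian hypothesis; (iii) optimize the free exponential parameter; and (iv) apply the same argument to $-Z_i$ and take a union bound.

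For the upper tail, fix $\lambda > 0$ and let $s > 0$ be a free parameter. By the monotonicity of $x \mapsto e^{sx}$ together with Markov's inequality,
\[
\Pr\!\left[\bar{X} - \mathbb{E}[\bar{X}] \geq \lambda\right]
\leq e^{-s\lambda}\,\mathbb{E}\!\left[\exp\!\left(\tfrac{s}{k}\sum_{i=1}^k Z_i\right)\right].
\]
Independence lets me rewrite the expectation as $\prod_{i=1}^k \mathbb{E}[\exp((s/k) Z_i)]$, and the sub-Gaussian hypothesis applied with $a = s/k$ bounds each factor by $\exp(R^2 s^2/(2k^2))$. Multiplying the $k$ factors gives $\exp(R^2 s^2/(2k))$, so the tail is at most $\exp(-s\lambda + R^2 s^2/(2k))$. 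Minimizing the quadratic exponent over $s$ yields $s = k\lambda/R^2$, and substituting back collapses the bound to $\exp(-k\lambda^2/(2R^2))$.

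Finally I would repeat the identical computation with $Z_i$ replaced by $-Z_i$ (which is sub-Gaussian with the same parameter, since the hypothesis holds for all $a \in \mathbb{R}$) to bound $\Pr[\bar{X} - \mathbb{E}[\bar{X}] \leq -\lambda]$ by the same quantity, and then combine the two one-sided estimates by a union bound to obtain the factor of $2$ in the stated two-sided inequality. I will read the hypothesis with the standard normalization $\mathbb{E}[e^{a Z_i}] \leq \exp(R^2 a^2/2)$ used earlier in the paper, so that the constants match the claimed $2\exp(-k\lambda^2/(2R^2))$.

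There is no genuinely hard step here: the result is a textbook consequence of the Chernoff method, and the lemma is invoked only as a tool elsewhere. The one point requiring care is the bookkeeping of the sub-Gaussian parameter under averaging --- essentially the observation that $\bar{X}$ is $(R/\sqrt{k})$-sub-Gaussian rather than $R$-sub-Gaussian --- which is exactly what produces the improving-in-$k$ exponent $-k\lambda^2/(2R^2)$. Tracking the factor of $1/k$ (and not $1/k^2$) correctly after multiplying the $k$ MGF factors is the one place where an arithmetic slip could creep in, so I would verify that the $k$ in the exponent of each factor cancels against the $k$ factors in the product to leave a single power of $k$ in the numerator.
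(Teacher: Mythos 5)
Your proof is correct. Note that the paper itself offers no proof of this lemma---it is imported verbatim from \citet{Chen2014} (their Lemma~6)---so there is nothing to diverge from: your Chernoff-method argument (Markov on the exponentiated deviation, factorization of the MGF by independence, optimization at $s = k\lambda/R^2$, symmetrization plus union bound for the factor of $2$) is the standard derivation behind the cited result, and your bookkeeping of the sub-Gaussian parameter under averaging is right. You were also correct to read the hypothesis as $\mathbb{E}[\exp(aZ_i)] \leq \exp(R^2a^2/2)$ rather than the literal $\exp(R^2a^2)/2$ in the statement, which is a typo: the latter normalization is what the paper assumes in Section~2 and is what makes the claimed constant $2\exp(-k\lambda^2/(2R^2))$ come out exactly.
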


For $S\subseteq V$, $v\in S$, and expected weight $w$, 
we denote by $\mathrm{deg}^*_S(v)$ the weighted degree of $v\in S$ on $G[S]$ in terms of the true edge weight $w$.
We show the following lemma used for analysis of Algorithm~\ref{alg:peeling_approxx2}.
\begin{lemma}\label{lemma:fix_budget_lemma}
Given an phase $t \in \{1,\ldots,n-1\}$, 
we define random event 
\begin{align}
    \mathcal{E}_t' = \left\{ \forall v \in S_{n-t+1}, \left| \mathrm{deg}^*_{S_{n-t+1}}(v) - \widehat{\mathrm{deg}}_{S_{n-t+1}}(v,t) \right| \leq \epsilon \right\}.
\end{align}
Then, we have 
\begin{align}
 \Pr   \left[  \bigcap_{t=1}^{n-1} \mathcal{E}_t' \right] \geq 1- \frac{2 \mathrm{deg}_{\max}2^n R^2}{\epsilon^2}(n+1)^3 \exp \left( - \frac{(T-\sum_{i=1}^{n+1}i) \epsilon^2 }{4 n^2 \mathrm{deg}_{\max} \tilde{ \log }(n-1) } \right).
\end{align}

\end{lemma}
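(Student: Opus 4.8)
The plan is to bound the complement $\bigcup_{t=1}^{n-1}\neg\mathcal{E}_t'$ by a union bound and then apply the sub-Gaussian concentration inequality of Lemma~\ref{lemma:hoef} to each resulting event. The essential difficulty, which I expect to be the main obstacle, is that the active sets $S_{n-t+1}$ and the peeled vertices $v_t$ are \emph{random}, since they depend on past observations, and samples are moreover reused across phases through the neighbor / non-neighbor split of Algorithm~\ref{alg:sampling}. I would dispose of this adaptivity by first observing that the \emph{number} of samples forming each estimate is deterministic: since exactly one vertex is removed per phase, $|S_{n-t+1}|=n-t+1$ is fixed, so $\tilde{T_t}$, $T'_t$ and $\tau_t$ are deterministic, and telescoping $\tau_i=T'_i-T'_{i-1}$ shows that in either branch of Algorithm~\ref{alg:sampling} the quantity $\widehat{\mathrm{deg}}_{S_{n-t+1}}(v,t)$ is the empirical mean of exactly $T'_t=\sum_{i=1}^{t}\tau_i$ i.i.d.\ draws. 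In the non-neighbor branch this is legitimate because deleting a non-neighbor of $v$ leaves $E_{S_{n-t+1}}(v)=E_{S_{n-t+2}}(v)$ unchanged, so the reused samples estimate the same mean $\mathrm{deg}^*_{S_{n-t+1}}(v)$.

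With the sample counts deterministic, I would decouple the randomness of the active set by taking a union bound over \emph{all} candidate subsets. For a fixed phase $t$, a fixed $S\subseteq V$ with $|S|=n-t+1$, and $v\in S$, the estimate $\widehat{\mathrm{deg}}_{S}(v,t)$ is an average of $T'_t$ independent observations of $X_{E_S(v)}$, whose mean is $\mathrm{deg}^*_{S}(v)$ and whose noise is a sum of at most $\mathrm{deg}_{\max}$ independent $R$-sub-Gaussian noises, hence $\sqrt{\mathrm{deg}_{\max}}R$-sub-Gaussian. Applying Lemma~\ref{lemma:hoef} with this sub-Gaussian parameter and deviation $\epsilon$ gives
\begin{align*}
\Pr\!\left[\left|\widehat{\mathrm{deg}}_{S}(v,t)-\mathrm{deg}^*_{S}(v)\right|\ge\epsilon\right]\le 2\exp\!\left(-\frac{T'_t\,\epsilon^2}{2\,\mathrm{deg}_{\max}R^2}\right).
\end{align*}
Summing over $v\in S$, over all $S$ of the relevant size, and over the $n-1$ phases yields the $2^n$ subset factor together with the polynomial-in-$n$ counting factors.

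It then remains to lower bound the exponent uniformly in $t$. Unwinding the definitions, $T'_t\ge \tilde{T_t}/(2|S_{n-t+1}|)$ and $\tilde{T_t}\ge (T-\sum_{i=1}^{n+1}i)/(\tilde{\log}(n-1)(n-t))$, and since $(n-t+1)(n-t)\le n^2$ this gives $T'_t\ge (T-\sum_{i=1}^{n+1}i)/(2n^2\tilde{\log}(n-1))$. Substituting this into the exponent above replaces $T'_t$ by its smallest value and reproduces exactly the claimed exponent $-(T-\sum_{i=1}^{n+1}i)\epsilon^2/(4n^2\mathrm{deg}_{\max}R^2\tilde{\log}(n-1))$. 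Collecting the constant $2$ from Lemma~\ref{lemma:hoef}, the $2^n$ from the union over subsets, and the remaining polynomial counting factors (bounded crudely by $(n+1)^3\mathrm{deg}_{\max}R^2/\epsilon^2$) into a single prefactor gives $C_{G,\epsilon}$, completing the argument; the only delicate point is the first paragraph's verification that, conditioned on a fixed realized peeling trajectory, each estimate is a clean empirical mean of the correct i.i.d.\ samples of the correct incident-edge set, after which the rest is routine counting and algebra.
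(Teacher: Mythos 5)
Your proposal follows essentially the same route as the paper's proof: pre-drawn sample sequences for every candidate edge set to neutralize the adaptivity of the peeling trajectory, a union bound over all subsets $S\subseteq V$, vertices $v\in S$, and phases, the sub-Gaussian concentration of Lemma~\ref{lemma:hoef} with parameter $\sqrt{\mathrm{deg}_{\max}}R$, and the uniform lower bound $T'_t\geq (T-\sum_{i=1}^{n+1}i)/(2n^2\tilde{\log}(n-1))$, which reproduces the claimed exponent. Your preliminary observations are also correct and slightly sharper than what the paper makes explicit: since exactly one vertex is removed per phase, the counts $\tilde{T}_t,T'_t,\tau_t$ are deterministic, and in both branches of Algorithm~\ref{alg:sampling} the estimate $\widehat{\mathrm{deg}}_{S_{n-t+1}}(v,t)$ is an empirical mean of exactly $T'_t$ i.i.d.\ draws of the current incident-edge set (in the non-neighbor branch because $E_{S_{n-t+1}}(v)=E_{S_{n-t+2}}(v)$).

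There is, however, one concrete step that fails: the final absorption of your counting factors into the stated prefactor. Your union bound produces a prefactor of order $n^2(n+1)2^n$, which is independent of $\epsilon$, $\mathrm{deg}_{\max}$, and $R$; your claim that this is ``bounded crudely by $(n+1)^3\mathrm{deg}_{\max}R^2/\epsilon^2$'' holds only when $\epsilon\lesssim R\sqrt{\mathrm{deg}_{\max}}$ and is false for larger $\epsilon$, so in that regime your derivation yields a bound strictly weaker than the one stated, and the lemma as stated is not established for all $\epsilon>0$. The factor $\mathrm{deg}_{\max}R^2/\epsilon^2$ in $C_{G,\epsilon}$ is not counting slack in the paper: it arises because the paper bounds the deviation probability by an additional union over all possible sample counts $k\geq k'$ and sums the geometric series, using $e^x-1\geq x$ to get $\sum_{k\geq k'}2\exp\bigl(-\tfrac{k\epsilon^2}{2\mathrm{deg}_{\max}R^2}\bigr)\leq \tfrac{4\mathrm{deg}_{\max}R^2}{\epsilon^2}\exp\bigl(-\tfrac{k'\epsilon^2}{2\mathrm{deg}_{\max}R^2}\bigr)$. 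The simplest repair is to insert this step into your argument (it is valid even though the count is deterministic, since the event at count $T'_t$ is one member of the union), after which the rest goes through verbatim and yields exactly the stated constant. Alternatively, your argument proves the lemma with the prefactor replaced by $O(n^2(n+1)2^n)$, which is incomparable to the stated $C_{G,\epsilon}$ and would require restating the lemma and Theorem~\ref{thm:fixbudget}. (A side remark: both your derivation and the paper's proof produce the exponent with denominator $4n^2\mathrm{deg}_{\max}R^2\tilde{\log}(n-1)$; the missing $R^2$ in the lemma statement is a typo.)
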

\begin{proof}

For any $S \subseteq V$, and $v \in S$,
recall that $X_{E_S(v)}(i)$ is $i$-th observation of edge-weights $E_S(v)$
for $i \in [k]$.
Then,
$X_{E_S(v)}(i)- \mathbb{E}[X_{E_S(v)}(i)]$ follows 
a $(\sqrt{|E_S(v)|}$ $R)$-sub-Gaussian
distribution.
We can assume that the sequence of weights for each subset of edges is drawn before the beginning of the game.
Thus $\hat{X}_{E_S(v)}(k)$ is well defined even if $E_S(v)$ has not been actually sampled $k$ times.
Therefore, from Lemma~\ref{lemma:hoef},  for any $\epsilon >0$
we have that 
\begin{align}\label{ineq:hoeff_graph}
\Pr \left[  \left| \mathbb{E}[{\hat{X}_{E_S(v)}(k)}]- \hat{X}_{E_S(v)}(k)\right|  \geq \epsilon \right] \leq 2 \exp \left( - \frac{k\epsilon ^2}{2 |E_S(v)|R^2} \right) \leq  2 \exp \left( - \frac{k\epsilon ^2}{2  \mathrm{deg}_{\max}R^2} \right) .    
\end{align}


  
  Fix $t \in \{ 1,\ldots, n-1 \}$ and fix a vertex $v \in S_{n-t+1}$ in a phase $t$.
If $|N_{S_{n-t+1}}(v)| =0$, it is obvious that $ \mathrm{deg}_{S_{n-t+1}}(v) =\widehat{\mathrm{deg}}_{S_{n-t+1}}(v) =0$.
Therefore, we will consider a vertex $v \in S_{n-t+1}$ such that $|N_{S_{n-t+1}}(v)| \geq 1$ in the rest of the proof.
  By the definition of $\tilde{T}_t$ for $t \in [ 1,2, \ldots, n-1]$,
  we have \[
  \tilde{T}_t \geq \frac{T-\sum_{i=1}^{n+1}i}{(n-t) \tilde{\log}(n-1)} \geq \frac{T-\sum_{i=1}^{n+1}i}{n \tilde{\log}(n-1)}.
  \]
 Then for $T_{E_{S_{n-t+1}}(v)}(t)$, we have:
\begin{align}\label{ineq:times}
T_{E_{S_{n-t+1}}(v)}(t)  = \sum_{i=1}^t \tau_i
=\sum_{i=1}^t (T_i'- T'_{i-1}) 
= T_t'  
\geq \frac{\tilde{T}_t}{2|S_{n-t+1}|}
\geq  \frac{T-\sum_{i=1}^{n+1}i}{ 2n^2\tilde{\log}(n-1)}.
\end{align}

Let $k'$ be the RHS of ~\eqref{ineq:times}, i.e. $k' =\frac{T-\sum_{i=1}^{n+1}i}{ 2n^2\tilde{\log}(n-1)}$.
For $v \in S_{n-t+1}$, we have
\begin{alignat}{4}\label{ineq:deg}
 &\Pr \left[  \left| \mathrm{deg}^*_{S_{n-t+1}}(v) - \widehat{\mathrm{deg}}_{S_{n-t+1}}(v,t) \right| \geq \epsilon \right] \notag \\
  & =\Pr \left[  \left| \mathbb{E}[{\hat{X}_{E_{S_{n-t+1}}(v)}(T_{E_{S_{n-t+1}}(v)}(t))}]- \hat{X}_{E_{S_{n-t+1}}(v)}(T_{E_{S_{n-t+1}}(v)}(t))\right|  \geq \epsilon \right] \notag \\
  &\leq \Pr \left[  \exists S \subseteq V, u \in S \middle| \left| \mathbb{E}[{\hat{X}_{E_{S}(u)}(T_{E_{S_{n-t+1}}(v)}(t))}]- \hat{X}_{E_{S}(u)}(T_{E_{S_{n-t+1}}(v)}(t))\right| \geq \epsilon  \right]  \notag \\
&\leq \sum_{S \in 2^V}  \sum_{ u \in S}　\sum_{k=k'}^{\infty} \Pr \left[ \left| \mathbb{E}[{\hat{X}_{E_{S}(u)}(k)}]- \hat{X}_{E_{S}(u)}(k)\right| \geq \epsilon  \right]  \notag \\
& \leq \sum_{S \in 2^V}  \sum_{ u \in S}　\sum_{k=k'}^{\infty} 2 \exp \left( - \frac{k\epsilon ^2}{2  \mathrm{deg}_{\max}R^2} \right) \notag \\
& = \sum_{S \in 2^V}  \sum_{ u \in S}　\frac{ 2 \exp \left( - \frac{k'\epsilon ^2}{2  \mathrm{deg}_{\max}R^2} \right) }{\exp \left({\frac{\epsilon^2}{2\mathrm{deg}_{\max}R^2}} \right)-1}\notag \\
& \leq  \sum_{S \in 2^V}  \sum_{ u \in S}　\frac{ 2 \exp \left( - \frac{k' \epsilon ^2}{2  \mathrm{deg}_{\max}R^2} \right) }{ {\frac{\epsilon^2}{2\mathrm{deg}_{\max}R^2}}}\notag \\
 & = \sum_{S \in 2^V}  \sum_{u \in S}  \frac{4 \mathrm{deg}_{\max}R^2}{\epsilon^2} \exp \left( - \frac{(T-\sum_{i=1}^{n+1}i) \epsilon^2 }{4 n^2 \mathrm{deg}_{\max}R^2 \tilde{ \log }(n-1)  } \right) \notag \\
&  \leq  \frac{4 \mathrm{deg}_{\max}2^n n  R^2}{\epsilon^2} \exp \left( - \frac{(T-\sum_{i=1}^{n+1}i) \epsilon^2 }{4 n^2 \mathrm{deg}_{\max}R^2  \tilde{ \log }(n-1) } \right),
\end{alignat}
where
the third inequality follows by \eqref{ineq:hoeff_graph} and
the fourth inequality follows by
$\mathrm{e}^{-x} \geq 1-x$.


Now using ~\eqref{ineq:deg} and taking a union bound for all $t \in \{1, \ldots n-1 \}$ and all $v \in S_{n-t+1}$,
we obtain
\begin{alignat*}{4}
    \Pr   \left[  \bigcap_{t=1}^{n-1} \mathcal{E}_t' \right]= & 1- \Pr \left[ \exists t \in \{1, \ldots n-1 \}, v \in S_{n-t+1} \middle| \left| \mathrm{deg}^*_{S_{n-t+1}}(v) - \widehat{\mathrm{deg}}_{S_{n-t+1}}(v,t) \right| \geq \epsilon \right] \\
     & \geq  1- \sum_{t=1}^{n-1}\sum_{v \in S_{n-t+1}} \Pr \left[  \left| \mathrm{deg}^*_{S_{n-t+1}}(v) - \widehat{\mathrm{deg}}_{S_{n-t+1}}(v,t) \right| \geq \epsilon \right] \\
     & \geq 1  -   \sum_{t=1}^{n-1}\sum_{v \in S_{n-t+1}} \frac{4 \mathrm{deg}_{\max}2^n n R^2 }{\epsilon^2}\exp \left( - \frac{(T-\sum_{i=1}^{n+1}i) \epsilon^2 }{4 n^2 \mathrm{deg}_{\max}R^2  \tilde{ \log }(n-1) } \right) \\
     & = 1  -  \frac{4 \mathrm{deg}_{\max}2^n n R^2 }{\epsilon^2} \sum_{t=1}^{n-1}|S_{n-t+1}| \exp \left(- \frac{(T-\sum_{i=1}^{n+1}i) \epsilon^2 }{4 n^2 \mathrm{deg}_{\max}R^2  \tilde{ \log }(n-1) } \right) \\
     & = 1  -  \frac{4 \mathrm{deg}_{\max}2^n n R^2 }{\epsilon^2} \sum_{t=1}^{n-1}(n-t+1) \exp \left(- \frac{(T-\sum_{i=1}^{n+1}i) \epsilon^2 }{4 n^2 \mathrm{deg}_{\max}R^2  \tilde{ \log }(n-1) } \right) \\
     & = 1- \frac{2 \mathrm{deg}_{\max}2^n R^2}{\epsilon^2} n^2(n+1) \exp \left(- \frac{(T-\sum_{i=1}^{n+1}i) \epsilon^2 }{4 n^2 \mathrm{deg}_{\max}R^2   \tilde{ \log }(n-1) } \right) \\
    & \geq 1- \frac{2 \mathrm{deg}_{\max}2^n R^2}{\epsilon^2}(n+1)^3 \exp \left( - \frac{(T-\sum_{i=1}^{n+1}i) \epsilon^2 }{4 n^2 \mathrm{deg}_{\max}R^2  \tilde{ \log }(n-1) } \right).
\end{alignat*}

\end{proof}

\section{Proof of Theorem~\ref{thm:fixbudget}}\label{apx:proof_theorem2}

\begin{proof}
First, we verify that the algorithms requires at most $T$ queries.
In each phase $t$, the number of samples Algorithm~\ref{alg:sampling} requires is at most $\tilde{T_t}+ |S_{n-t+1}|$, since we have that 
\begin{align*}
\sum_{v \in S_{n-t+1}} \sum_{i=1}^t \tau_i 
=\sum_{v \in S_{n-t+1}} \sum_{i=1}^t T'_i - T'_{i-1} 
=\sum_{v \in S_{n-t+1}} T'_t 
\leq \sum_{v \in S_{n-t+1}}\left( \frac{\tilde{T}_t}{|S_{n-t+1}|}+1 \right)
\leq \tilde{T}_t+ |S_{n-t+1}|.
\end{align*}
Therefore,
the total number of queries used by the algorithm is bounded by
\begin{alignat*}{4}
\sum_{t=1}^{n-1}\left( \tilde{T}_t+ |S_{n-t+1}| \right)
& \leq  \sum_{t=1}^{n-1}\tilde{T_t}+\sum_{t=1}^{n-1}(n-t+1)   \\
& \leq \sum_{t=1}^{n-1} \left(  \frac{T-\sum_{i=1}^{n+1}i}{(n-t) \tilde{\log}(n-1) } +1 \right)+  \sum_{t=1}^{n-1}(n-t+1) \\
& \leq \sum_{t=1}^{n-1}  \frac{T-\sum_{i=1}^{n+1}i}{(n-t) \tilde{\log}(n-1) }+  \sum_{i=1}^{n+1}i \\
& = \frac{(T-\sum_{i=1}^{n+1}i)}{\tilde{\log}(n-1)} \tilde{\log}(n-1)  +\sum_{i=1}^{n+1}i \\
& = T- \sum_{i=1}^{n+1}i+\sum_{i=1}^{n+1}i=T.
\end{alignat*}

Lemma~\ref{lemma:fix_budget_lemma} implies that
the random event $\cE' := \bigcap_{t=1}^{n-1} \cE'_t$ occurs with probability at least $  1- \frac{2 \mathrm{deg}_{\max}2^n R^2}{\epsilon^2}(n+1)^3 \exp \left( - \frac{(T-\sum_{i=1}^{n+1}i) \epsilon^2 }{4 n^2 \mathrm{deg}_{\max}R^2  \tilde{ \log }(n-1) } \right)$.
We shall assume the event $\cE'$ occurs in the rest of the proof,
because we only need to show that the algorithm outputs a solution $S_{\tt OUT}$ that guarantees $f_w(S_{\tt OUT}) \geq \frac{f_w(S^*)}{2} - \epsilon$ under $\cE'$.

Let $S^*\subseteq V$ be an optimal solution in terms of the expected weight $w$.
Choose an arbitrary vertex $v\in S^*$. 
From the optimality of $S^*\subseteq V$, it holds that
\begin{align*}
f_w(S^*)=\frac{w(S^*)}{|S^*|}\geq \frac{w(S^*\setminus \{v\})}{|S^*|-1}=f_w(S^*\setminus \{v\}). 
\end{align*}
By using the fact that $w(S^*\setminus \{v\}) = w(S^*) - \mathrm{deg}^*_{S^*}(v)$, the above inequality can be transformed into 
\begin{align}\label{ineq:optimality}
\mathrm{deg}^*_{S^*}(v)\geq f_w(S^*). 
\end{align}

Let $S_{\tau} \subseteq V$ be the last subset over the phases that satisfies $S_{\tau} \supseteq S^*$ and let $\tau \in [1, \ldots, n]$ be its phase.
Let $\tau_{\tt OUT}$ be the phase $t$ such that $S_{n-t+1}=S_{\tt OUT}$.
Then we have 
\begin{align*}
f_w(S_{\tt OUT})
&=\frac{\frac{1}{2}\sum_{v\in S_{\tt OUT}}\mathrm{deg}^*_{S_{\tt OUT}}(v)}{|S_{\tt OUT}|}\notag\\
&\geq \frac{\frac{1}{2}\sum_{v\in S_{\tt OUT}} \left( \widehat{ \mathrm{deg}}_{S_{\tt OUT}}(v,\tau_{\tt OUT}) -\epsilon \right)}{|S_{\tt OUT}|} \notag\\
&\geq \frac{\frac{1}{2}\sum_{v\in S_\tau}
\widehat{ \mathrm{deg}}_{S_{\tau}}(v, \tau)}{|S_{\tau}|}-\frac{\epsilon}{2}.
  \end{align*}
where
the first inequality follows from event $\cE'$,
the second inequality follows from the greedy choice of $S_{\tt OUT}$.
Recall that  the algorithm removes the vertex that satisfies $v_\tau \in \argmin_{v \in S_{\tau}} \widehat{ \mathrm{deg}}_{S_{\tau}}(v, \tau)$ in the phase $\tau$.
Therefore, from the definition of $S_\tau$, it is clear that $v_\tau \in S^*$.
Using this property, we further have that
\begin{align*}
\frac{\frac{1}{2}\sum_{v\in S_{\tau}}
\widehat{ \mathrm{deg}}_{S_{\tau}}(v, \tau)}{|S_{\tau}|}-\frac{\epsilon}{2}
& \geq \frac{\frac{1}{2}\sum_{v \in S_{\tau}}
\widehat{ \mathrm{deg}}_{S_{\tau}}(v_\tau, \tau)}{|S_{\tau}|}-\frac{\epsilon}{2} \\
&=\frac{1}{2}
\widehat{ \mathrm{deg}}_{S_{\tau}}(v_\tau, \tau)-\frac{\epsilon}{2} \\
& \geq \frac{1}{2}
 \mathrm{deg}^*_{S_{\tau}}(v_\tau)-\epsilon \\
 & \geq \frac{1}{2}
 \mathrm{deg}^*_{S^*}(v_\tau)-\epsilon \\
  & \geq \frac{1}{2}
 f_w(S^*)-\epsilon,
\end{align*}
where the second inequality follows from event $\cE'$, and third inequality follows  from the fact $S_{\tau}\supseteq S^*$, 
and the last inequality follows from the fact $v_{\tau} \in S^*$ and inequality (\ref{ineq:optimality}). 
Therefore, we obtain $f_w(S_{\tt OUT}) \geq \frac{1}{2}
 f_w(S^*)-\epsilon$. That concludes the proof.
\end{proof}


\begin{figure*}[t!]
\begin{center}
 \subfigure{
   \includegraphics[width=0.27\textwidth]{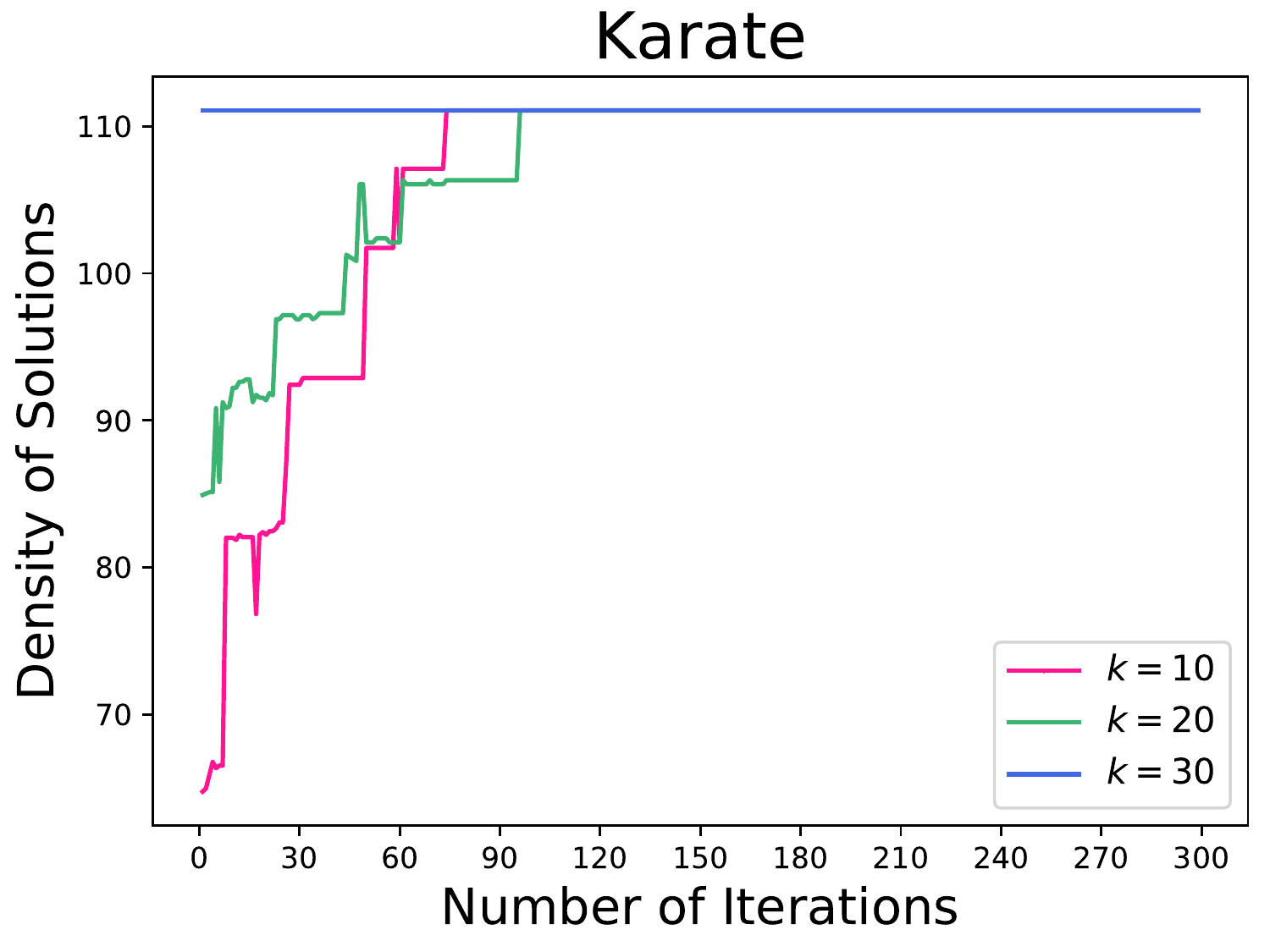}
  }
 \subfigure{
   \includegraphics[width=0.27\textwidth]{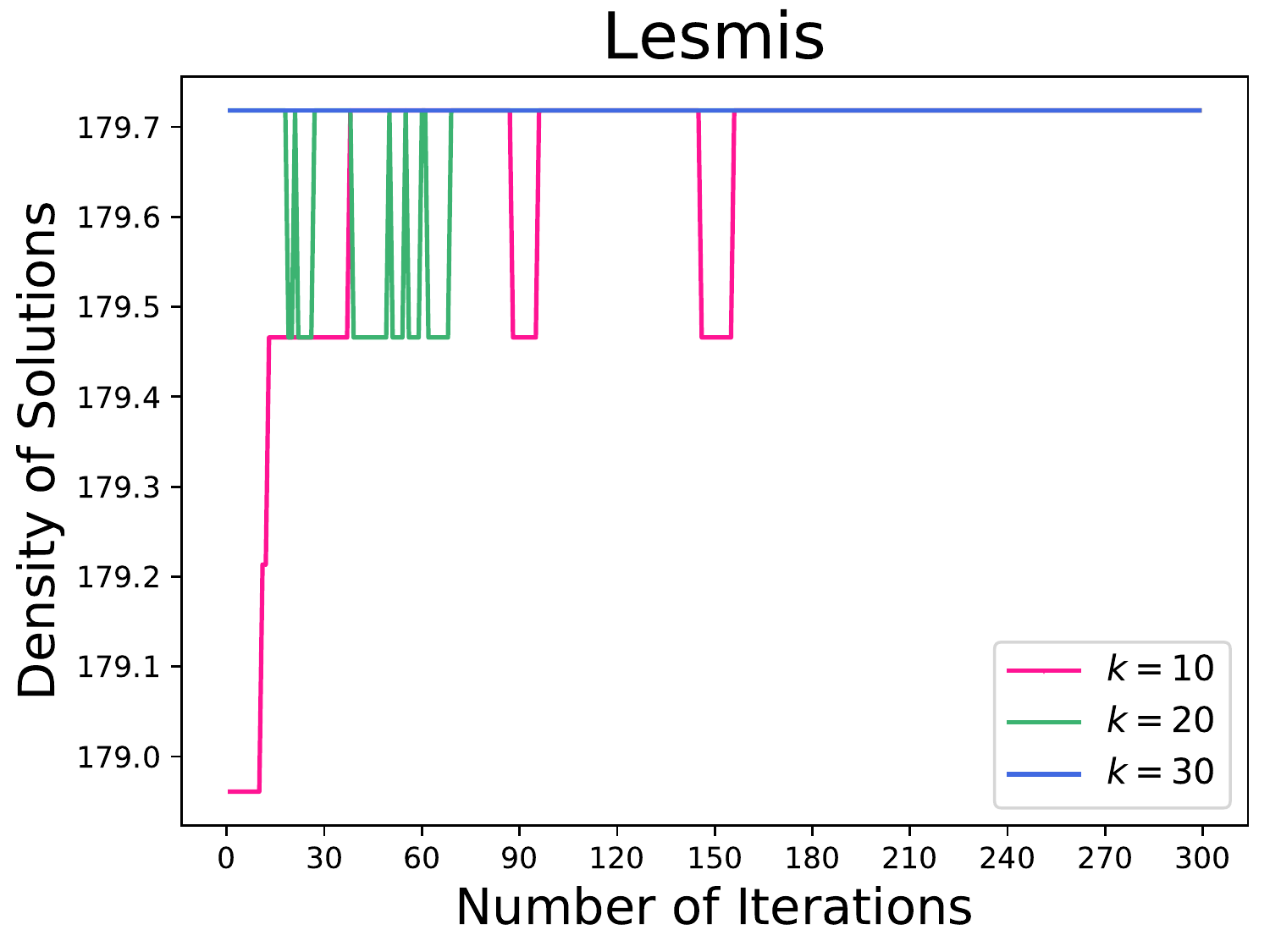}
  }
 \subfigure{
   \includegraphics[width=0.27\textwidth]{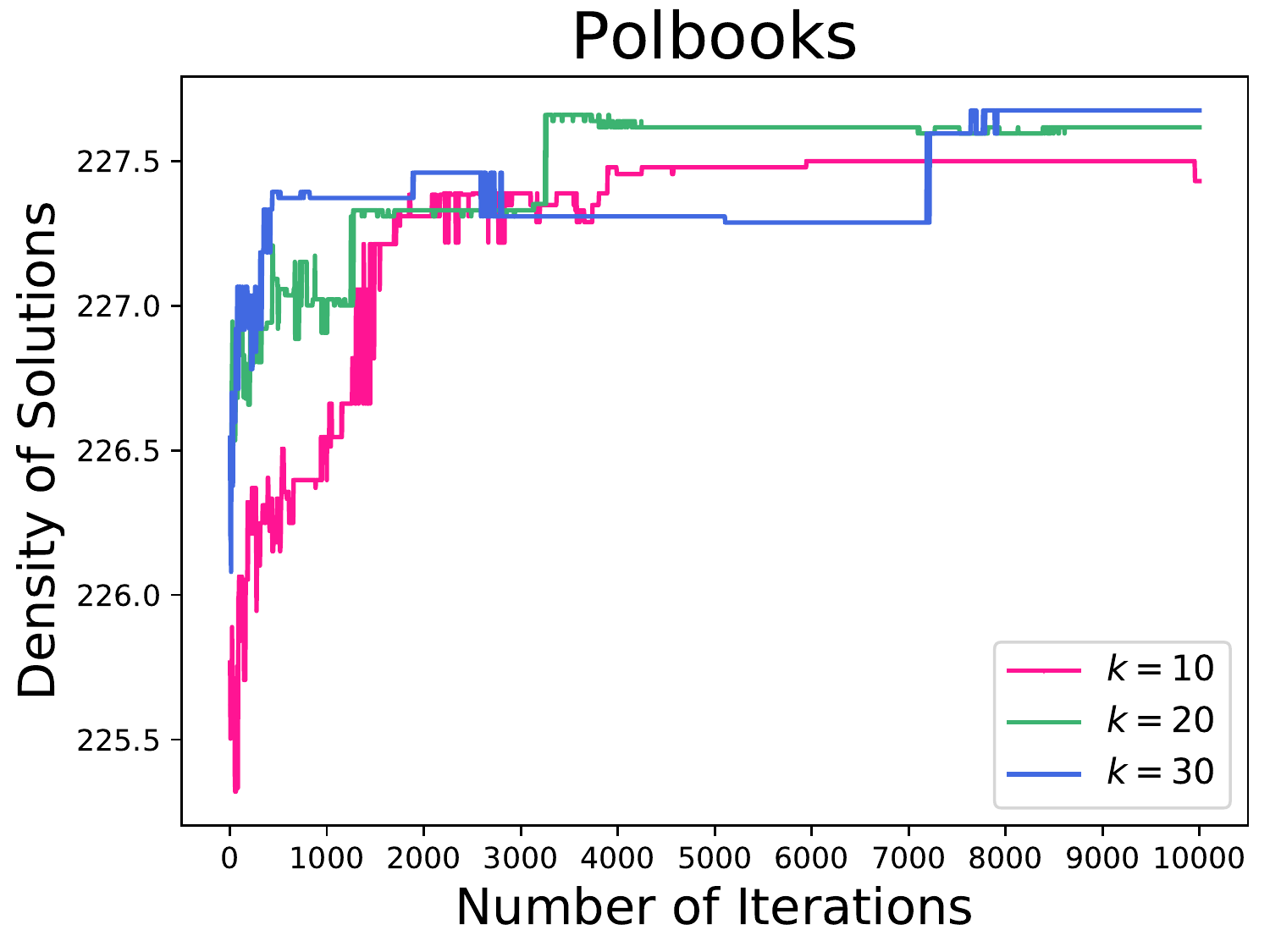}
  }\\
 \subfigure{
   \includegraphics[width=0.27\textwidth]{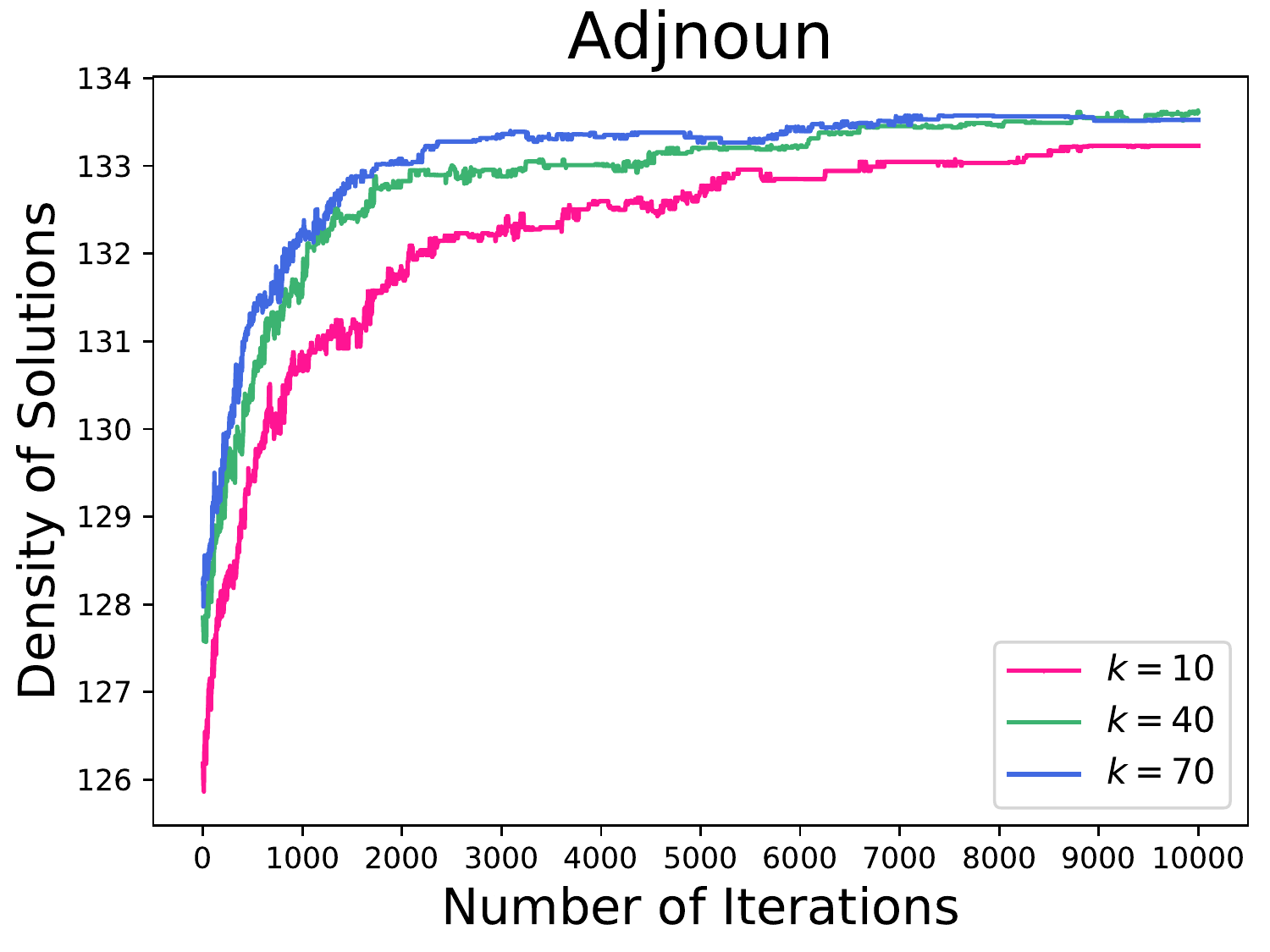}
  }
 \subfigure{
   \includegraphics[width=0.27\textwidth]{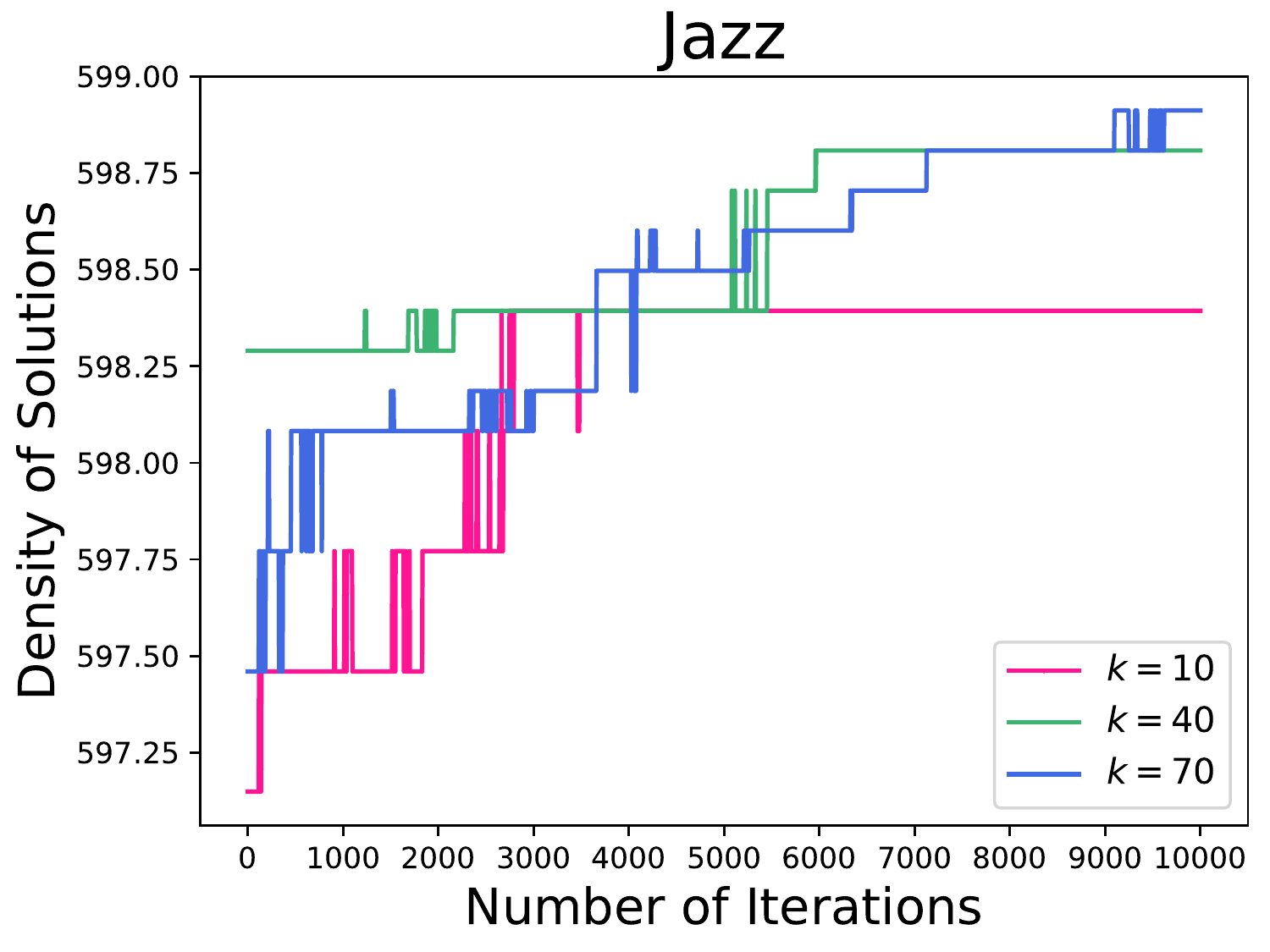}
  }
 \subfigure{
   \includegraphics[width=0.27\textwidth]{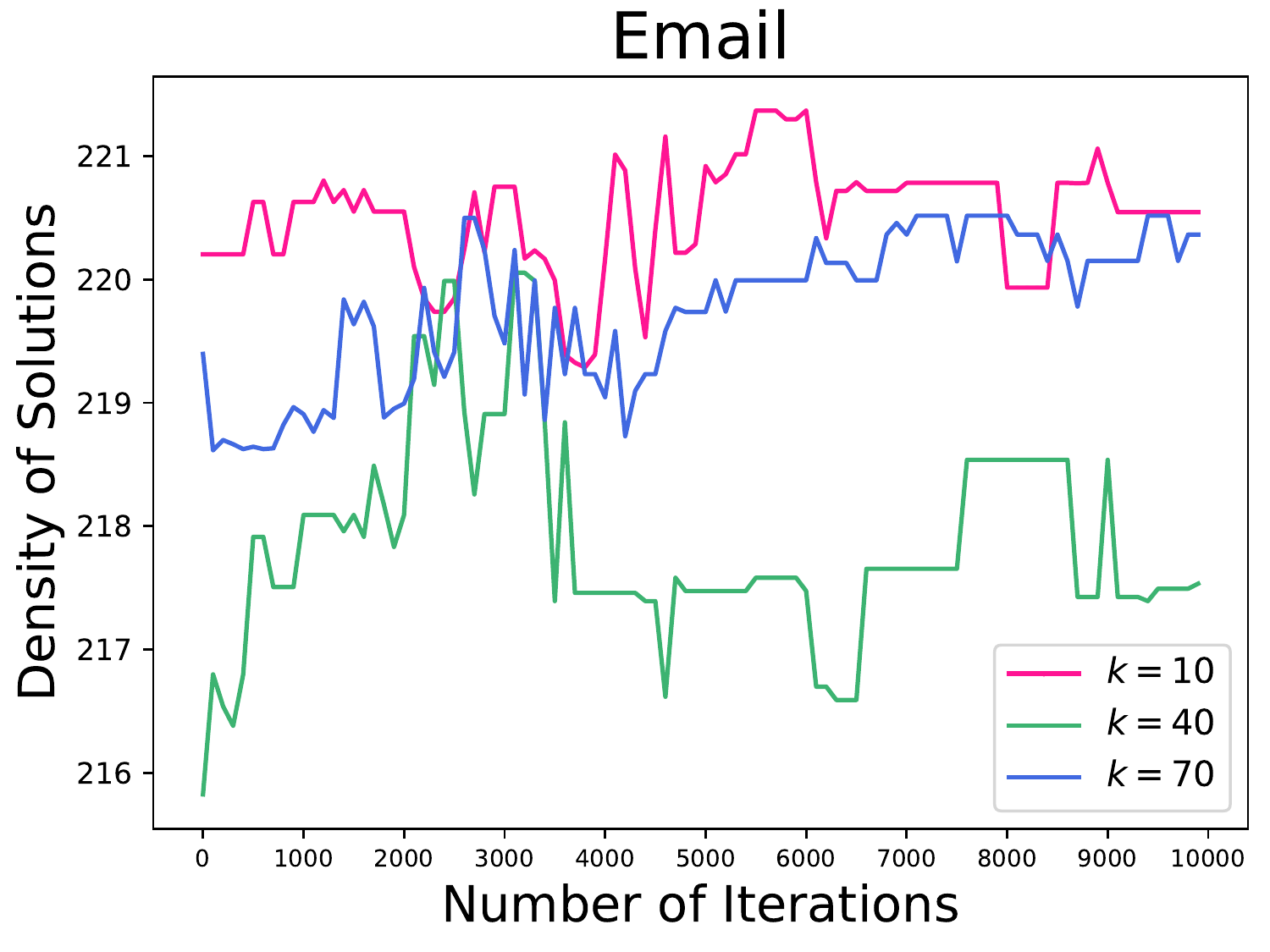}
  }
  \caption{Results for the behavior of our proposed algorithm with respect to the number of iterations. Each point is an average over 10 runs of the algorithm.}\label{fig:solution}
  \end{center}
\end{figure*}

\begin{figure*}[t!]
\begin{center}
 \subfigure{
   \includegraphics[width=0.27\textwidth]{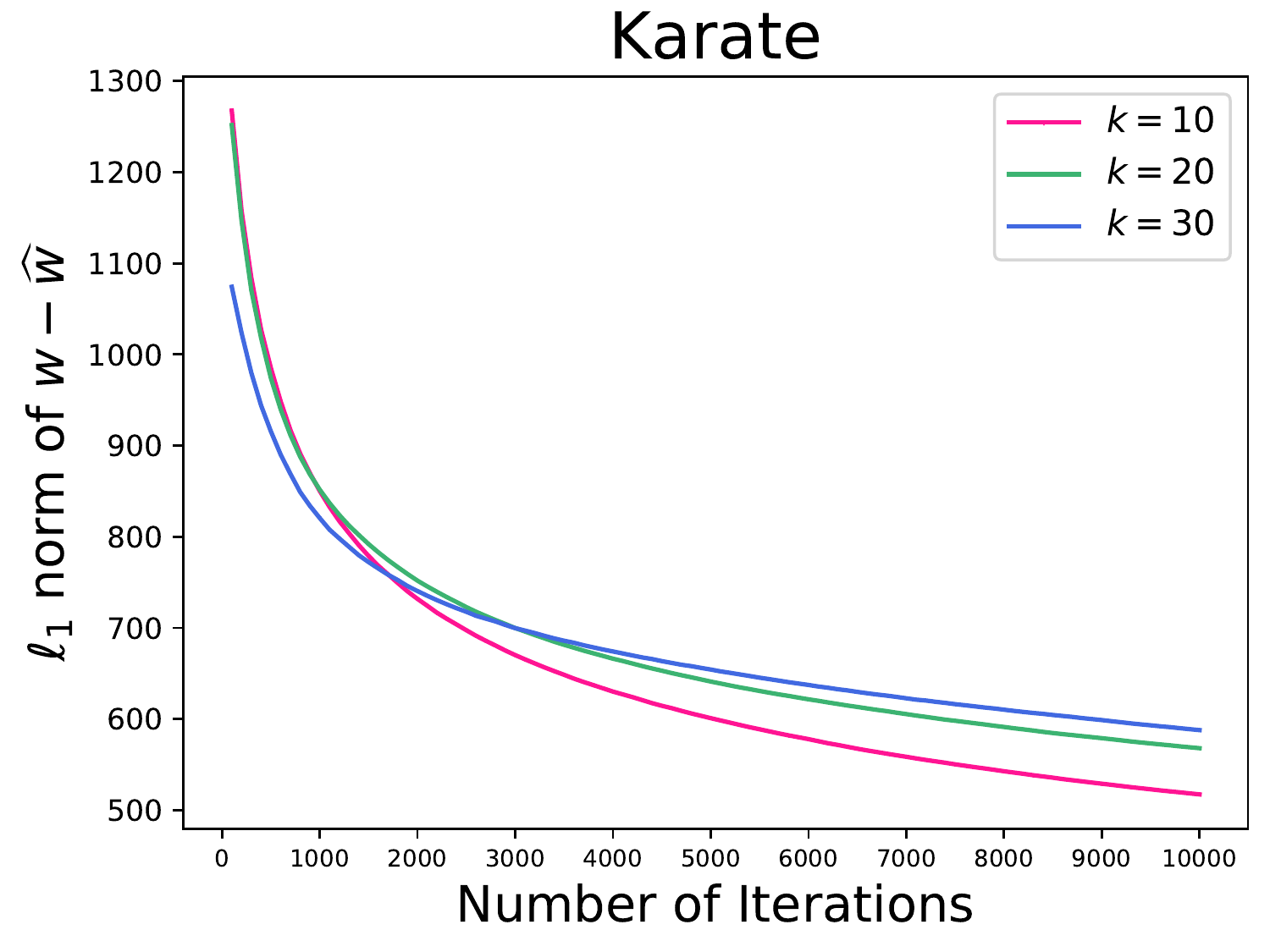}
  }
 \subfigure{
   \includegraphics[width=0.27\textwidth]{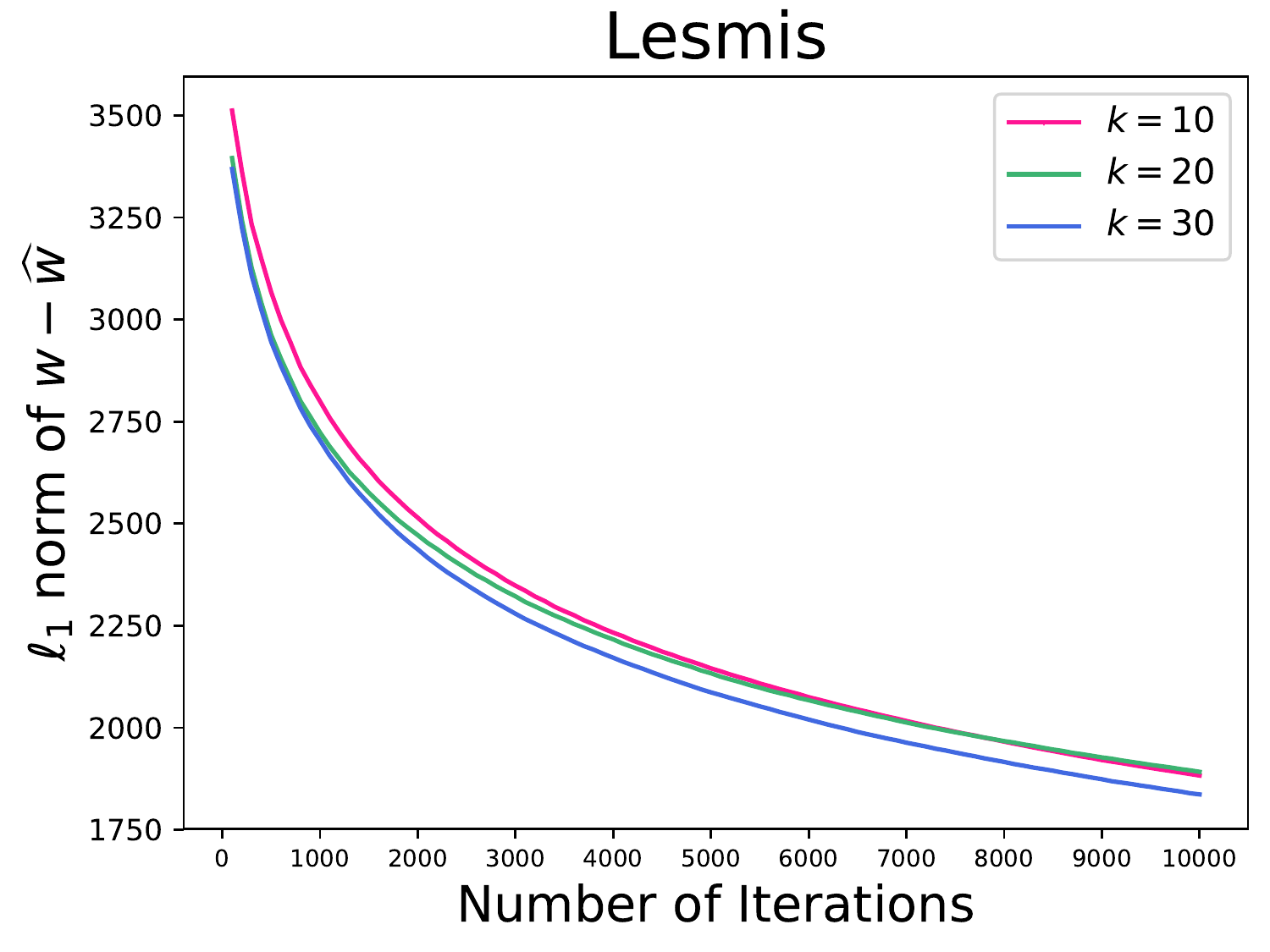}
  }
 \subfigure{
   \includegraphics[width=0.27\textwidth]{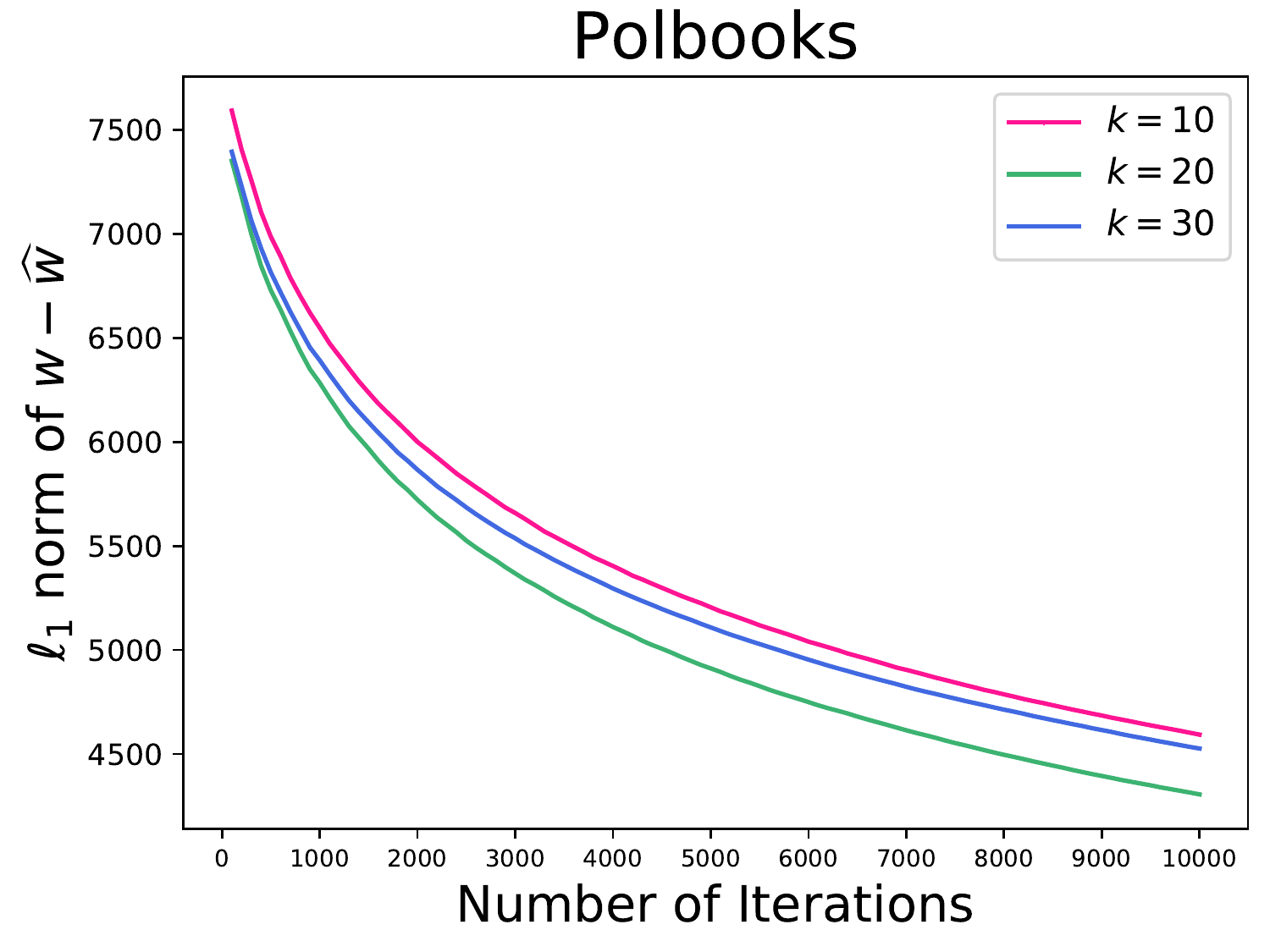}
  }\\
 \subfigure{
   \includegraphics[width=0.27\textwidth]{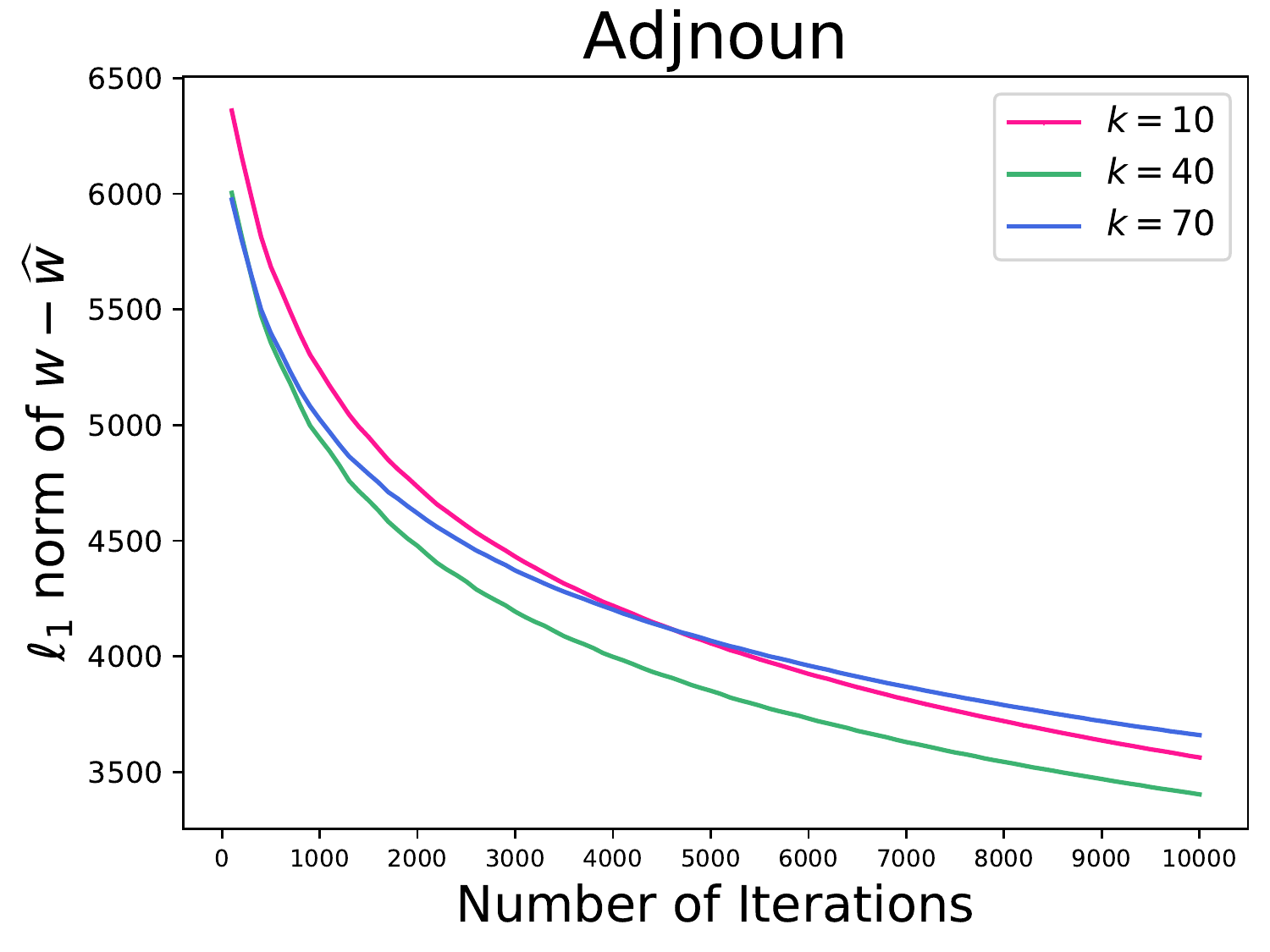}
  }
 \subfigure{
   \includegraphics[width=0.27\textwidth]{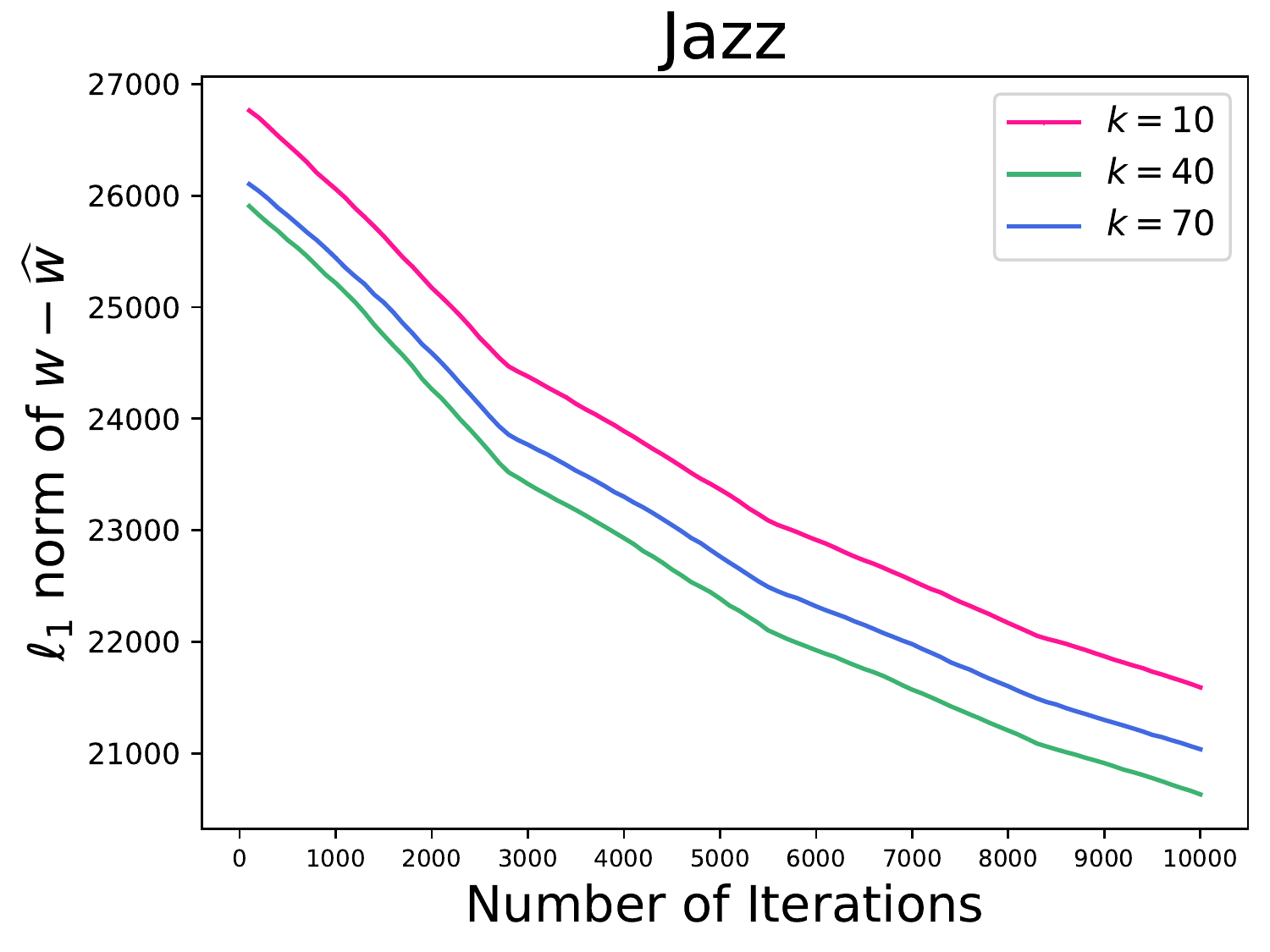}
  }2
 \subfigure{
   \includegraphics[width=0.27\textwidth]{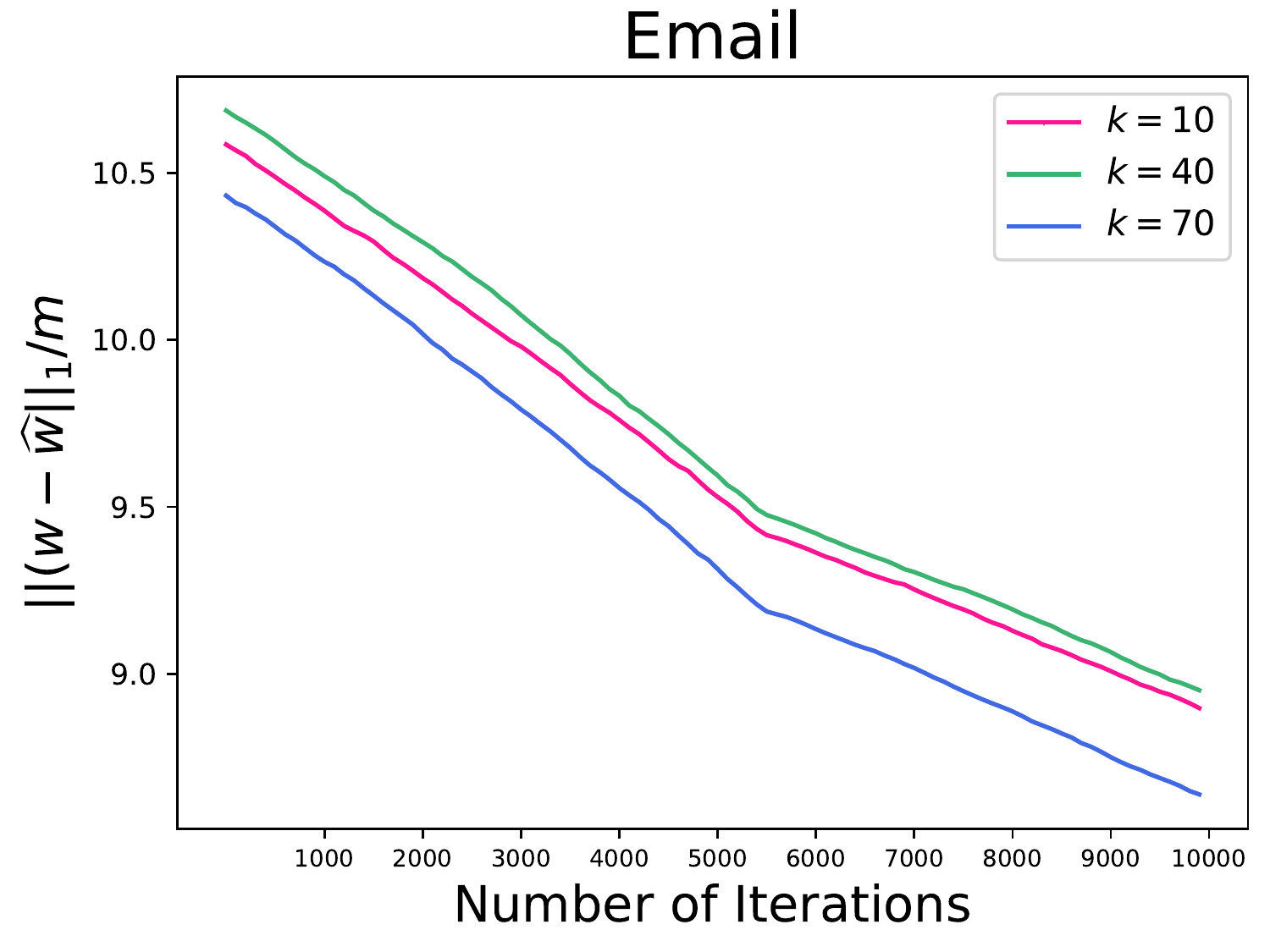}
  }
  \caption{Results for the estimation of the expected weight $w$. Each point is an average over 10 runs of the algorithm.}\label{fig:norm}
  \end{center}
\end{figure*}

\section{Details of experiments for \textsf{DS-Lin}}~\label{apx:exp}
\subsection{Behavior of \textsf{DS-Lin}}

We first analyze the behavior of our proposed algorithm with respect to the number of iterations. 
In the previous section, we confirmed that the solution obtained after 10,000 iterations is almost densest in terms of unknown $w$. 
A natural question here is how the density of solutions approaches to such a sufficiently large value. 
In other words, does our algorithm is sensitive to the choice of the number of iterations? 
In this section, we answer these questions by conducting the following experiments. 
We terminate the while-loop of our algorithm once the number of iterations exceeds $0, 100, 200, \dots$, 10,000, 
and follow the density values of solutions in terms of $w$. 
For each instance, we again run our algorithm for ten times, and report the average value. 

The results are shown in Figure~\ref{fig:solution}. 
As can be seen, as the number of iterations increases, the density value converges to the sufficiently large value (close to the optimum). 
Although the density value sometimes drops down, the decrease is quite small. 

\subsection{Estimation of the expected weight}
We next explain the reason why our proposed algorithm \textsf{DS-Lin} performs fairly well. 
To this end, we focus on the quality of the estimated edge weight obtained by the algorithm. 
We measure the quality of the estimated edge weight $\widehat{w}_t$ by comparing with the expected weight $w$; 
specifically, we compute $\|w -\widehat{w}_t\|_1/m$. 
The experimental setup is exactly the same as that in the previous section. 

The results are depicted in Figure~\ref{fig:norm}. 
As can be seen, as the number of iterations increases, $\widehat{w}_t$ converges to the true edge weight $w$. 
It is very likely that the high performance of our algorithm is derived from the high-quality estimation of the expected edge weight $w$.

\begin{algorithm}[t]
\caption{Robust optimization with oracle intervals (\textsf{R-Oracle})}\label{alg:roracle}
	\SetKwInOut{Input}{Input}
	\SetKwInOut{Output}{Output}
	\Input{ Graph $G=(V,E)$, oracle intervals $W=\times_{e \in E}[l_e, r_e]$ where $l_e=\min\{w_e-1,0 \}$ and $r_e=w_e+1$, sampling oracle, $\gamma \in (0,1)$, and $\varepsilon>0$}
	\Output{($S_{\rm out})$ }
	
    \For{each $e \in E$}{

    \If{$l_e=r_e$ }{
    $l^{\rm out}_e \leftarrow l_e, r^{\rm out}_e \leftarrow r_e$;
    }
    \Else{
    $S^*_{w^{-}} \leftarrow $ Output of Charikar's LP-based exact algorithm for $G(V,E, w^{-})$;
    
   $t_e \leftarrow \left\lceil  \frac{m(r_e-l_e)^2 \ln{\frac{2m}{\gamma}} }{ \varepsilon^2 f_{w^{-}}(S^*_{w^{-}})^2   } \right\rceil $;

    Sample $e$ for $t_e$ times;
    
    $\hat{p}_e \leftarrow \hat{X}_e(t_e)$;
    
    $\delta \leftarrow \frac{\varepsilon f_{w^{-}}(S^*_{w^{-}})}{\sqrt{2m}} $;
    
    $l_e^{\rm out} \leftarrow \max \{ l_e, \hat{p}_e-\delta \}$ and $ r^{\rm out}_e \leftarrow \min \{ r_e, \hat{p}_e +\delta \}$;
    }
    }

    $W_{\rm out} \leftarrow \times_{e \in E}[l^{\rm out}_e, r^{\rm out}_e]$;

    $S_{\rm out} \leftarrow$ Output of Charikar's LP-based exact algorithm for $G(V,E, l^{-}_{\rm out})$;
        
    \Return{
    $S_{\rm out}$;
    }
\end{algorithm}

\section{ Details of experiments for \textsf{DS-SR}}\label{apx:dssr}

\begin{figure*}[t]
\begin{center}
\subfigure{
  \includegraphics[width=0.31\textwidth]{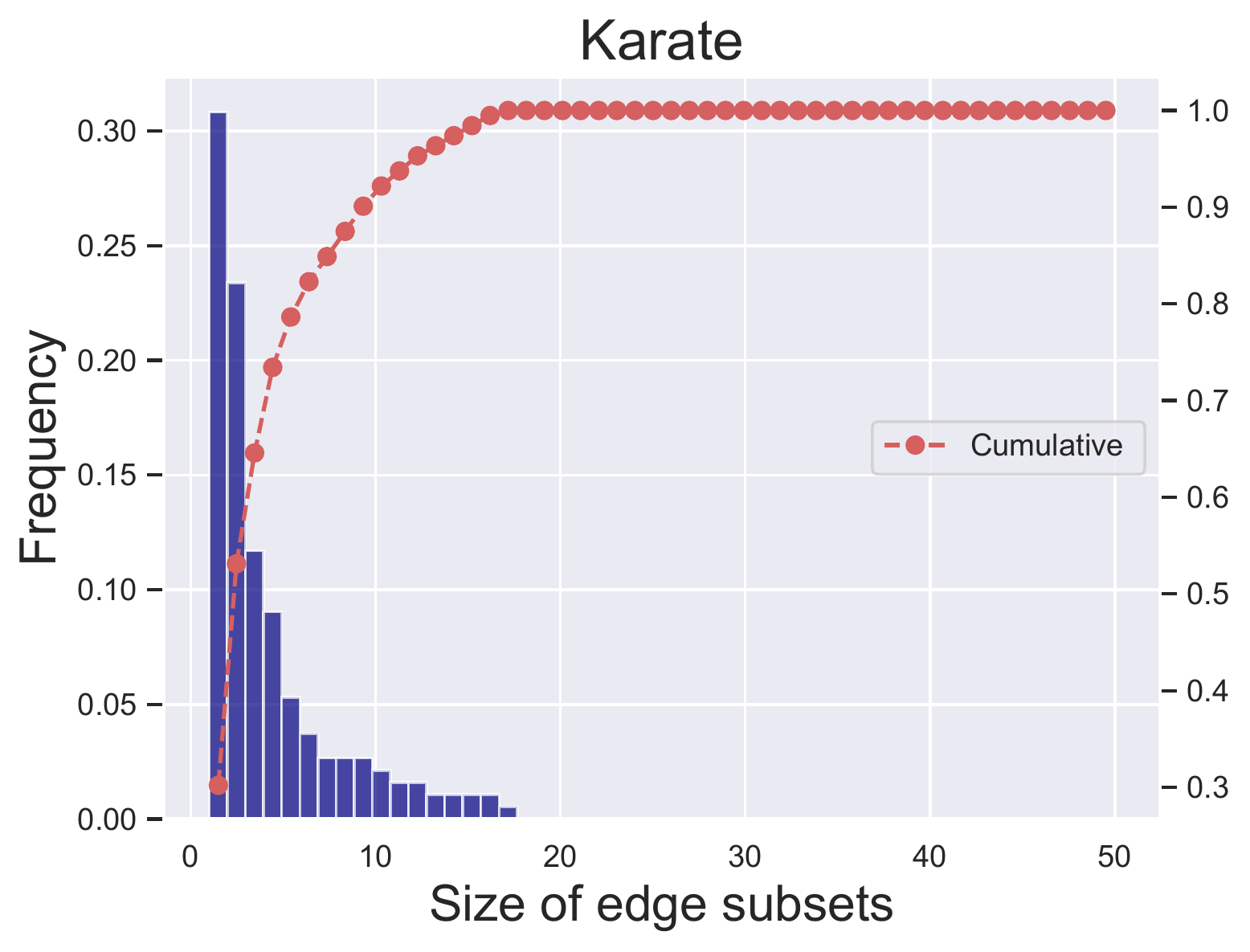}
  }
 \subfigure{
  \includegraphics[width=0.31\textwidth]{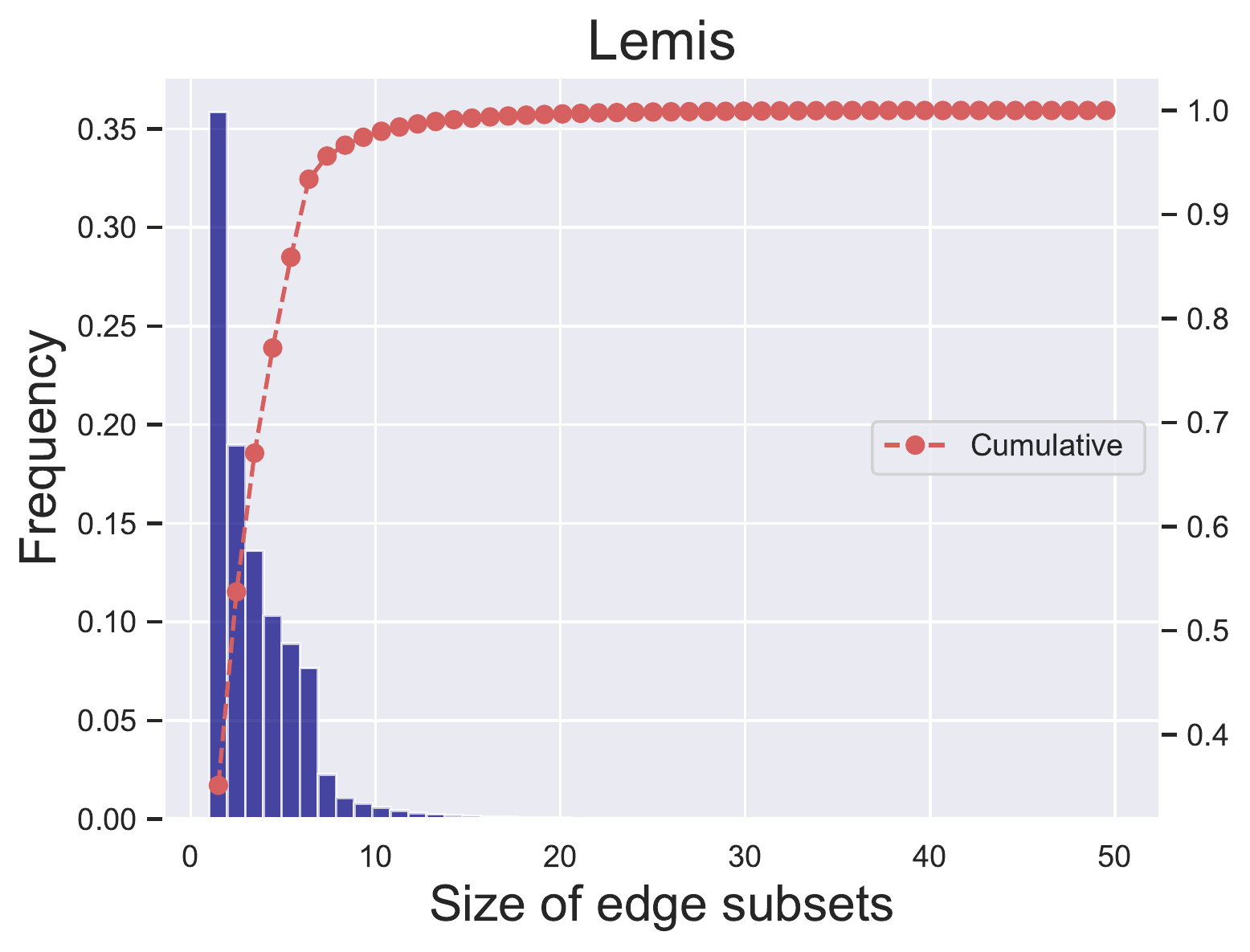}
  }\\
 \subfigure{
  \includegraphics[width=0.31\textwidth]{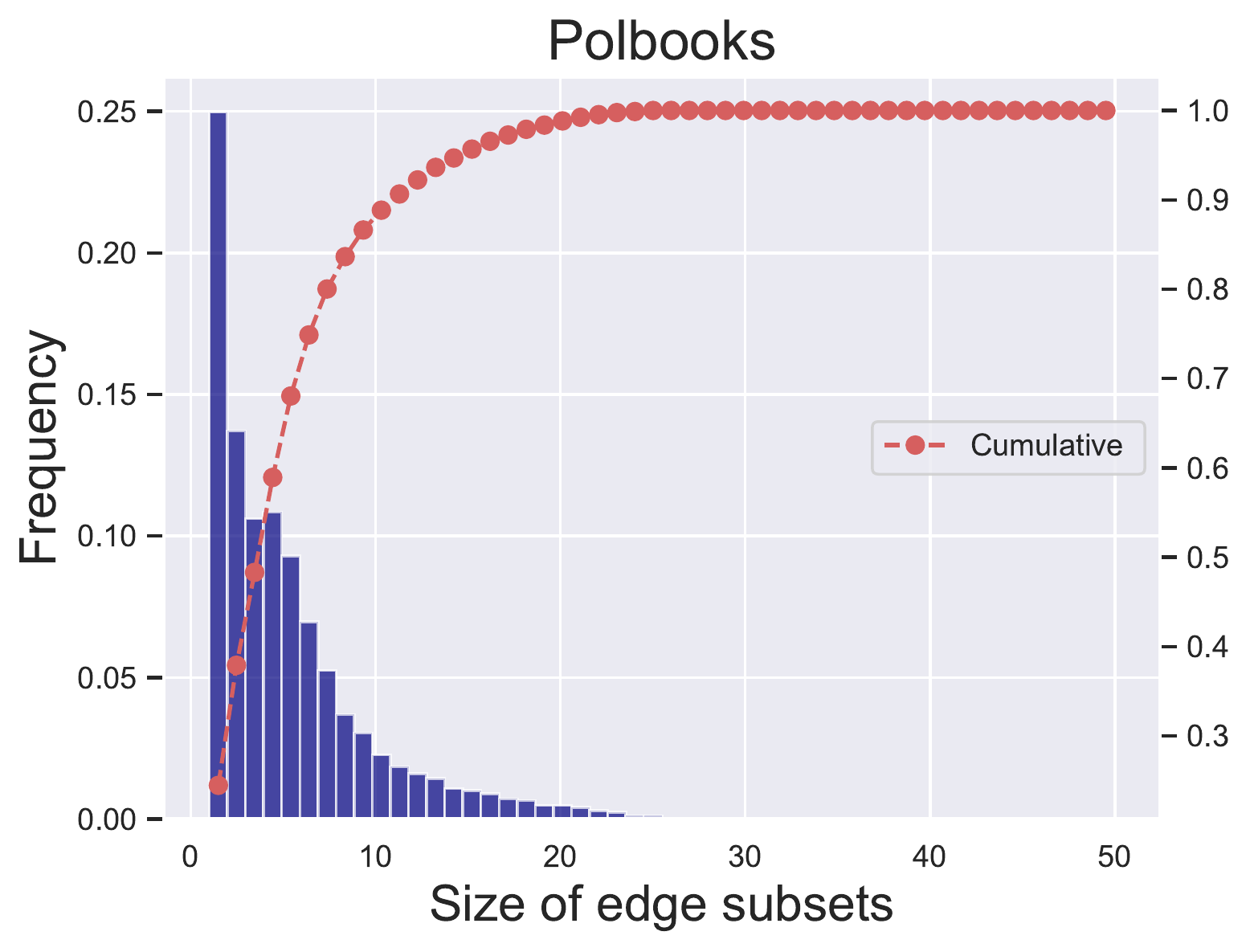}
  }
 \subfigure{
  \includegraphics[width=0.31\textwidth]{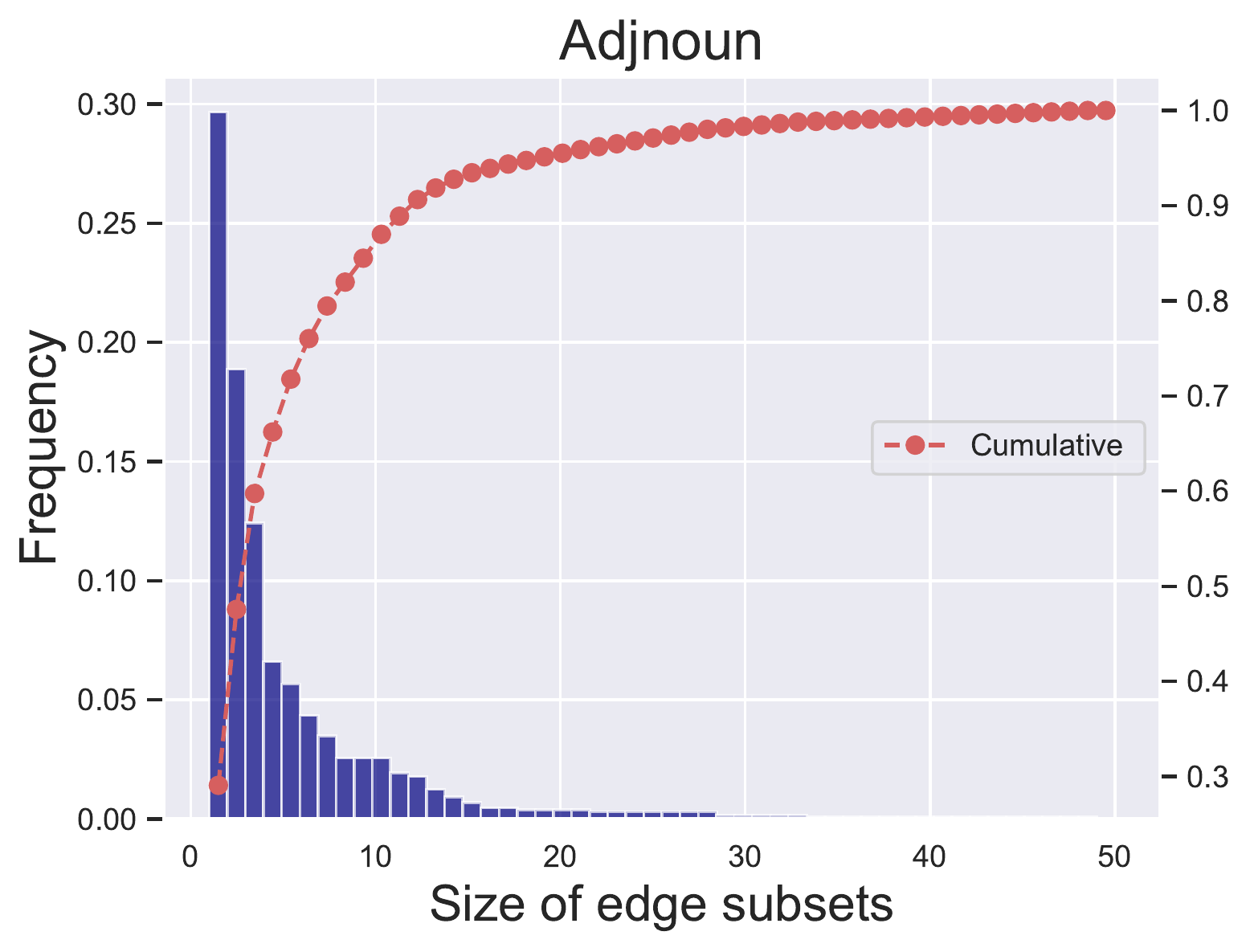}
  }\\
 \subfigure{
  \includegraphics[width=0.31\textwidth]{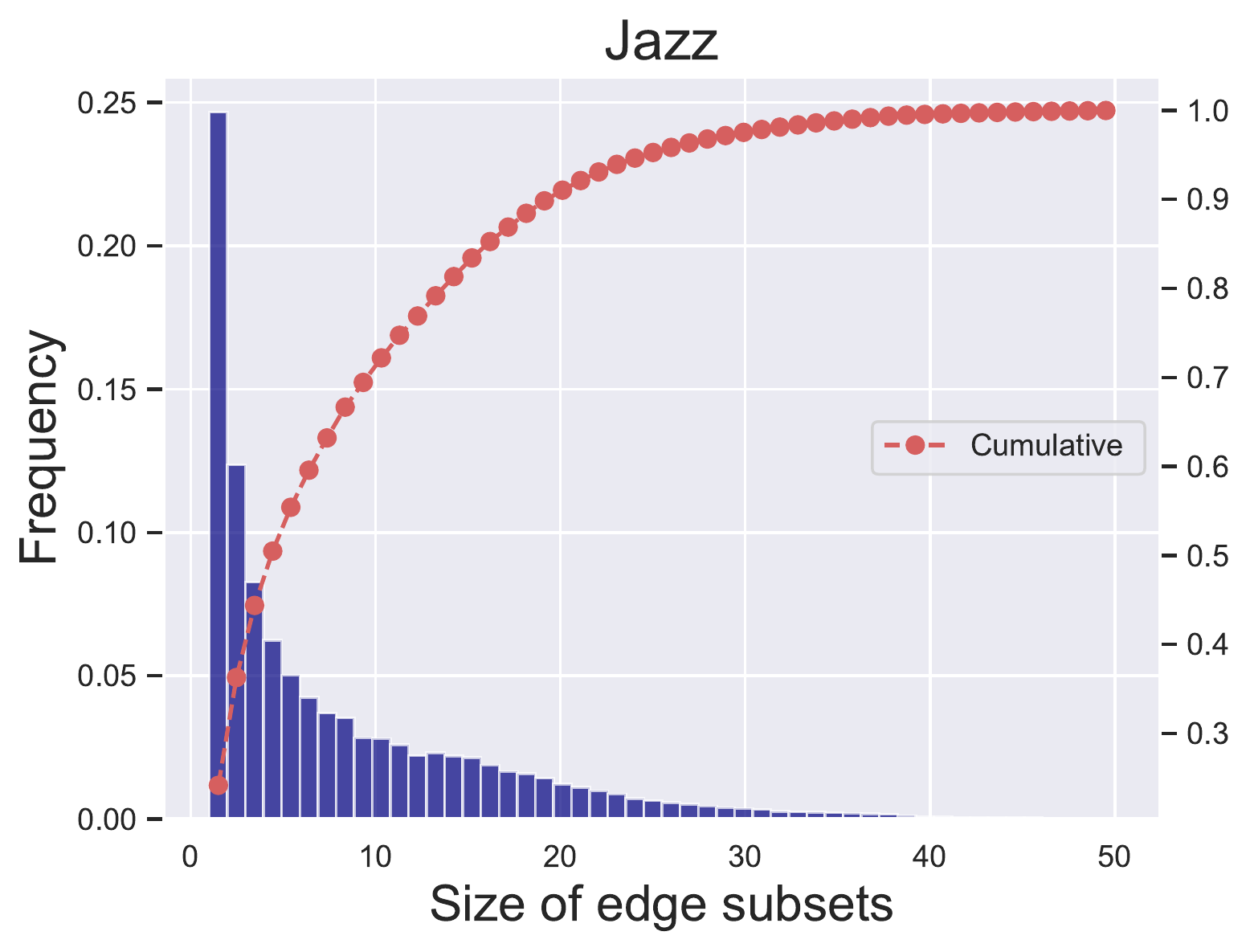}
  }
 \subfigure{
  \includegraphics[width=0.31\textwidth]{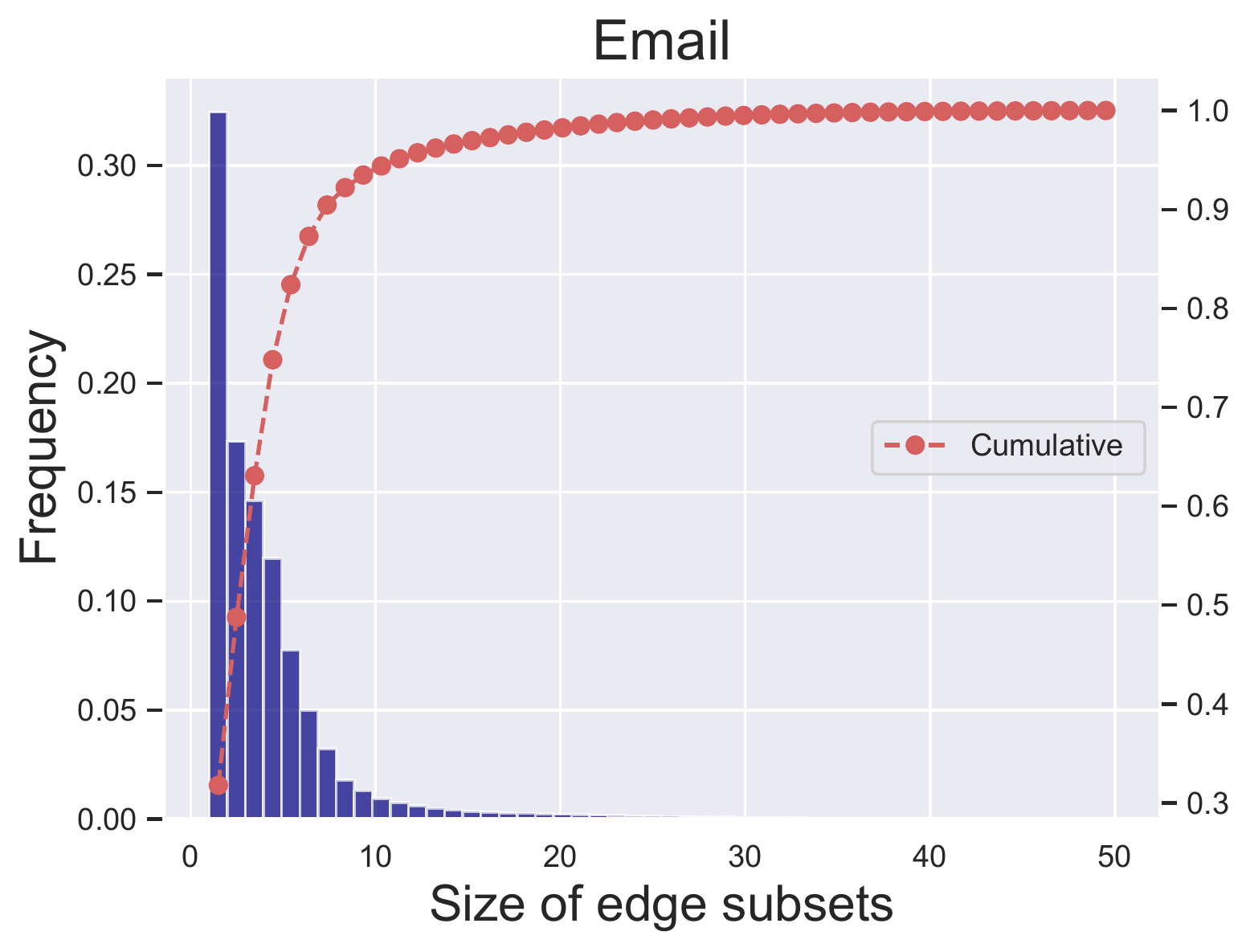}
  }\\
  \subfigure{
  \includegraphics[width=0.31\textwidth]{fig/sample/polblogs.pdf}
  }
 \subfigure{
  \includegraphics[width=0.31\textwidth]{fig/sample/email-Eu-core.pdf}
  }\\
 \subfigure{
  \includegraphics[width=0.31\textwidth]{fig/sample/facebook_combined.pdf}
  }
 \subfigure{
  \includegraphics[width=0.31\textwidth]{fig/sample/Wiki-Vote.pdf}
  }
  
  \caption{Fraction of the size of queried edge subsets in \textsf{DS-SR} (cumulative) over 100 runs.}\label{fig:cumlative_all}
  \end{center}
\end{figure*}

\subsection{Description of \textsf{R-Oracle} }
We describe the entire procedure of \textsf{R-Oracle} in Algorithm~\ref{alg:roracle}.
This algorithm employs the robust optimization model proposed by~\citet{Miyauchi_Takeda_18}.
Their robust optimization model takes intervals of edge weights as its input.
We generate the intervals $W= \times_{e \in E}[l_e, r_e]$ based on unknown edge weight $w$, i.e., $l_e=\min\{w_e-1,0 \}$ and $r_e=w_e+1$. 
Algorithm~\ref{alg:roracle} first obtains the optimal solution $S^*_{w^{-}}$ in terms of extreme edge weight $w^{-}=(l_e)_{e \in E}$ and computes the value of $f_{w^{-}}(S^*_{w^{-}})$.
Then, for each single edge $e \in E$,
the algorithm calls the sampling oracle for an appropriate number of times and obtains the empirical mean. Using the empirical means, the algorithm constructs intervals $W_{\rm out} \leftarrow \times_{e \in E}[l^{\rm out}_e, r^{\rm out}_e]$,
and computes a densest subgraph $S_{\rm out}$ on $G$ with $w^{-}_{\rm out}=(l^{\rm out}_e)_{e \in E}$.

\subsection{ The number of samples for single edges in \textsf{DS-SR}}
We report experimental results on the size of queried edge subsets in \textsf{DS-SR} (cumulative) over 100 runs for all instances in Figure~\ref{fig:cumlative_all}.

\end{onecolumn}

\end{document}